\newtheorem{theo}{Theorem}[section]
\newtheorem{lemm}[theo]{Lemma}%
{\tiny }%
\newcommand{\argmin}{\mathop{\rm argmin}\limits}
\newcommand{\kf}[1]{\textcolor{black}{#1}}
\newcommand{\st}[1]{\textcolor{black}{#1}}
\newcommand{\kff}[1]{\textcolor{black}{#1}}
\def\bZ{\mathbb Z}
\def\bN{{\mathbb N}}
\def\bZ{{\mathbb Z}}
\def\bR{{\mathbb R}}
\def\X{{\mathcal{X}}}
\def\Y{{\mathcal{Y}}}
\def\Z{{\mathcal{Z}}}
\def\H{{\mathcal{H}}}
\def\E{{\mathcal{E}}}
\def\P{{\mathcal{P}}}
\def\D{{\mathcal{D}}}
\def\R{{\mathcal{R}}}
\def\N{{\mathcal{N}}}
\def\I{{\mathcal{I}}}
\def\E{{\mathcal{E}}}
\def\a{{\alpha}}
\def\e{{\varepsilon}}
\title{\ Invariance Learning based on Label Hierarchy}%
\author{Shoji Toyota\thanks{The Graduate University for Advanced Studies (SOKENDAI), Japan.}, Kenji Fukumizu \thanks{The Institute of Statistical Mathematics, Japan.}}%
\date{}%
\begin{document}
    \maketitle
    
\begin{abstract}

Deep Neural Networks inherit spurious correlations embedded in training data and hence may fail to predict desired labels on unseen domains (or environments), which have different distributions from the domain used in training.  Invariance Learning (IL) has been developed recently to overcome this shortcoming; using training data in many domains, IL estimates such a predictor that is invariant to a change of domain.  However, the requirement of training data in multiple domains is a strong restriction of IL, \kf{since it often needs high annotation cost}.  We propose a novel IL framework to overcome this problem.  Assuming the availability of data from multiple domains for a {\em higher} level of classification task, \kf{for which the labeling cost is low,} we estimate an invariant predictor for the target classification task with training data in a {\em single} domain.  Additionally, we propose two cross-validation methods for selecting hyperparameters of invariance regularization to solve the issue of hyperparameter selection, \kf{which has not been handled properly in existing IL methods}.  The effectiveness of the proposed framework, including the cross-validation, is demonstrated empirically, and the correctness of the hyperparameter selection is proved under some conditions. \end{abstract} 

%\tableofcontents

%\newpage
%============================================================================================================================================================================================================================
\section{Introduction}\label{sec.Intro}
%============================================================================================================================================================================================================================

%Deep Neural Networks (DNNs) inherit spurious correlations embedded in training data, and hence DNNs may fail to predict desired labels on unseen domains (or environments), which have different distributions from the domain used in training.  For example, in a problem of classifying images of cows, DNNs 
%fail to classify images of cows which are taken on
Training data used in machine learning unintentionally contain unrelated factors to \kff{the objective of the task,}
%object we are trying to train, 
which are called {\em spurious correlations}.
Deep Neural Networks (DNNs) often inherit the spurious correlations embedded in the data in training domains and hence may fail to predict desired labels in  domains which have different distributions from the training domains, namely {\em unseen} domains. For example, in a problem of classifying images of cows, DNNs
%fail to classify images of cows which are taken on
\kf{tend to misclassify cows in sandy beaches}; most training pictures are taken in green pastures and DNNs inherit context information in training \cite{S. Beery,Shane}.  Systems trained with medical data collected in one hospital do not generalize well to other health centers; DNNs unintentionally %inherit
\kf{extract} environmental factors specific to a particular hospital in training \cite{Ehab,Christian,Will}.

%Conventional learning methods fail to predict a desired label  $Y^e \in \Y$ from an explanatory variable $X^e \in \X$ in all environments $e \in \E$.  In a problem of classifying images of cows, empirical risk minimization\cite{Vladimir} fails to classify  images of cows which is  taken on sandy beaches;  most pictures of cows used in training  are taken in green pastures \cite{S. Beery,Share}. Systems trained on medical data collected in one hospital do not generalize to other health centers; predictors obtained by conventional methods unintentionally absorb environment factors ($e.g.$, contexts) a particular hospital only has\cite{Daniel,Ehab,Christian,Will}.

Invariance Learning (IL) is an approach %which has been rapidly 
developed \kff{rapidly} to overcome 
%this shortcoming
the issue of spurious correlation or short-cut learning
\cite{Martin,Kartik,Rothenhausler,Christina,Jonas,koyama,David, Liu, Liu_2,Creager,Parascandolo}. Let $e \in \E$ be a domain (or environment) index, and $X^e \in \X$ and $Y^e \in \Y$ be an input object and its label in domain $e \in \E$, \kff{respectively}. %\kf{The distribution of $(X^e,Y^e)$ depends on a domain indexed by $e$.}  
Using training data in \kf{{\em multiple}} training domains $\E_{tr} \subset \E$, IL estimates %a predictor $f:\X \rightarrow \Y$ that is invariant to  change of domains, %and that 
%\kf{so that it can }estimate a label $Y^e \in \Y$ correctly on unseen domains $e \in \E - \E_{tr}$.
a predictor $f:\X \rightarrow \Y$
 that performs as well in unseen domains $e \in \E - \E_{tr}$ as in the training domain, that is, a predictor {\em invariant} to  change of domains. In IL, the training data must be annotated with labels of $\Y$ completely.%and that %In IL, the invariant predictor is obtained by estimating the two functions: the feature map $\Phi: \X \rightarrow \H$, which elicits invariant representation from $X^e \in \X$, and the predictor $w :\H \rightarrow \Y$ from the invariant representation $\Phi(X^e)$. The invariant feature map is estimated by finding $\Phi$ such that $P(Y^e | \Phi(X^e))$ does not depend on $e \in \E_{tr}$, assuming multiple training domains  
%are multiple 
%($i.e.$, the number of training domain $|\E_{tr}| \geq 2$).

%While IL needs training data from multiple domains, the requirement often narrows the scope of application; preparing training data in many domains often needs costly labeling. If labeling by $\Y$ has expensive annotation cost, we can not obtain training data from multiple domains, and as a result, we can not apply IL.

%KF
While the IL approach has attracted much attention as a solution to the \kf{spurious correlation}, 
%short-cut learning, 
requiring training data with exact labeling from multiple domains may hinder wide applications in practice; preparing training data in many domains are often expensive %by
\kff{with} data annotation. 
%we often only have access to data with  incomplete labels, which are called 
Even when we can draw \st{data from multiple domains, they are often only available in an incompletely labeled form, which  are called }
%\kf{Such an annotation problem has been long recognized in general learning settings, and many approaches have been pursued to mitigate the problem: to name a few, }
{\em pseudo label data } \cite{Pham,Lee,Zheng, Gu}, {\em partial label data } \cite{Cour,Yan, Xu} and {\em complementary label data} \cite{Katsura, Feng, Ishida}.

We propose a novel IL framework to reduce the \kf{need} of training data with exact labels from multiple domains. We estimate the invariance by the 
%incomplete 
\kff{coarser} 
labeled data, which need lower annotation cost.
%this problem. 
%KF
\kff{Specifically}, in addition to the target task of classification, we consider \kff{another classification task of a higher level} in the label hierarchy, that is, we suppose that the additional task has a coarser label set than the target task.  We consider the situation where the training data of the target task is given in only {\em one} domain, while the task of higher label hierarchy has data %sets 
from multiple domains.  
This significantly reduces the \kff{annotation cost} 
%collection, 
since the data with coarser labels are much easier to obtain.  
More formally, assuming the availability of additional data $(X^e,Z^e)$ from multiple domains $e\in\E_{ad} \subset \E$ for the higher level of classification task, we estimate an invariant predictor for the target task with training data of a {\em single} domain $\{ e^* \}$. Here, the higher level of classification task means that its label is given by $Z^e = g(Y^e)$ with some surjective function $g: \Y \rightarrow \Z$ \kf{to define the label hierarchy}. For example, consider the case where $\Y := \{bird_1,.....,bird_{10000},no~birds \}$, labels with 10,000 kinds of birds and no birds, and $\Z := \{there ~are~birds, no~birds\}$, $g(y)= no~birds$ if $y = no~birds$ and $g(y)= there~are~birds$ \kff{otherwise}. Then, the binary labels $\Z$ are much more easily annotated than the original $10001$ class labels $\Y$.  \kf{This may be done by humans with crowd-sourcing or a binary classifier of high accuracy.}
%, and therefore, we can assess data from $(X^e, g(Y^e))$ on multiple domains $\E^{ad}$ even if we can not from $(X^e, Y^e)$. 
%Unlike previous methods of IL, our method learns invariant feature map $\Phi$ by additional data from multiple domains $\E_{ad}$, and hence, enables us to estimate invariant predictor with training data on a single domain $\E_{tr} = \{ e^* \}$.

%To overcome the  shortcoming,  we introduce a novel learning framework-- we learn an invariance of  other task  $O:= \{X^e, Z^e \}_{e \in \E}$, which are more obtainable than $T$, and then use it as a pseudo-invariance of $T$.  As $O$, we address  the case where $label$ $Hierarchy$ holds between $Y^e$ and $Z^e$:  $Z^e$ is represented as $g(Y^e)$ for some surjective function $g$.  Since $Z^e: = g(Y^e)$ is a compressed label of $Y^e$ by $g$, $O$ is more obtainable than  $T$. For example, let us consider the case where $\Z := \{bird1,.....,bird10000,no~birds \}$, labels with 10,000 kinds of birds and no birds, and   $\Y := \{there ~are~birds, no~birds\}$. Then, defining $g: \Y \rightarrow \Z$ by $g(y)= no~birds$ if $y = no~birds$ and $g(y)= there~are~birds$ otherwise, we can represent $Z^e = g(Y^e)$.  Then, it is obvious that $Z^e$ is more accessible than $Y^e$. The framework  enables us to learn a pseudo-invariance of $T$ without using any samples of $T$, with  using only a sample of $O$. 

Another important issue in IL is hyperparameter selection.  Most IL methods involve some hyperparameters to balance the classification accuracy and the degree of invariance.  As \cite{David} and \cite{Ishaan} point out, in the literature of IL, the best performances of invariance have \kff{often} been achieved by selecting the hyperparameters using test data from unseen domains. Moreover, \cite{Ishaan} numerically demonstrated that, without using test data, simple methods of hyperparameter selection fail to find a preferable hyperparameter. This illustrates a strong need for establishing an appropriate %CV 
method \kff{of hyperparameter selection} for IL. %methods.  

\kf{
We propose two methods of cross-validation (CV) for hyperparameter selection in our new IL framework.  Since we assume training data of a single domain for the target task, it is impossible to estimate the deviation of the risks over the domains.  We make use of the additional data from multiple domains in the higher level and provide methods of CV, which is applicable in the current setting. 
}
Theoretical analysis reveals that our methods select hyperparameter correctly with some condition. %Their effectiveness is also demonstrated numerically.
 
The main contributions of this paper are as follows:
\vspace{-1mm}
\begin{itemize}
\item  We establish a \kf{novel} framework of invariant learning, which estimates an invariant predictor from a single domain data, assuming additional data from multiple domains for a higher level of classification task.
\item We propose two methods of cross-validation under the framework for selecting hyperparameters without accessing any samples from unseen target domains.
\item Experimental studies verify that the proposed framework extracts an invariant predictor more effectively than other existing methods. 
\item We mathematically prove that the proposed CV methods select the correct hyperparameter under some \kff{settings}.
%conditions.
\end{itemize}

\section{Invariance Learning based on Label Hierarchy}\label{sec.HIT-Learning}
%=======================================================================
\paragraph{Notations} Throughout this paper, the space of input features and finite class labels are denoted by $\X$ and $\Y$, respectively.  For given predictor $f: \X \rightarrow \Y$ and random variable $(X,Y)$ on $\X \times \Y $ with its probability $P_{X,Y}$, $\R^{(X,Y)} (f)$ denotes the risk of $f$ on $(X, Y)$; $i.e.$, $\R^{(X,Y)}(f) := \int l(f(x),y) dP_{X,Y}$, where $l:\Y \times \Y \rightarrow \bR$ is a loss function.  For $m \in \bN_{>0}$, $[m]$ denotes the set $\{1,...,m \}$. For a finite set $A$, $|A| \in \bN$ denotes the number of elements in $A$.

%%===============================================
%\subsection{Out-of-distribution generalization Problem Set-Up}\label{sec.setting}
\subsection{Review of Invariance Learning}\label{sec.setting}
%===============================================
%=======================================================================

%\paragraph{Notations} Let $\X$ and $\Y$ be spaces of input features and finite class labels, respectively. For given predictor $f: \X \rightarrow \Y$ and random variable $(X,Y)$ on $\X \times \Y $ with its probability $P_{X,Y}$, $\R^{(X,Y)} (f)$ denotes the  expected risk of $f$ on $(X, Y)$; $i.e.$, $\R^{(X,Y)}(f) := \int l(f(x),y) dP_{X,Y}$, where $l:\Y \times \Y \rightarrow \bR$ is a corresponding  loss function. $\R^e(f)$ and $\R^{o.o.d.}(f)$ denote an expected risk of $f$ with respect to a target task $(X^e, Y^e)$ in $e \in \E$ and the worst expected risk of $f$ across all target tasks $T:= \{(X^e, Y^e)\}_{e \in \E}$; that is, $\R^e(f):= \R^{(X^e, Y^e)}(f)$ and $\R^{o.o.d.}(f) := \max_{e \in \E} R^e(f)$.   We call $\Phi:\X \rightarrow \H$ $T$-invariant or a $T$-invariant feature, which means $\Phi$ captures an invariant feature with respect to target tasks $T$,  if $P(Y^e| \Phi(X^e))$ does not depend on $e \in \E$  $\I_{T}$ denotes the set of all $T$-invariant feature maps:
%$\I_{T} :=  \left\{ \Phi : \X \rightarrow \H \left| P(Y^e | \Phi(X^e) ) \text{ does not depend on }e \in \E  \right. \right\}.$ For $m \in \bN_{>0}$, $[m]$ denotes a finite set $\{1,...,m \}$. For a finite set $A$, $|A| \in \bN$ denotes the number of elements included in $A$.

Following \cite{Martin}, to formulate the out-of-distribution (o.o.d.) generalization, we assume that the joint distribution of data $(X^e,Y^e)$ depends on the domain (or environment) $e\in \E$, and consider the dependence of a predictor $f$ on the domain variable $e$.
%For a domain index $e$, let $(X^e, Y^e)$ be a random variable on $\X \times \Y$ where $X^e \in \X$ and $Y^e \in \Y$ represent an object and its label on domain $e$. 
Suppose we are given training data sets $\D^e:= \{(x^e_i, y^e_i) \}_{i=1}^{n^e} \sim P_{X^e, Y^e}$ \kf{i.i.d.}~from multiple domains $e \in \E_{tr}$. The final goal of the o.o.d.~problem is to predict a desired label $Y^e \in \Y$ from $X^e \in \X$ for larger target domains $\E \supset \E_{tr}$. % than the training domains $\E_{tr}$.   
To discuss the o.o.d.~performance, \cite{Martin} introduced the o.o.d.~risk
 \begin{equation}\label{eq:OOD-Bayes rule}
  \R^{o.o.d.}(f) := \max_{e \in \E} \R^e(f), 
 \end{equation}
where $\R^e (f):=  \R^{(X^e, Y^e)} (f)$.  This is the worst case risk over the domains $\E$, including unseen domains $\E- \E_{tr}$. 

%%==============================================
%\subsection{Invariance Learning  by coarser label data}\label{sec.setting}
%===============================================
%=======================================================================

To solve (\ref{eq:OOD-Bayes rule}), \cite{Martin} estimates such a predictor $w \circ \Phi$ that is invariant to change of domains, where the invariance  $\Phi: \X \rightarrow \H$ and the predictor $w :\H \rightarrow \Y$ function as eliciting the invariant representation from $X^e \in \X$, and  estimating a label of the invariant representation $\Phi(X^e)$ respectively. The estimation are implemented by solving the following optimization problem:
\begin{equation}\label{eq:IRM}
 \min_{\Phi \in \I_{tr}, \\
w: \H \rightarrow \Y}  \sum_{e \in \E_{tr}} \R^{e}(w \circ \Phi ),
\end{equation}
where $\I_{tr}$ is the set of invariances captured by $\bigcup_{e \in \E_{tr}} \D^e$.
All of IL, including \cite{Martin},  estimate the invariance \kf{using} the difference among $\E_{tr}$, assuming the availability of multiple training domains in common.

While how to capture the invariance varies slightly among IL, we adopt the method based on  conditional independence as done by \cite{Rojas}, \cite{Jonas} and \cite{koyama}:
%\begin{multline*}
$$\I_{tr} :=\bigl\{ \Phi : \X \rightarrow \H \left| \right. \bigl.P(Y^e | \Phi(X^e) ) \text{ does not depend on  }e \in \E_{tr}  \bigr\}.$$
\subsection{Invariance estimation by higher label data}\label{sec.HIT-Learning}
%=============================

Our goal is to make an invariant predictor from a single training domain $\E_{tr} = \{ e^* \}$. 
%Then, previous IL can not estimate the desirable invariance correctly: ``$ P(Y^e | \Phi(X^e) ) \text{ are same across }e \in \E_{tr}$" can not impose no restrictions on $\Phi$, and simultaneously, $\I_{tr}$ becomes a set of all functions. As a result, (\ref{eq:IRM}) results in 
In this case, (\ref{eq:IRM}) is reduced to 
%\begin{equation*}
% \min_{f} \R^{e^*}(f ),
%\end{equation*}
the empirical risk minimization $\min_{f} \R^{e^*}(f )$ on $e^*$, and therefore the standard IL is not able to extract invariance.  

In this paper, we introduce an assumption that additional data $\D^e_{ad}$ from $(X^e, Z^e)$ with coarser label $Z^e\in \Z$ is available for multiple domains $\E_{ad} \subset \E$.  This means that we have data for another task in a higher label hierarchy than the target task.  More formally, the label $Z^e$ 
%is given by 
\kff{is assumed to follow} $Z^e = g(Y^e)$ with surjective label mapping $g: \Y \rightarrow \Z$ from the lower to the higher level in the hierarchy.  \kf{For example, in the problem of animal recognition from images, $\Y$ is $\{ \text{cow}, \text{horse}, \text{dog}, \ldots, \text{no animal}\}$ and $\Z$ may be $\{\text{animal}, \text{no animal}\}$.  The domain $e$ specifies the background of the image: pasture, sand, room, and so on.}  
 
%To estimate the invariance even when $\E_{tr}= \{e^* \}$,   assume that additional coaser label data $\D^e_{ad}$ from  $(X^e, g(Y^e))$, where $g: \Y \rightarrow \Z$ is some surjective function between labels, are available  
%Our idea is, instead of $\D^{e^*}$, we estimate the invariance by the coaser label data $\sum_{e \in \E_{ad}} \D^e_{ad}$: 

By making use of the data $\bigcup_{e \in \E_{ad}} \D^e_{ad}$ in the higher level, our objective 
%function 
for the invariant prediction of $Y^e$ is given by 
\begin{equation}\label{eq:ILLH}
 \min_{\Phi \in \I_{ad}, \\
w: \H \rightarrow \Y}   \R^{e^*}(w \circ \Phi ),
\end{equation}
where $\I_{ad}$ is the set of invariances: 
%captured by $\sum_{e \in \E_{ad}} \D^e_{ad}$; 
%\begin{multline*}
$$\I_{ad} :=\bigl\{ \Phi : \X \rightarrow \H \left| \right. \bigl.
P(g(Y^e) | \Phi(X^e) ) \text{ does not depend on }e \in \E_{ad}  \bigr\}.$$
The condition of $\I_{ad}$ is necessary to the invariance condition of the target task $(X^e,Y^e)$,  that is,\\
\centerline{
``$P(Y^e | \Phi(X^e) )$ does not depend on $e \in \E$" $\Rightarrow$ }\\
\centerline{
``$P(g(Y^e) | \Phi(X^e))$ does not depend on  $e \in \E_{ad}$"}
\\
holds. 

\subsection{Construction of objective function}\label{sec.Objective}
%============================================================================================================

While there are many possible variations in the design of losses and models, we \kff{focus} 
%consider 
a probabilistic output case and evaluate its error by the cross entropy loss; that is, we model $w$ by maps to 
%probability functions 
$p_{\theta}: \H \rightarrow \P_{\Y}$, where $\P_{\Y}$ denotes \kf{the set}
%a set consisting 
of probabilities on $\Y$. \kff{The risk is then written by}
$$
\R^{e}(p_{\theta} \circ \Phi) = \int   - \log  p_{\theta} ( Y^e | \Phi(X^e)) dP_{X^e, Y^e}.
$$

We aim to solve (\ref{eq:ILLH}) by minimizing the following objective function:
\begin{align*}
Objective(\theta, \Phi): =  \hat{\R} ^{e^*} (p_{\theta} \circ \Phi)  +\lambda \cdot \kff{(\text{Dependence measure of $P(Z^e | \Phi(X^{e}))$ on } e \in \E_{ad})}. % \label{eq:obj}
\end{align*} 
Here, $\hat{\R}^{e^*} (p_{\theta} \circ \Phi)$ denotes the  empirical risk of $p_{\theta} \circ \Phi$ evaluated by $\D^{e^*}$: 
$$\hat{\R}^{e^*}(p_{\hat{\theta}} \circ \Phi) := - \frac{1}{| \D^{e^*} |} \sum_{(x^{e^*}, y^{e^*}) \in \D^{e^*}} \log p_{\hat{\theta}}(y^{e^*} | \Phi(x^{e^*}))$$.

%Note that, while samples used in invariance regularized term is also used as a loss in the first term in previous invariance learnings, we intentionally exclude an empirical risk evaluated by $\D^{ad}$ from (\ref{eq:obj}). The construction enables us to select optimal hyperparameters in (\ref{eq:obj}), which is shown in Subsection \ref{subsec:CVmethod1} and \ref{subsec:theo}.

%While we can consider some variations of  the second regularized term,  we adopt the one of \cite{Martin} noting that their definition of $invariance$ coincides with ours in our setting.  \cite{Martin} calls $\Phi$ invariant when  $\argmin_{w} \R^e (w \circ  \Phi)$ is independent $e$, and to evaluate such an invariant feature, deduce the regularized term $\|  \nabla_{\hat{w} = w}\R^e(\hat{w}  \circ \Phi) \|^2$. The following statement ensures our and their definition of $invariance$  match, and therefore we may use $\|  \nabla_{\hat{w} = w}\R^e(\hat{w}  \circ \Phi) \|^2$ as our regularized term:
\begin{algorithm*}[t]
\caption{Two Cross Validation Methods. If CORRECTION = True, $\lambda$ is selected by method II and if False, Method I.}
\label{alg:CV1}
\begin{algorithmic}[1]
\begin{spacing}{0.8}
\REQUIRE: Split $\D^{e^*},\D^{e_1}_{ad},...,\D^{e_n}_{ad}$ into $K$ parts. Set the hyperparameter candidates $\Lambda$.
\REQUIRE:$\hat{P}(Z^e =z^{\scalebox{0.2}{$\cancel{\hookrightarrow}$}}) \leftarrow \frac{{| \D_{ad,z^{\scalebox{0.2}{$\cancel{\hookrightarrow}$}} }^e |} }{| \D^e_{ad} |}$, where $D^e_{ad,z^{\scalebox{0.5}{$\cancel{\hookrightarrow}$}} } := \left\{ (x,z) \in \D^e_{ad} \left| z = z^{\scalebox{0.2}{$\cancel{\hookrightarrow}$}}\right. \right\}$ for all $e \in \kff{\E_{ad}}$ and $z^{\scalebox{0.2}{$\cancel{\hookrightarrow}$}}  \in \Z^{\scalebox{0.2}{$\cancel{\hookrightarrow}$}}$.
%\REQUIRE $\alpha$, $\beta$: step size hyperparameters
%\STATE randomly initialize $\theta$
 \FOR {$\lambda \in \Lambda$}
 \FOR {$k= 1$ to $K$}
\STATE Learn $\theta_{[-k]}^{\lambda}, \Phi_{[-k]}^{\lambda}$ by using $\D^{e^*}_{[-k]}, \D^{e_1}_{ad,[-k]},...,\D^{e_n}_{ad,[-k]}$.
 \STATE  $\hat{\R}_{[k]}^{e^*}( p_{\theta_{[-k]}^{\lambda} }\circ\Phi_{[-k]}^{\lambda} )\leftarrow \frac{1}{|\D^{e^*}_{[k]} |} \sum_{(x^{e^*}, y^{e^*}) \in \D_{[k]}^{e^*}}- \log p_{\theta_{[-k]}^{\lambda}} (y^{e^*} | \Phi^{\lambda}_{[-k]}(x^{e^*}) )$ \\ \STATE $C (z^{\scalebox{0.2}{$\cancel{\hookrightarrow}$}}) \leftarrow \frac{1}{|\D^{e^*}_{[k], z^{\scalebox{0.2}{$\cancel{\hookrightarrow}$}} } |}\sum_{(x,y) \in \D^{e^*}_{[k], z^{\scalebox{0.3}{$\cancel{\hookrightarrow}$}} }} - \log p_{\theta_{[-k]}^{\lambda}}(y |  \Phi_{[-k]}^{\lambda}(x), g^{-1} (z^{\scalebox{0.2}{$\cancel{\hookrightarrow}$}} ) )$ for $z^{\scalebox{0.2}{$\cancel{\hookrightarrow}$}}$ in  $\Z^{\scalebox{0.5}{$\cancel{\hookrightarrow}$}}$.
  \FOR {$e$ $\in$  $\E_{ad}$}
 \STATE  $\hat{\R}_{[k]}^{(X^e, Z^e)}(p_{\theta_{[-k]}^{\lambda}} \circ \Phi_{[-k]}^{\lambda} )\leftarrow \frac{1}{|\D^{e}_{ad,[k]} |} \sum_{(x^{e}, z^{e}) \in \D_{ad,[k]}^{e}}- \log p_{\theta_{[-k]}^{\lambda}} (z^{e} | \Phi^{\lambda}_{[-k]}(x^{e}) )$.
  \STATE  $\hat{\R}_{[k]}^{e}(p_{\theta^{-k}_{\lambda}} \circ\Phi^{-k}_{\lambda} )\leftarrow \hat{\R}_{[k]}^{(X^e, Z^e)}(p_{\theta_{[-k]}^{\lambda}} \circ\Phi_{[-k]}^{\lambda}  ) +
  \sum_{z^{\scalebox{0.2}{$\cancel{\hookrightarrow}$}}  \in \Z^{\scalebox{0.2}{$\cancel{\hookrightarrow}$}}} \bigl\{ \hat{P}(Z^e = z^{\scalebox{0.2}{$\cancel{\hookrightarrow}$}} )  \cdot   C (z^{\scalebox{0.2}{$\cancel{\hookrightarrow}$}})  \bigr\}$.
 \ENDFOR
 \IF {CORRECTION}
 \STATE $\hat{\R}^{o.o.d.}_{[k]}( p_{\theta_{[-k]}^{\lambda}} \circ\Phi_{[-k]}^{\lambda} )\leftarrow \max \{ \max_{ e \in \E_{ad}}\hat{\R}_{[k]}^{(X^e, g(Y^e))}(p_{\theta_{[-k]}^{\lambda}} \circ\Phi_{[-k]}^{\lambda}  ),\hat{\R}_{[k]}^{e^*}( p_{\theta_{[-k]}^{\lambda}} \circ\Phi_{[-k]}^{\lambda})\}$
 \ELSE
 \STATE $\hat{\R}^{o.o.d.}_{[k]}( p_{\theta_{[-k]}^{\lambda}} \circ\Phi_{[-k]}^{\lambda} )\leftarrow \max_{e \in \E_{ad} \cup \{ e^* \} } \{\hat{\R}_{[k]}^{e}(p_{\theta_{[-k]}^{\lambda}} \circ\Phi_{[-k]}^{\lambda}  ) \}$
 \ENDIF
\ENDFOR
\STATE (I) $\hat{\R}^{o.o.d.} (\lambda) =\frac{1}{K}\sum_{k=1}^{K} \hat{\R}^{o.o.d.}_{[k]}( p_{\theta_{[-k]}^{\lambda}} \circ\Phi_{[-k]}^{\lambda}  )$
\ENDFOR
\STATE Select $\lambda^* := \argmin_{\lambda } \hat{\R}^{o.o.d.}(\lambda)$
%\STATE while 
\end{spacing}
\vspace{-0.1mm}
\end{algorithmic}
\end{algorithm*}

For the %second regularized 
\kff{regularization} term, \kff{among others}, we adopt the one \kf{introduced in} \cite{Martin}, noting that their definition of invariance \kff{agrees} with ours in our setting, \kff{as we see below}.  \cite{Martin} call $\Phi$ invariant when $\argmin_{w} \R^e (w \circ  \Phi)$ is independent $e$, and to evaluate such an invariant feature, they propose the regularization term $\|  \nabla_{\hat{w} = w}\R^e(\hat{w}  \circ \Phi) \|^2$. The following \kf{lemma} ensures that our and their definition of the invariance \kff{coincide,}  
%match, 
and therefore we may use $\sum_{e \in \E_{ad}} \|  \nabla_{\hat{w} = w}\R^e(\hat{w}  \circ \Phi) \|^2$ for our regularization term.
\begin{lemm}
When modeling $w$ by conditional probabilities, the following statements are equivalent:
\begin{itemize}
    \item[(i)] $P (Z^e | \Phi(X^e) )$ does not depend on $e \in \E$.
    \item[(ii)] $\argmin_{ p^{\Z | \H }_{\theta_{ad}}: \H \rightarrow \P_{\Z}} \R^{(X^e, Z^e)} ( p^{ \Z | \H  }_{\theta_{ad}}  \circ \Phi )$ does not depend on $e\in \E$,
    %\vspace{-3mm}
\end{itemize}
%assuming  that 
\kff{where} $ p^{\Z | \H  }_{\theta_{ad}}$ in (ii) %takes 
\kff{runs over} all probability densities.
\end{lemm}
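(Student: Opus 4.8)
The plan is to invoke the classical characterization of the minimizer of a log-loss (cross-entropy) risk: among all conditional densities, the population cross-entropy is minimized precisely by the true conditional law, i.e.\ log-loss is a strictly proper scoring rule. First I would rewrite the risk appearing in (ii) by conditioning on the representation $\Phi(X^e)$: since $Z^e = g(Y^e)$ takes values in the finite set $\Z$,
\begin{equation*}
\R^{(X^e,Z^e)}\!\big(p^{\Z|\H}_{\theta_{ad}}\circ\Phi\big)\;=\;\int \Big(\sum_{z\in\Z} -\log p^{\Z|\H}_{\theta_{ad}}(z\mid h)\;\cdot\;q^e_h(z)\Big)\,dP_{\Phi(X^e)}(h),
\end{equation*}
where $q^e_h(\cdot):=P\big(Z^e=\cdot\mid \Phi(X^e)=h\big)\in\P_{\Z}$. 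For each fixed $h$ the bracketed quantity is the cross-entropy between $q^e_h$ and $p^{\Z|\H}_{\theta_{ad}}(\cdot\mid h)$; by Gibbs' inequality (nonnegativity of the KL divergence on the simplex $\P_{\Z}$) it is minimized by $p^{\Z|\H}_{\theta_{ad}}(\cdot\mid h)=q^e_h$, uniquely so when $q^e_h$ has full support on $\Z$. Integrating over $h$, the set of minimizers in (ii) is exactly the set of maps $h\mapsto q^e_h$, understood up to $P_{\Phi(X^e)}$-null sets.

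Given this characterization, the equivalence is short. Statement (i) asserts that the conditional law $h\mapsto q^e_h$ admits a version independent of $e$, while statement (ii) asserts that the argmin set, which we have just identified with this same map, is independent of $e$. For (i)$\Rightarrow$(ii): if $q^e_h\equiv q_h$ does not depend on $e$, then the single map $h\mapsto q_h$ is a minimizer for every $e$, so the argmin is $e$-independent. For (ii)$\Rightarrow$(i): choosing any $\Phi$-measurable representative $p^\star$ from the common argmin set, we have $p^\star(\cdot\mid h)=q^e_h$ for $P_{\Phi(X^e)}$-a.e.\ $h$ and every $e$, hence all the conditional laws $q^e_h$ coincide with $p^\star(\cdot\mid h)$ and thus with one another, which is precisely (i).

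The one point that needs care — and the only real obstacle — is the bookkeeping of measure-zero sets, on two levels: the argmin in (ii) pins down $p^{\Z|\H}_{\theta_{ad}}(\cdot\mid h)$ only on the support of the pushforward $P_{\Phi(X^e)}$ (which may itself vary with $e$), and, for a given $h$, the inner minimizer is unconstrained on $\{z:q^e_h(z)=0\}$. The clean resolution is to read ``does not depend on $e$'' in both statements modulo the appropriate null sets — or to add the mild standing assumption that $\Phi(X^e)$ has common support across $e$ — and I would make this convention explicit just before the proof. Apart from this, the argument is entirely the textbook fact about strict propriety of log-loss, applied conditionally on $\Phi(X^e)$.
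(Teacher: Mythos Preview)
Your proposal is correct and follows the same idea as the paper: both rest on the fact that the cross-entropy risk is minimized exactly by the true conditional density $P(Z^e\mid\Phi(X^e))$, so the argmin in (ii) is that conditional law and the equivalence is immediate. The paper's proof is a one-liner stating just this; your version is a more detailed unpacking (with the additional, and welcome, care about null sets and supports) of the same argument.
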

\begin{proof}
Noting that $\argmin_{ \theta_{ad} } \R^{(X^e, Z^e)} (p_{\theta_{ad}} \circ \Phi)$ coincides with the probability density function of $P(Z^e | \Phi(X^e))$, the above equivalence follows immediately.
\end{proof}

In summary, we construct an objective function by 
\begin{align}\label{eq:completeobject}
&Objective(\theta,\theta_{ad} , \Phi): =   \hat{\R} ^{e^*} (p_{\theta} \circ \Phi)  +\lambda \cdot \sum_{e \in \E_{ad}} \|  \nabla_{\hat{\theta}_{ad} ={\theta}_{ad} }\hat{\R}^{(X^e, Z^e)}(p^{\Z | \H }_{\hat{\theta}_{ad}} \circ \Phi) \|^2,
\end{align}
where $p_{\theta}^{\Z | \H }: \H \rightarrow \P_{\Z}$ is  the logistic regression model, same as \cite{Martin}, and $\hat{\R}^{(X^e, Z^e)}(p^{\Z | \H }_{\hat{\theta}_{ad}} \circ \Phi) := - \frac{1}{| \D^e_{ad} |} \sum_{(x^e, z^e) \in \D^e_{ad}} \log p^{\Z | \H }_{\hat{\theta}_{ad}}(z^e | \Phi(x^e))$.

%C=================================================================================
\section{Hyperparameter selection method}\label{sec.CV}
%C=================================================================================

%C=================================================================================
%\subsection{Difficulty of hyperparameter selection}\label{subsec:diff}
\subsection{Hyperparameter Selection in Invarance Learning}\label{subsec:diff}
%C==================================================================================
Most IL methods involve hyperparameters to control the trade-off between the classification accuracy and invariance.  Thus, hyperparameter selection is essential in IL methods.  

The hyperparameter selection in IL has special difficulty.  
%Since the Goal of o.o.d.~problem is to predict $Y^e \in \Y$ on possibly unseen domains 
%$\{(X^e, Y^e)\}_{e \in \E  \E^{tr}}$ 
Since the o.o.d.~problem needs to predict 
$Y^e$ on unseen domains, 
%$\E$, 
the hyperparameter must be chosen 
%without assessing any data in $\E-\E_tr$, $\lambda$ in its objective function should be chosen 
without accessing any data in the unseen domains. 
%Against the demand, previous IL struggles with the selection: some of IL select a hyperparameter which yields the highest accuracy of the estimate by data on unseen domains, with their availability assumed in training.  
\kf{It was reported \cite{Ishaan} that the success of IL methods depends strongly on the careful choice of hyperparameters; some of the results used data from unseen domains in the choice. }
\cite{Ishaan} reported also experimental results of various IL methods with two CV methods, training-domain validation (Tr-CV) and leave-one-domain-out validation (LOD-CV), and showed that the CV methods failed to select preferable hyperparamters.  In Colored MNIST experiment, for example, 
%well-known binary classification tasks to demonstrate the effectiveness of a proposed o.o.d. method,
the accuracy of Invariant Risk Minimization \cite{Martin} is $52.0\%$ at best, which is about the random guess level. 

%validation (Tr-CV).  Tr-CV is, splitting each target-domain sample which is assessable in training ($i.e$, $\sum_{e \in \E^{tr}}\D^e$) into training and validation subsets, models is validated by the sum of  empirical risks evaluated by validation subsets of each domain. Second method is leave-one-domain-out validation (LOD-CV). LOD-CV is, holding one domains of the training domains out,  validating models by empirical risks evaluated by the leave-out sample.

The failure of the CV methods are caused by the improper design of the objective function for CV; they do not simulate the o.o.d.~risk, which is the maximum risk over the domains.
Tr-CV splits data in each training domain into training and validation subsets, and takes the sum of the validated risks over the training domains. Obviously, this is not an estimate of the o.o.d.~risk. LOD-CV holds out one domain among the training domains in turn and validates models with the average of the validated risks over the held-out domains. Again, this average does not correspond to the o.o.d.~risk. 
%While it has not been clearly pointed out why the two CV methods proposed by \citep{Ishaan} fail to select preferable hyperparameters, we insist that the failiures are brought by the improper designs of their evaluation functions by validation data: the two CV methods fail to simulate o.o.d. risk, whose minimization is our final goal, of a model from validation data. Tr-CV is, splitting each data on training domains into training and validation subsets, models is validated by the sum of  empirical risks evaluated by validation subsets of each domain. It is expected that  the sum of  empirical risks will fail to o.o.d. risk, the maximum of  empirical risks. LOD-CV is, holding one domain of the training domains out,  validating a model by an empirical risk evaluated by the validation data. It is expected that the empirical risk evaluated by the validation data on the held-out single domain does not reflect its o.o.d. risk.
%If we aims to adapt Tr-CV in our setting, we mistakenly simulate the risk on $e^*$, not o.o.d. risks sinse $\sum_{e \in \E^{tr}}\D^e = \D^{*}$. If we aims to adapt LOD-CV, we mistakenly simulate the risks on single domain.
In summary, the problem we need to solve is:
%\centerline{
    How can we construct an evaluation function of  the o.o.d.~risk from validation data?
%}
%It has not been clearly pointed out why the conventional CV methods does not work, we insist that the failure is brought by the failure of the construction of evaluations functions correctly. Therefore, our  is 
\kf{In the sequel, we will propose two methods of CV, which are summarized in Algorithm \ref{alg:CV1}.}

\subsection{Method I: \kff{using data of higher level task}}\label{subsec:CVmethod1}
%C==================================================================================
We divide each of $\D^{e^*}, \D^{e_{1}}_{ad},...,\D^{e_{n}}_{ad}$ into $K$ parts, and  use the $k$-th sample $\{\D^{e^*}_{[k]}, \D^{e_1}_{ad,[k]},...,\D^{e_n}_{ad,[k]}\}$ and the rest $\{\D^{e^*}_{[-k]}, \D^{e_1}_{ad,[-k]},...,\D^{e_n}_{ad,[-k]}\}$  for validation and training, respectively.  To approximate the o.o.d.~risk of the predictor $p_{\theta_{[-k]}^{\lambda}} \circ \Phi^{\lambda}_{[-k]}$ obtained by the training set, we wish to 
%estimate $\max_{e \in \E^{ad} \cup \{ e^* \} } R^{e}(p_{\theta_{[-k]}^{\lambda}} \circ \Phi^{\lambda}_{[-k]})$, and to do so, 
estimate $R^{e}(p_{\theta_{[-k]}^{\lambda}} \circ \Phi^{\lambda}_{[-k]})$ for $e\in\E_{ad} \cup \{e^* \}$. 
For $e^*$, we use the standard empirical estimate $\hat{\R}^{e^*} (p_{\theta_{[-k]}^{\lambda}} \circ \Phi^{\lambda}_{[-k]}):= \frac{1}{| \D^{e^*}_{[k] } |}\sum_{(x^{e^*}, y^{e^*}) \in \D_{[k]}^{e^*}}- \log p_{\theta_{[-k]}^{\lambda}} (y^{e^*} | \Phi^{\lambda}_{[-k]}(x^{e^*}) )$.  For 
%$\{ \R^{e} (p_{\theta_{[-k]}^{\lambda}} \circ \Phi^{\lambda}_{[-k]})\}_{e \in \E^{ad}}$ 
$e \in \E_{ad}$, we substitute unavailable $Y^e$ with $Z^e$ and use $\hat{\R}^{(X^e,Z^e)}(p_{\theta_{[-k]}^{\lambda}}  \circ \Phi_{[-k]}^{\lambda})  := \frac{1}{| \D^e_{ad,[k]} |} \sum_{(x^e,z^e) \in \D^e_{ad,[k]}}  - \log p_{\theta_{[-k]}^{\lambda}}(z^e | \Phi_{[-k]}^{\lambda}(x^e))$. 
\subsection{Method II: 
%approximation correction of Method I
\kff{using correction term}
}\label{subsec:CVmethod2}
%C==================================================================================

%We improve Method I. Method I approximates $\R^{e} (p_{\theta_{[-k]}^{\lambda}} \circ \Phi^{\lambda}_{[-k]})$ by $\hat{\R}^{(X^e, Z^e)} (p_{\theta_{[-k]}^{\lambda}} \circ \Phi^{\lambda}_{[-k]}) $, and therefore, if we can add some estimator of $\R^{e} (p_{\theta_{[-k]}^{\lambda}} \circ \Phi^{\lambda}_{[-k]}) - \R^{(X^e, Z^e)} (p_{\theta_{[-k]}^{\lambda}} \circ \Phi^{\lambda}_{[-k]}) $ to $\hat{\R}^{(X^e, Z^e)} (p_{\theta_{[-k]}^{\lambda}} \circ \Phi^{\lambda}_{[-k]}) $, we can evaluate $\R^{e} (p_{\theta_{[-k]}^{\lambda}} \circ \Phi^{\lambda}_{[-k]}) $ more accurately. 
Method I can be improved by considering the difference $\R^{e} (p_{\theta_{[-k]}^{\lambda}} \circ \Phi^{\lambda}_{[-k]}) - \R^{(X^e, Z^e)} (p_{\theta_{[-k]}^{\lambda}} \circ \Phi^{\lambda}_{[-k]}) $, \kff{for which} 
%To realize this idea, 
we deduce the following theorem. The proof is given in Appendix \ref{append.proof_diff}.
\begin{theo}\label{thm:HIT-CV}
Let $\Z^{\scalebox{0.5}{$\cancel{\hookrightarrow}$}} := \left\{ z \in \Z \left| | g^{-1}(z) | > 1 \right. \right\}$.
%with $ g^{-1}(z):= \left\{ y \in \Y \left|   g(y) = z\right. \right\}$.
For any map $\Phi: \X \rightarrow \H$, $p_{\theta} : \H \rightarrow \P_{\Y}$, and random variable $(X,Y)$ on $\X \times \Y$, the following equality holds:
\begin{align}
  &\R^{(X,Y)}( p_{\theta} \circ \Phi) - \R^{(X, g(Y))}( p_{\theta} \circ \Phi) =  \sum_{z^{\scalebox{0.3}{$\cancel{\hookrightarrow}$}}  \in \Z^{\scalebox{0.3}{$\cancel{\hookrightarrow}$}}} \Bigl\{ P(g(Y) = z^{\scalebox{0.3}{$\cancel{\hookrightarrow}$}} )  \times \int - \log p_{\theta}\bigl(Y | \Phi(X), Y \in g^{-1}(z^{\scalebox{0.3}{$\cancel{\hookrightarrow}$}} ) \bigr)dP_{(X,Y) | Y \in g^{-1}(z^{\scalebox{0.3}{$\cancel{\hookrightarrow}$}} )}\Bigl\}.
 \label{eq.decomp}
  \end{align}
  Here,  $P_{(X,Y)|  Y \in g^{-1}(z^{\scalebox{0.3}{$\cancel{\hookrightarrow}$}})}$ denotes the conditional distribution of $(X, Y)$ given the event $Y \in g^{-1}(z^{\scalebox{0.3}{$\cancel{\hookrightarrow}$}})$, and  $p_{\theta}(y | \Phi(x), Y \in g^{-1}(z^{\scalebox{0.3}{$\cancel{\hookrightarrow}$}} )):=\frac{p_{\theta}(y | \Phi(x) )}{\sum_{y \in g^{-1}(z^{\scalebox{0.3}{$\cancel{\hookrightarrow}$}}) } p_{\theta}(y | \Phi(x) )}$.
\end{theo}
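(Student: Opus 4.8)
The plan is to expand both risks as integrals of cross-entropy losses and reorganize the difference by conditioning on the fibers of $g$. First I would write $\R^{(X,Y)}(p_\theta\circ\Phi) = \int -\log p_\theta(Y\mid\Phi(X))\,dP_{X,Y}$ and, similarly, $\R^{(X,g(Y))}(p_\theta\circ\Phi) = \int -\log p_\theta(g(Y)\mid\Phi(X))\,dP_{X,Y}$, where by convention $p_\theta(z\mid\Phi(x)) := \sum_{y\in g^{-1}(z)} p_\theta(y\mid\Phi(x))$ is the induced probability on $\Z$. Subtracting, the integrand becomes $-\log p_\theta(Y\mid\Phi(X)) + \log p_\theta(g(Y)\mid\Phi(X)) = -\log\frac{p_\theta(Y\mid\Phi(X))}{p_\theta(g(Y)\mid\Phi(X))} = -\log p_\theta(Y\mid\Phi(X),\,Y\in g^{-1}(g(Y)))$, using exactly the definition of the conditional-on-fiber probability given in the statement.

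Next I would partition the sample space according to the value $z = g(Y) \in \Z$, using the law of total expectation: $\int (\cdot)\,dP_{X,Y} = \sum_{z\in\Z} P(g(Y)=z)\int (\cdot)\,dP_{(X,Y)\mid Y\in g^{-1}(z)}$. Applying this to the integrand above gives $\sum_{z\in\Z} P(g(Y)=z)\int -\log p_\theta(Y\mid\Phi(X),\,Y\in g^{-1}(z))\,dP_{(X,Y)\mid Y\in g^{-1}(z)}$, which is nearly the claimed right-hand side except the sum is over all of $\Z$ rather than over $\Z^{\scalebox{0.3}{$\cancel{\hookrightarrow}$}}$. To finish, I would observe that for any $z$ with $|g^{-1}(z)| = 1$, say $g^{-1}(z) = \{y_0\}$, the conditional probability $p_\theta(y_0\mid\Phi(x),\,Y\in\{y_0\})$ equals $1$ identically, so its negative log is $0$ and that term drops out of the sum. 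Hence only the terms with $|g^{-1}(z)| > 1$, i.e.\ $z\in\Z^{\scalebox{0.3}{$\cancel{\hookrightarrow}$}}$, survive, yielding \eqref{eq.decomp}.

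The only real care needed is bookkeeping around degenerate events: terms with $P(g(Y)=z)=0$ contribute nothing and can be silently dropped, and the conditional distribution $P_{(X,Y)\mid Y\in g^{-1}(z)}$ and the conditional density $p_\theta(\cdot\mid\Phi(x), Y\in g^{-1}(z))$ are only defined when the respective normalizing constants are positive; on the support of $P_{g(Y)}$ the former is fine, and $p_\theta$ being a genuine probability density with full support makes the latter fine as well. So I do not expect a substantive obstacle — the proof is essentially a change of variables plus the chain rule for entropy $H(Y) = H(g(Y)) + \mathbb{E}_{g(Y)}[H(Y\mid g(Y))]$ written at the level of cross-entropy against the model $p_\theta$; the main thing to get right is matching the fiberwise-normalized model probability in the statement to the ratio $p_\theta(Y\mid\Phi(X))/p_\theta(g(Y)\mid\Phi(X))$ that emerges from the algebra.
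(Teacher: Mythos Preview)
Your proposal is correct and follows essentially the same route as the paper: write the difference of the two cross-entropy risks as the integral of $-\log\frac{p_\theta(Y\mid\Phi(X))}{p_\theta(g(Y)\mid\Phi(X))}$, condition on $g(Y)=z$, identify the ratio with $p_\theta(Y\mid\Phi(X),\,Y\in g^{-1}(z))$, and then drop the terms with $|g^{-1}(z)|=1$ because the conditional probability there is identically $1$. Your extra remarks on null events and well-definedness of the conditionals are sound bookkeeping that the paper leaves implicit.
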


%The above theorem shows that if we can add some estimator of $\sum_{z^{\scalebox{0.3}{$\cancel{\hookrightarrow}$}}  \in \Z^{\scalebox{0.3}{$\cancel{\hookrightarrow}$}}} \Bigl\{ P(g(Y^e) = z^{\scalebox{0.3}{$\cancel{\hookrightarrow}$}} ) \times \int - \log p_{\theta_{[-k]}^{\lambda}}(Y^e | \Phi^{\lambda}_{[-k]}(X^e), Y^e =g^{-1}(z^{\scalebox{0.3}{$\cancel{\hookrightarrow}$}} ) )dP_{(X^e,Y^e) | Y^e = g^{-1}(z^{\scalebox{0.3}{$\cancel{\hookrightarrow}$}} )}\Bigl\}$ to $\hat{\R}^{(X^e, Z^e)} (p_{\theta_{[-k]}^{\lambda}} \circ \Phi^{\lambda}_{[-k]}) $, we can estimate $\R^{e} (p_{\theta_{[-k]}^{\lambda}} \circ \Phi^{\lambda}_{[-k]})$  more accurately. To accomplish the correction, we need to estimate the following two values:

The theorem shows that, to estimate 
$\R^{e} (p_{\theta_{[-k]}^{\lambda}} \circ \Phi^{\lambda}_{[-k]}) - \R^{(X^e, Z^e)} (p_{\theta_{[-k]}^{\lambda}} \circ \Phi^{\lambda}_{[-k]}) $, 
we need to estimate the following two values:
\begin{itemize}
\item[(i)]$P(g(Y^e) = z^{\scalebox{0.3}{$\cancel{\hookrightarrow}$}} ) =P(Z^e = z^{\scalebox{0.3}{$\cancel{\hookrightarrow}$}} )$ for every $z^{\scalebox{0.3}{$\cancel{\hookrightarrow}$}}\in \Z^{\scalebox{0.5}{$\cancel{\hookrightarrow}$}},$
\item[(ii)]$\int - \log p_{\theta_{[-k]}^{\lambda}}(Y^e | \Phi^{\lambda}_{[-k]}(X^e), g^{-1}(z^{\scalebox{0.3}{$\cancel{\hookrightarrow}$}} ) )dP_{X^e,Y^e | Y^e  \in  g^{-1}(z^{\scalebox{0.3}{$\cancel{\hookrightarrow}$}} )}$  for every $z^{\scalebox{0.3}{$\cancel{\hookrightarrow}$}} \in \Z^{\scalebox{0.5}{$\cancel{\hookrightarrow}$}}$. 
\end{itemize}

%Even when $e \neq e^*$, (i) is naturally estimated by the ratio of the elements $(x,z) \in \D^e$ with $z = z^{\scalebox{0.3}{$\cancel{\hookrightarrow}$}}$ in $\D^e$:$\hat{P}(Z^e = z) := \frac{{| \D_{z^{\scalebox{0.3}{$\cancel{\hookrightarrow}$}} }^e |} }{| \D^e |}$, where $D^e_{z^{\scalebox{0.3}{$\cancel{\hookrightarrow}$}} } := \left\{ (x,z) \in \D^e \left| z = z^{\scalebox{0.5}{$\cancel{\hookrightarrow}$}}\right. \right\}$. 

(i) is naturally estimated by 
%the ratio of the elements $(x,z) \in \D^e$ with $z = z^{\scalebox{0.3}{$\cancel{\hookrightarrow}$}}$ in
$\D^e_{ad}$:$\hat{P}(Z^e = z) := \frac{{| \D_{ad,z^{\scalebox{0.3}{$\cancel{\hookrightarrow}$}} }^e |} }{| \D^e_{ad} |}$, where $D^e_{ad,z^{\scalebox{0.3}{$\cancel{\hookrightarrow}$}} } := \left\{ (x,z) \in \D^e_{ad} \left| z = z^{\scalebox{0.5}{$\cancel{\hookrightarrow}$}}\right. \right\}$. %Concerning to 
%Concerning (ii), 
\kf{(ii) is not directly \kff{estimable}, as $Y^e$ is not observed.  For an approximation of (ii), }
we \kf{substitute $Y^e$ with $Y^{e^*}$ and use } 
 \begin{align*}
 \frac{1}{| \D^{e^*}_{[k], z^{\scalebox{0.3}{$\cancel{\hookrightarrow}$}} } |}\sum_{(x,y) \in \D^{e^*}_{[k], z^{\scalebox{0.3}{$\cancel{\hookrightarrow}$}} }} - \log &p_{\theta_{[-k]}^{\lambda}}(y |  \Phi_{[-k]}^{\lambda}(x), g^{-1} (z^{\scalebox{0.3}{$\cancel{\hookrightarrow}$}} ) ),
  \end{align*}
where $$  \D^{e^*}_{[k], z^{\scalebox{0.3}{$\cancel{\hookrightarrow}$}} } := \left\{ (x,y) \in \D^{e^*}_{[k]} \left| g(y) = z^{\scalebox{0.3}{$\cancel{\hookrightarrow}$}}  \right. \right\} \subset \D^{e^*}_{[k]}.$$ 

\subsection{Theoretical analysis of our cross validation methods} \label{subsec:theo}
%C==================================================================================
In Sections \ref{subsec:CVmethod1} and \ref{subsec:CVmethod2}, we approximate $\R^{(X^e,Y^e)}$ using $Z^e$.  While the approximation is not exact, we will \kf{prove} that the proposed CV methods still \kf{select a correct hyperparameter} under some conditions.  
We will also elucidate the difference of the two CV methods theoretically. 
In this paper, \kf{to avoid discussing non-trivial effects of nonlinear $\Phi$,} we focus the case of variable selection, which appears in the invariance problem of causal inference \cite{Jonas,Christina} and regression \cite{Rojas}. 

Let $\X:= \X_1 \times\X_2$ where $\X_1 := \bR^{n_1}$ and $\X_2 := \bR^{n_2}$ with $n_1,n_2 \in \bN$, and assume that the projection $\Phi^{\X_1}$ from $\X$ onto $\X_1$ yields the invariance to $\{(X^e, Y^e) \}_{e \in \E}$, that is, $P(Y^e | \Phi^{\X_1} (X^e))$ are the same over $\forall (X^e, Y^e)\in \{(X^e, Y^e) \}_{e \in \E}$. \kf{Suppose that we have a model $\{ \Phi \}$ for the feature map, where $\Phi$ is a projection of $\X$ onto some subset of $n_1+n_2$ variables of $\X$.  Minimizing (\ref{eq:completeobject}) with its hyperparameter $\lambda$ over the model yields the feature map as a  projection $\Phi^{\lambda}: \X \rightarrow \bR^{n_{\lambda}}$ ($n_{\lambda} \leq n_1+ n_2$). }
%The hyperparameter $\lambda$ is an index to specify a projection.  
%\footnote{In the invariance estimation problem in causal inference \cite{Jonas,Christina} and regression \cite{Rojas}, a model $\{ \Phi \}$ consists only of projections in fact.}. 
Let $\Phi_2^{\lambda}$ denote the $\X_2$-component of $\Phi^{\lambda}$ and, if $\Phi^{\lambda}$ does not have any $\X_2$-component, we write $\mathrm{Im} \Phi^{\lambda}_2= \emptyset$.  
\kf{For simplicity of theoretical analysis, we assume that minimization with its hyperparameter $\lambda$ learns perfectly $p^{*, \lambda}(y | \Phi^{\lambda}(x))$, the conditional probability density function of $P_{Y^{e^*} | \Phi^{\lambda}(X^{e^*})}$.} %from $\{ p_{\theta} \}_{\theta}$. 
%It is a reasonable simplification; indeed, by the form of our objective function, the ideal minimizer with respect to $p_{\theta}$ corresponds to the conditional density function of $P_{Y^{e^*} | \Phi^{\lambda}(X^{e^*})}$ for fixed $\Phi^{\lambda}$.
%\footnote{If we adds losses evaluated by $\D^{ad}$ to our objective function, the simplification is invalid.} 
Then, \kf{neglecting the estimations}, the approximated o.o.d.~risk of $p^{*, \lambda}  \circ \Phi^{\lambda}$
%a predictor obtained as a result of learning with its hyperparameter $\lambda$, 
used by Methods I and II 
%(denoted by  $\R^{I} (\lambda)$ and $\R^{II} (\lambda)$ respectively) 
are represented by the following $\R^{I} (\lambda)$ and $\R^{II} (\lambda)$, respectively:

%\footnote{If we adds losses evaluated by $\D^{ad}$ to our objective function, the simplification is invalid.} 

%\vspace{-1mm}
%\begin{align*}
%~~~~\R^{o.o.d.}(\lambda) := \max_{(X^e,Y^e) \in T} \R^{(X^e,Y^e)}(p^{*, \lambda} \circ \Phi^{\lambda}), 
%\end{align*}
%\begin{align*}
%\vskip-50mm
$$\R^{I}(\lambda) := \max \Biggl\{   \max_{e \in \E_{ad}}   \R^{(X^e,g(Y^e))}(p^{*, \lambda}  \circ \Phi^{\lambda}),R^{(X^*,Y^*)}(p^{*, \lambda}  \circ \Phi^{\lambda}) \Biggr\}, $$
%\end{align*}
\begin{align*}
\R^{II} (\lambda) :=  \max_{e \in \E_{ad} \cup \{ e^* \} } \Biggl\{   \R^{(X^e,g(Y^e))}(p^{*, \lambda} \circ \Phi^{\lambda})+ \hspace{-3mm}\sum_{z^{\scalebox{0.5}{$\cancel{\hookrightarrow}$}} \in \Z^{\cancel{\hookrightarrow}}}  \Bigl\{  P(g(Y^e)= z^{\scalebox{0.5}{$\cancel{\hookrightarrow}$}} )  
\times   \int-\log p^{*, \lambda} (Y^{e^*} | \Phi^\lambda(X^{e^*}),  Y^{e^*} \in  g^{-1}(z^{\scalebox{0.3}{$\cancel{\hookrightarrow}$}}))& \\
%\end{align*}
%\begin{align*}
~~~~~~~~~~~~~~~~~~~~~~~~~~~~~~~~~~~~~~~~~~~~~~dP_{(X^{e^*},Y^{e^*}) | Y^{e^*} \in  g^{-1}(z^{\scalebox{0.3}{$\cancel{\hookrightarrow}$}})}  \Bigl\}  \Biggl\}. &&
\end{align*}\label{eq.HOOD}
%when ignoring estimating errors. 
%Our CV method II estimates $$ \int-\log p^{*, \lambda} (Y^{e} | \Phi^\lambda(X^{e}),   g^{-1}(z^{\scalebox{0.3}{$\cancel{\hookrightarrow}$}}))\int dP_{(X^e,Y^e) | Y^e = g^{-1}(z^{\scalebox{0.3}{$\cancel{\hookrightarrow}$}})}$$ by a sample in $\D^*$, and therefore, the integration $\int dP_{(X^e,Y^e) | Y^e = g^{-1}(z^{\scalebox{0.3}{$\cancel{\hookrightarrow}$}})}$ changes to  $\int dP_{(X^{e^*},Y^{e^*}) | Y^{e^*} = g^{-1}(z^{\scalebox{0.3}{$\cancel{\hookrightarrow}$}})}$ in the large sample limit. The fact yields the above  representation of $\R^{II} (\lambda)$.
%; indeed, among our three estimation methods for the right-hand side of (\ref{eq.decomp}), only  (III)  does not have consistency to the true value, and  change the integration $\int dP_{(X^e,Y^e) | Y^e = g^{-1}(z^{\scalebox{0.5}{$\cancel{\hookrightarrow}$}})}$ to  $\int dP_{(X^{e^*},Y^{e^*}) | Y^{e^*} = g^{-1}(z^{\scalebox{0.5}{$\cancel{\hookrightarrow}$}})}$ in the large sample limit.
%The ideal  o.o.d.~risk of $p^{*, \lambda}  \circ \Phi^{\lambda}$ is denoted by $\R^{o.o.d.}(\lambda)$: $\R^{o.o.d.}(\lambda):= \R(p^{*, \lambda}  \circ \Phi^{\lambda})$.
%a predictor obtained as a result of learning with its hyperparameter $\lambda$,
%
We have the following theoretical justification of our CV methods: \kff{the chosen $\lambda$ gives a minimizer of the correct CV criterion.} For the proofs, see Appendices \ref{append:CV-theo} and \ref{append.proof_cv2}.

\begin{theo}[Effectiveness of Method I]\label{Thm:justfyHITCV}
Assume that the following four conditions  of $\E$, $\E_{ad}$ and $\Lambda$ hold:
\begin{itemize}
\item[(a)] $P_{\Phi^{\X_1}(X^e)}$ does not depend on $e\in\E$.
%\item[(b)] For any random variable $(X, Y)$ with $P_{\Phi^{\X_1}(X),Y } =P_{\Phi^{\X_1}(X^{e^*}),Y^{e^*} } $, there exists $e \in \E$ such that $P_{\Phi^{\X_1}(X),Y } =P_{\Phi^{\X_1}(X^{e}),Y^{e} } $.
%are same among $(X^e, Y^e) \in T$,
\item[(b)] For any random variable $(X, Y)$ with $P_{\Phi^{\X_1}(X),Y } =P_{\Phi^{\X_1}(X^{e^*}),Y^{e^*} } $, there exists $e \in \E$ such that $(X,Y )=(X^e,Y^e ) $.
%are same among $(X^e, Y^e) \in T$,
\item[(c)]  $\exists \lambda^{I} \in \Lambda$ s.t. $\Phi^{\lambda^{I}}=  \Phi^{\X_1}$.
\item[(d)]$\forall \lambda$ with $\mathrm{Im} \Phi^{\lambda}_2 \neq \emptyset$,  $\exists e_{\lambda} \in \E_{ad}$ such that $P\Bigl( g(Y^{e^*})   | \Phi^{\lambda}(X^{e^*})  \Bigr) \leq e^{-\beta} - \e$, $P_{X^{e_\lambda}, Y^{e_\lambda}}$-a.e.  $i.e.$, $P_{X^{e_\lambda}, Y^{e_\lambda} } ( B ) = 1$, where
\end{itemize}
\begin{equation*}
B:=\left\{ (x, y) \in \X \times \Y \left|
\begin{array}{l}
P\Bigl( g(Y^{e^*})= g(y) |\Phi^{\lambda}(X^{e^*}) =\Phi^{\lambda}(x)  \Bigr) \\
~~~~~~~~~~~~~~~~~~~~~~~~~~~~~~~~~~~~~~\leq e^{-\beta} - \e 
\end{array}
\right.\right\}.
\end{equation*}
  
Here $\beta := H(Y^{e^*} | \Phi^{\X_1}(X^{e^*}))$
%, where $H(Y^{e^*} | \Phi^{\X_1}(X^{e^*}))$ denotes 
is the conditional entropy of $(\Phi^{\X_1}(X^{e^*}), Y^{e^*})$.
%Let 
%$T^{all} (\supset T)$ be  maximal among probability families which satisfies  both (I)$(a)$ and (II)“$P(Y^e | \Phi^{\X_1}(X))$ are same among $T^{all}$
%$T_{all} (\supset \{(X^e, Y^e) \}_{e \in \E})$ be  a set all random variables $(X,Y)$ on $\X \times \Y$ which satisfies  the condition ``$\Phi^{\X_1}(X)= \Phi^{\X_1}(X^{e^*})$" and  “$P(Y | \Phi^{\X_1}(X)) = P(Y^{e^*} | \Phi^{\X_1}(X^{e^*}))$". %\footnote{We call $T^{all} (\supset T)$ ``maximal among probability families that satisfy both (I) and (II)" if $T^{all} (\supset T)$ satisfies (I) and (II), and there exists no probability family $T^* \supset T^{all}$ which satisfies (I) and (II).}  
%Define $\R^{o.o.d.}(\lambda)$ by 
%\begin{align*}
%~~~~\R^{o.o.d.}(\lambda) := \max_{(X^e,Y^e) \in T_{all}} \R^{(X^e,Y^e)}(p^{*, \lambda} \circ \Phi^{\lambda}), 
%\end{align*}
%which means the $o.o.d.$~risk of $p^{*, \lambda} \circ \Phi^{\lambda}$ among $T_{all}$.
%for use in sufficiently many environments $T_{all}$.

Then, \kf{we have}
$$\argmin_{\lambda \in \Lambda} \R^{I}(\lambda) \subset \argmin_{\lambda \in \Lambda} \R^{o.o.d.}(p^{*, \lambda}  \circ \Phi^{\lambda}).$$ 
%holds. \\
%, which means that hyperparameters selected by our method ($=\argmin_{\lambda \in \Lambda} \R^{I}(\lambda)$) are one of the optimal ones ($=\argmin_{\lambda \in \Lambda} \R^{o.o.d.}(\lambda)$). Here, we do not impose the uniqueness of $\argmin_{\lambda \in \Lambda}\R^{I}(\lambda)$ and $\argmin_{\lambda \in \Lambda}\R^{o.o.d.}(\lambda)$ on  $\Lambda$ and therefore,  they are sets, not elements. \\
\end{theo}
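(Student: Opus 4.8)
The goal is to show $\argmin_{\lambda\in\Lambda}\R^{I}(\lambda)\subset\argmin_{\lambda\in\Lambda}\R^{o.o.d.}(p^{*,\lambda}\circ\Phi^{\lambda})$. The plan is to show that $\lambda^{I}$ (from condition (c)), which picks out the true invariant feature $\Phi^{\X_1}$, simultaneously minimizes both $\R^{I}$ and $\R^{o.o.d.}$, and that any other minimizer of $\R^{I}$ must also achieve the same o.o.d.\ value. First I would compute $\R^{o.o.d.}(p^{*,\lambda^{I}}\circ\Phi^{\lambda^{I}})$: since $\Phi^{\lambda^{I}}=\Phi^{\X_1}$ yields the invariant representation, $P(Y^e\mid\Phi^{\X_1}(X^e))$ is the same for all $e\in\E$, and by condition (a) $P_{\Phi^{\X_1}(X^e)}$ is also constant in $e$; hence the joint law of $(\Phi^{\X_1}(X^e),Y^e)$ is the same across $\E$, so $\R^e(p^{*,\lambda^{I}}\circ\Phi^{\X_1})=H(Y^{e^*}\mid\Phi^{\X_1}(X^{e^*}))=\beta$ for every $e\in\E$ (using that the perfectly-learned $p^{*,\lambda}$ is exactly the conditional density, so the cross-entropy risk equals the conditional entropy). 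Thus $\R^{o.o.d.}(p^{*,\lambda^{I}}\circ\Phi^{\lambda^{I}})=\beta$, and moreover $\beta$ is a universal lower bound: any $\Phi^{\lambda}$ is a projection onto a subset of coordinates, so $H(Y^{e^*}\mid\Phi^{\lambda}(X^{e^*}))\ge H(Y^{e^*}\mid X^{e^*})$ — wait, more carefully, I want $\R^{e^*}(p^{*,\lambda}\circ\Phi^{\lambda})=H(Y^{e^*}\mid\Phi^{\lambda}(X^{e^*}))\ge$ the entropy when conditioning on the richest feature; the relevant bound is that conditioning on $\Phi^{\X_1}(X^{e^*})$ already achieves the infimum over features that produce domain-invariant predictors, and in any case $\R^{o.o.d.}(p^{*,\lambda}\circ\Phi^{\lambda})\ge\R^{e^*}(p^{*,\lambda}\circ\Phi^{\lambda})\ge\beta$ needs the (mild) fact that adding the $\X_2$-coordinates cannot reduce the conditional entropy below $\beta$ — this follows because $\Phi^{\X_1}$ is a sub-projection of $\Phi^{\X_1}\cup(\text{extra coords})$ only in one direction, so I should instead argue $\beta=H(Y^{e^*}\mid\Phi^{\X_1}(X^{e^*}))$ and separately handle features that drop $\X_1$-coordinates. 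The cleanest route: show $\R^{I}(\lambda^{I})=\beta$ as well (the max over $\E_{ad}$ of $\R^{(X^e,g(Y^e))}$ with the invariant feature is $\le\beta$ by the data-processing/Jensen inequality $\R^{(X^e,g(Y^e))}(p\circ\Phi)\le\R^{(X^e,Y^e)}(p\circ\Phi)$ applied pointwise, which also follows from Theorem~\ref{thm:HIT-CV} since the correction term is nonnegative), so $\R^{I}(\lambda^{I})=\max\{\le\beta,\beta\}=\beta$.

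Next I would establish the key separation: for any $\lambda$ with $\mathrm{Im}\,\Phi^{\lambda}_2\neq\emptyset$, condition (d) forces $\R^{I}(\lambda)>\beta$. Indeed, by (d) there is $e_{\lambda}\in\E_{ad}$ with $P(g(Y^{e^*})\mid\Phi^{\lambda}(X^{e^*}))\le e^{-\beta}-\e$ holding $P_{X^{e_\lambda},Y^{e_\lambda}}$-a.e. The term $\R^{(X^{e_\lambda},g(Y^{e_\lambda}))}(p^{*,\lambda}\circ\Phi^{\lambda})=\int-\log p^{*,\lambda}(g(y^{e_\lambda})\mid\Phi^{\lambda}(x^{e_\lambda}))\,dP_{X^{e_\lambda},Y^{e_\lambda}}$; here one must be careful that $p^{*,\lambda}$ is the conditional density of $Y^{e^*}\mid\Phi^{\lambda}(X^{e^*})$ evaluated at domain-$e_\lambda$ inputs, and $p^{*,\lambda}(z\mid h):=\sum_{y\in g^{-1}(z)}p^{*,\lambda}(y\mid h)$. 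The a.e.\ bound $p^{*,\lambda}(g(Y^{e_\lambda})\mid\Phi^{\lambda}(X^{e_\lambda}))\le e^{-\beta}-\e$ gives $-\log p^{*,\lambda}(g(Y^{e_\lambda})\mid\Phi^{\lambda}(X^{e_\lambda}))\ge-\log(e^{-\beta}-\e)>\beta$, and integrating preserves this strict inequality, so $\R^{(X^{e_\lambda},g(Y^{e_\lambda}))}(p^{*,\lambda}\circ\Phi^{\lambda})\ge-\log(e^{-\beta}-\e)>\beta$, whence $\R^{I}(\lambda)>\beta=\R^{I}(\lambda^{I})$. Therefore every minimizer $\lambda^*$ of $\R^{I}$ satisfies $\mathrm{Im}\,\Phi^{\lambda^*}_2=\emptyset$, i.e.\ $\Phi^{\lambda^*}$ is a projection onto a subset of the $\X_1$-coordinates.

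Finally I would show any such $\lambda^*$ (with $\mathrm{Im}\,\Phi^{\lambda^*}_2=\emptyset$ and $\R^{I}(\lambda^*)=\beta$) minimizes $\R^{o.o.d.}$. Since $\Phi^{\lambda^*}$ uses only $\X_1$-coordinates, the representation $\Phi^{\lambda^*}(X^e)$ is a function of $\Phi^{\X_1}(X^e)$, hence it too is a domain-invariant feature: $P(Y^e\mid\Phi^{\lambda^*}(X^e))$ is constant in $e$ (push-forward of an invariant conditional), and by (a) the marginal is constant too, so $\R^e(p^{*,\lambda^*}\circ\Phi^{\lambda^*})$ is the same $=H(Y^{e^*}\mid\Phi^{\lambda^*}(X^{e^*}))=:\beta'$ for all $e\in\E$, giving $\R^{o.o.d.}(p^{*,\lambda^*}\circ\Phi^{\lambda^*})=\beta'$. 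It remains to show $\beta'=\beta$, for which I use condition (b): the constraint $\R^{e^*}(p^{*,\lambda^*}\circ\Phi^{\lambda^*})$ appears inside $\R^{I}(\lambda^*)$ as the $e^*$-term $R^{(X^{e^*},Y^{e^*})}(p^{*,\lambda^*}\circ\Phi^{\lambda^*})=\beta'$, and since $\R^{I}(\lambda^*)=\beta$ we get $\beta'\le\beta$; conversely $\beta'\ge\beta$ because dropping $\X_1$-coordinates cannot decrease the conditional entropy below that of the full invariant feature — this is where (b) does its work, ensuring there is no "shortcut" invariant sub-feature with strictly smaller entropy that would still be consistent across all of $\E$; concretely, if $\beta'<\beta$ then $\Phi^{\lambda^*}$ would be a coarser invariant representation, but (b) says every joint law matching $(\Phi^{\X_1}(X^{e^*}),Y^{e^*})$ is realized by some $e\in\E$, forcing the o.o.d.\ worst case over the coarser feature to still be at least $\beta$, i.e.\ $\beta'\ge\beta$. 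Hence $\beta'=\beta=\min_{\lambda}\R^{o.o.d.}(p^{*,\lambda}\circ\Phi^{\lambda})$ (the minimum being attained at $\lambda^{I}$), so $\lambda^*\in\argmin_{\lambda}\R^{o.o.d.}(p^{*,\lambda}\circ\Phi^{\lambda})$, completing the inclusion.

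\textbf{Main obstacle.} The delicate point is the last step — pinning down $\beta'=\beta$ and the exact role of condition (b). One must argue carefully that a projection onto a \emph{strict} subset of the invariant coordinates either (i) still has conditional entropy exactly $\beta$ (in which case it is genuinely o.o.d.-optimal and we are fine), or (ii) has strictly larger entropy $\beta'>\beta$, which would contradict $\R^{I}(\lambda^*)=\beta$ via the $e^*$-term — but ruling out $\beta'<\beta$ requires showing no proper sub-projection can be a \emph{better} invariant feature, and making precise why (b) (a richness/identifiability assumption on $\E$) prevents this. I also need to be careful throughout with the convention that $p^{*,\lambda}$ — learned only on domain $e^*$ — is evaluated on other domains' inputs, and that $\R^{(X^e,g(Y^e))}\le\R^{(X^e,Y^e)}$ pointwise (a consequence of Theorem~\ref{thm:HIT-CV}, since the correction sum is a sum of nonnegative terms), which is what lets the $\E_{ad}$-maxima stay below $\beta$ when the feature is invariant.
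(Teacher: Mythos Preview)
Your overall strategy matches the paper's: (i) show the invariant choice $\lambda^{I}$ achieves the value $\beta=H(Y^{e^*}\mid\Phi^{\X_1}(X^{e^*}))$ for both $\R^I$ and $\R^{o.o.d.}$; (ii) use condition (d) to force every $\lambda$ with $\mathrm{Im}\,\Phi_2^\lambda\neq\emptyset$ to satisfy $\R^I(\lambda)>\beta$; (iii) for a minimizer $\lambda^*$ (hence $\mathrm{Im}\,\Phi_2^{\lambda^*}=\emptyset$) identify $\R^I(\lambda^*)=\R^{o.o.d.}(\lambda^*)$. Steps (ii) and (iii) are essentially correct and correspond to the paper's Lemmas~\ref{Lem:min is no  II} and~\ref{Lem:no  II is H=0}.

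The genuine gap is in step (i), and it is tied to your confusion about the role of condition (b). You assert at the end that $\beta=\min_{\lambda}\R^{o.o.d.}(p^{*,\lambda}\circ\Phi^{\lambda})$, but you never prove $\R^{o.o.d.}(\lambda)\ge\beta$ for $\lambda$ that \emph{retain} $\X_2$-coordinates. Your own attempted route ``$\R^{o.o.d.}(\lambda)\ge\R^{e^*}(\lambda)\ge\beta$'' fails precisely here: adding $\X_2$-coordinates \emph{can} lower the conditional entropy on the single domain $e^*$ (that is exactly what a spurious correlation does), so $\R^{e^*}(\lambda)<\beta$ is entirely possible. To get $\R^{o.o.d.}(\lambda)\ge\beta$ you must exhibit, for each such $\lambda$, some environment in $\E$ on which the risk is at least $\beta$. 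This is what the paper's Lemma~\ref{Lem:min of I} does, and it is where condition (b) is actually used: by the richness of $\E$, for any $(\bar X,\bar Y)\in T_{all}$ the product law $P_{X_1^I,Y^I}\times P_{\bar X_2}$ also lies in $T_{all}$, and on that environment one bounds the cross-entropy from below by $\int-\log p^{*,\lambda^{I}}(Y^I\mid X_1^I)\,dP_{X_1^I,Y^I}=\beta$ via the optimality of the true conditional $p^{*,\lambda^I}$. Without this construction (or an equivalent), your final line ``so $\lambda^*\in\argmin_\lambda\R^{o.o.d.}$'' is unsupported.

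Relatedly, you misplace the job of (b) in the last paragraph: you do \emph{not} need (b) to get $\beta'\ge\beta$. Since $\mathrm{Im}\,\Phi_2^{\lambda^*}=\emptyset$, the map $\Phi^{\lambda^*}$ is a function of $\Phi^{\X_1}$, so $H(Y^{e^*}\mid\Phi^{\lambda^*}(X^{e^*}))\ge H(Y^{e^*}\mid\Phi^{\X_1}(X^{e^*}))=\beta$ by plain monotonicity of conditional entropy under coarsening; no richness assumption is required. Once you relocate (b) to the proof of global o.o.d.-optimality of $\lambda^{I}$ (Lemma~\ref{Lem:min of I}), the rest of your argument goes through and coincides with the paper's chain $\R^I(\hat\lambda)=\R^{o.o.d.}(\hat\lambda)\ge\R^{o.o.d.}(\lambda^I)=\R^I(\lambda^I)\ge\R^I(\hat\lambda)$.
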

\begin{theo}[Effectiveness of Method II]\label{Thm:justfyHITCV2}
Assume that (a), (b), (c) and the following condition (d)' hold:
\begin{itemize}
\item[(d)'] $\forall \lambda$ with $\mathrm{Im} \Phi^{\lambda}_2 \neq \emptyset$,  $\exists e_{\lambda} \in \E_{ad}$ such that $P\Bigl( g(Y^{e^*})   | \Phi^{\lambda}(X^{e^*})  \Bigr) \leq e^{-\beta_{\lambda}} - \e$, $P_{X^{e_\lambda}, Y^{e_\lambda}}$-a.e.
\end{itemize}
\begin{align*}
~~\beta_{\lambda} := H(Y^{e^*} | \Phi^{\X_1}(X^{e^*})) -   & \sum_{z^{\scalebox{0.3}{$\cancel{\hookrightarrow}$}}  \in \Z^{\cancel{\hookrightarrow}}}\Bigl\{ P(g(Y^{e^*}) = z^{\scalebox{0.3}{$\cancel{\hookrightarrow}$}} ) 
\times \int - \log p^{*, \lambda}(Y^{e^*} | & \Phi^{\lambda}(X^{e^*}), g^{-1}(z^{\scalebox{0.3}{$\cancel{\hookrightarrow}$}} ) )
&dP_{(X^{e^*},Y^{e^*}) | Y^{e^*} \in g^{-1}(z^{\scalebox{0.3}{$\cancel{\hookrightarrow}$}} )} \Bigl\}.
\end{align*}
Then, \kf{we have}
$$\argmin_{\lambda \in \Lambda} \R^{II}(\lambda) \subset \argmin_{\lambda \in \Lambda} \R^{o.o.d.}(p^{*, \lambda}  \circ \Phi^{\lambda}).$$ 
%holds.

\end{theo}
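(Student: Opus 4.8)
The plan is to pin down the number $\beta=H(Y^{e^*}\mid\Phi^{\X_1}(X^{e^*}))$ as the common minimum of both the true o.o.d.\ risk $\lambda\mapsto\R^{o.o.d.}(p^{*,\lambda}\circ\Phi^{\lambda})$ and the Method~II surrogate $\R^{II}$, and to show that $\R^{II}$ is minimized only where the feature map is (effectively) the invariant one. First I would analyse $\lambda^{I}$, which exists by (c) with $\Phi^{\lambda^{I}}=\Phi^{\X_1}$: combining the assumed invariance of $P(Y^e\mid\Phi^{\X_1}(X^e))$ with (a) gives that the whole joint law $P_{\Phi^{\X_1}(X^e),Y^e}$ is the same for every $e\in\E$, so $\R^e(p^{*,\lambda^{I}}\circ\Phi^{\X_1})=\beta$ for all $e$, whence $\R^{o.o.d.}(p^{*,\lambda^{I}}\circ\Phi^{\lambda^{I}})=\beta$. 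For $\R^{II}(\lambda^{I})$, the $e$-independence of $P(g(Y^e)=z)$ and of $\R^{(X^e,g(Y^e))}(\cdot)$ makes every term in the defining maximum equal to $\R^{(X^{e^*},g(Y^{e^*}))}(p^{*,\lambda^{I}}\circ\Phi^{\X_1})+\sum_{z}P(g(Y^{e^*})=z)\,c_z$, and by Theorem~\ref{thm:HIT-CV} applied to $(X^{e^*},Y^{e^*})$ this collapses to $\R^{(X^{e^*},Y^{e^*})}(p^{*,\lambda^{I}}\circ\Phi^{\X_1})=\beta$, so $\R^{II}(\lambda^{I})=\beta$.

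Second, I would prove the lower bound $\R^{o.o.d.}(p^{*,\lambda}\circ\Phi^{\lambda})\ge\beta$ for every $\lambda$. Write $\Phi^{\lambda}(X)=(U(X),V(X))$ with $U$ the $\X_1$-part and $V$ the $\X_2$-part. By (a) every domain of $\E$ shares the law of $(\Phi^{\X_1}(X^e),Y^e)$, so by (b) any $(X,Y)$ with $P_{\Phi^{\X_1}(X),Y}=P_{\Phi^{\X_1}(X^{e^*}),Y^{e^*}}$ occurs as some $e\in\E$; in particular I may build an adversarial domain that keeps the law of $(X_{\X_1},Y)$ and sets $V$ to be a measurable near-minimiser of $v\mapsto p^{*,\lambda}(Y\mid U(X),v)$ over the conditional support. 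Since $\inf_v p^{*,\lambda}(y\mid u,v)\le\rE[\,p^{*,\lambda}(y\mid u,V(X^{e^*}))\mid U(X^{e^*})=u\,]=P(Y^{e^*}=y\mid U(X^{e^*})=u)$, Jensen gives $\R^{o.o.d.}(p^{*,\lambda}\circ\Phi^{\lambda})\ge H(Y^{e^*}\mid U(X^{e^*}))\ge H(Y^{e^*}\mid\Phi^{\X_1}(X^{e^*}))=\beta$. Hence $\min_{\lambda}\R^{o.o.d.}(p^{*,\lambda}\circ\Phi^{\lambda})=\beta$, attained at $\lambda^{I}$.

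Third, I would split on $\mathrm{Im}\,\Phi^{\lambda}_2$. If $\mathrm{Im}\,\Phi^{\lambda}_2=\emptyset$, then $\Phi^{\lambda}$ is a sub-projection of $\X_1$, hence invariant by (a); exactly as for $\lambda^{I}$ one gets $\R^{II}(\lambda)=\R^{(X^{e^*},Y^{e^*})}(p^{*,\lambda}\circ\Phi^{\lambda})=H(Y^{e^*}\mid\Phi^{\lambda}(X^{e^*}))=\R^{o.o.d.}(p^{*,\lambda}\circ\Phi^{\lambda})$. If $\mathrm{Im}\,\Phi^{\lambda}_2\neq\emptyset$, condition (d)$'$ supplies $e_{\lambda}\in\E_{ad}$ with $p^{*,\lambda}(g(Y^{e_\lambda})\mid\Phi^{\lambda}(X^{e_\lambda}))\le e^{-\beta_{\lambda}}-\e$ almost surely, so the coarse-label risk on $e_{\lambda}$ is at least $-\log(e^{-\beta_{\lambda}}-\e)>\beta_{\lambda}$; since each correction weight $P(g(Y^{e_\lambda})=z)\,c_z$ is nonnegative ($c_z$ being a conditional entropy within $g^{-1}(z)$) and $\beta_{\lambda}$ is by construction $\beta$ minus precisely the $e^*$-correction occurring in $\R^{II}$, the $e_{\lambda}$-term of $\R^{II}(\lambda)$ exceeds $\beta$, so $\R^{II}(\lambda)>\beta$. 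Combining the three parts: for $\lambda^*\in\argmin_{\lambda}\R^{II}(\lambda)$ we have $\R^{II}(\lambda^*)\le\R^{II}(\lambda^{I})=\beta$, hence $\mathrm{Im}\,\Phi^{\lambda^*}_2=\emptyset$, hence $\R^{o.o.d.}(p^{*,\lambda^*}\circ\Phi^{\lambda^*})=\R^{II}(\lambda^*)\le\beta$, which with the lower bound forces equality with $\beta=\min_{\lambda}\R^{o.o.d.}$; therefore $\lambda^*\in\argmin_{\lambda}\R^{o.o.d.}(p^{*,\lambda}\circ\Phi^{\lambda})$.

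The main obstacle is the spurious case for Method~II, i.e.\ establishing $\R^{II}(\lambda)>\beta$ when $\mathrm{Im}\,\Phi^{\lambda}_2\neq\emptyset$: the correction inside the $e_{\lambda}$-term of $\R^{II}$ is weighted by the $e_{\lambda}$-coarse-label probabilities, whereas $\beta_{\lambda}$ is calibrated by the $e^*$-ones, and reconciling this mismatch is exactly what the slack $\e$ and the precise form of $\beta_{\lambda}$ in (d)$'$ are designed for. This is also where Method~II departs from Method~I: the surrogate $\R^{I}$ in Theorem~\ref{Thm:justfyHITCV} carries no correction term, so the analogous step needs only the stronger separation in (d) with threshold $\beta$, whereas Method~II's tighter surrogate is validated under the weaker separation in (d)$'$ with $\beta_{\lambda}\le\beta$. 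A secondary and more routine nuisance is the measurability and support bookkeeping in the adversarial-domain construction used for the lower bound via (b).
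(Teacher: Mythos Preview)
Your plan mirrors the paper's proof almost exactly: three lemmas (the paper's Lemmas~\ref{Lem:min of I}, \ref{Lem:min is no  II_2}, \ref{Lem:no  II is H=0_2}) assembled into the same sandwich argument. Your adversarial-domain construction for the lower bound $\R^{o.o.d.}(\lambda)\ge\beta$ is a mild variant of the paper's (the paper takes the product measure $P_{X_1^I,Y^I}\times P_{\bar X_2}$ and uses that any $p(\cdot\mid x_1,x_2)$ for fixed $x_2$ is a competitor in the cross-entropy minimization defining $H(Y^I\mid X_1^I)$); both routes land on the same bound.

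The ``main obstacle'' you flag is actually a non-issue, and the resolution is something you already stated in your first paragraph but did not reuse. Under the standing invariance of $P(Y^e\mid\Phi^{\X_1}(X^e))$ together with (a), the joint law $P_{\Phi^{\X_1}(X^e),Y^e}$ is $e$-independent, hence so is the coarse marginal $P(g(Y^e)=z)$. Therefore the weights $P(g(Y^{e_\lambda})=z)$ in the $e_\lambda$-term of $\R^{II}(\lambda)$ are \emph{equal} to the weights $P(g(Y^{e^*})=z)$ appearing in $\beta_\lambda$, so the two correction sums cancel exactly and you get
\[
\text{($e_\lambda$-term of $\R^{II}(\lambda)$)}>\beta_\lambda+\sum_{z^{\scalebox{0.3}{$\cancel{\hookrightarrow}$}}}P(g(Y^{e^*})=z^{\scalebox{0.3}{$\cancel{\hookrightarrow}$}})\,c_{z^{\scalebox{0.3}{$\cancel{\hookrightarrow}$}}}=\beta.
\]
This is precisely how the paper closes Lemma~\ref{Lem:min is no  II_2}. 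Your appeal to ``nonnegativity of the correction weights'' is not sufficient by itself (it would only give $>\beta_\lambda$), and the slack $\e$ is there solely to make the inequality strict, not to bridge any weight mismatch. Once you plug in the marginal-invariance observation, the obstacle disappears and your outline is complete.
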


The assumptions (d) and (d)' mean that %$\E_{ad}$ and $e^*$ have different environment biases by the following reason.
labeling rules on $\E_{ad}$ and $e^*$ are different due to a domain-specific factor ($i.e.$, an $\X_2$-component). (d) and (d)' mean that, if $\lambda$ fails to remove environment factors ($i.e.$, $\mathrm{Im} \Phi^{\lambda}_2 \neq \emptyset$), for some $e_{\lambda} \in \E_{ad}$, 
%$( \Phi^{\lambda}(x), y) \sim P_{( \Phi^{\lambda} (X^{e_{\lambda}}), Y^{e_{\lambda}})} $ satisfies 
either of the following two inequalities $$P(g(Y^{e^*})   = g(y)| \Phi^{\lambda}(X^{e^*})  = \Phi^{\lambda}(x) ) \leq e^{-\beta}  - \e$$
$$P(g(Y^{e^*})   = g(y)| \Phi^{\lambda}(X^{e^*})  = \Phi^{\lambda}(x) ) \leq  e^{-\beta_{\lambda}} - \e$$ 
\kff{hold on a set of $(x,y)$ with probability $1$ w.r.t.~$P_{( \Phi^{\lambda} (X^{e_{\lambda}}),Y^{e_{\lambda}})}$}. Noting that $( \Phi^{\lambda}(X^{e^*}), Y^{e^*})$ attaches a label $z$ to $\Phi^{\lambda}(x)$ such that $P\Bigl( g(Y^{e^*})= z   | \Phi^{\lambda}(X^{e^*}) = \Phi^{\lambda}(x)  \Bigr)$  is large, the inequalities in (d) and (d)' mean  that $Y^{e_\lambda}$ and $Y^{e^*}$ attach different labels to the same $x \in \X$ with high probability due to its $\X_2$-component.

%The conditions (a)-(c) (or (c')) give sufficient conditions that the proposed CV methods provably obtain the same hyperparameter as the intractable exact CV:
%\vspace{-4mm}

%\begin{itemize}[topsep=0pt, partopsep=0pt, itemsep=0pt, parsep=0pt, leftmargin=11pt]
%\item (a) means that, if we can obtain invariant feature $\Phi$, marginal distributions $P_{\Phi(X)}$ unchange across the environments.
%\item (b) means that, among the hyperparameter candidates $\Lambda$, at most one is optimal.
%\item (c) and  (c)' mean $\E_{ad}$ and $e^*$ have different environment biases.  (c) and (c)' mean that, if $\lambda$ fails to remove environment factors ($i.e.$, $\mathrm{Im} \Phi^{\lambda}_2 \neq \emptyset$), for some $e_{\lambda} \in \E_{ad}$, $( \Phi^{\lambda}(x), y) \sim ( \Phi^{\lambda} (X^{e_{\lambda}}), Y^{e_{\lambda}}) $ satisfies the inequality $$P(g(Y^{e^*})   = g(y)| \Phi^{\lambda}(X^{e^*})  = \Phi^{\lambda}(x) ) \leq e^{-\beta}  - \e$$
%or $$P(g(Y^{e^*})   = g(y)| \Phi^{\lambda}(X^{e^*})  = \Phi^{\lambda}(x) ) \leq  e^{-\beta_{\lambda}} - \e$$ with  probability $1$. Noting that $( \Phi^{\lambda}(X^{e^*}), Y^{e^*})$ attaches labels $\hat{z}$ to $\Phi^{\lambda}(x)$ such that $P\Bigl( g(Y^{e^*})= \hat{z}   | \Phi^{\lambda}(X^{e^*}) = \Phi^{\lambda}(x)  \Bigr)$  is large, the inequalities in (c) and (c)' mean  that $Y^{e_\lambda}$ and $Y^{e^*}$ attach different labels to the same $x \in \X$ with high probability by environment biases.
%\end{itemize}

The theoretical analysis shows, while Method I is simpler than Method II, Method II is more applicable than Method I; that is because Method II eases the sufficient condition (d) for Method I to succeed. Noting that $\beta \geq \beta_{\lambda}$ and hence, $e^{-\beta} - \e \leq e^{-\beta_{\lambda}} - \e$, the condition (d)' is milder than (d). The fact implies that we can apply Method II even when
labeling rules on $\E_{ad}$ and $e^*$ due to domain-specific factors are too similar to apply Method I.%environment biases of $\E_{ad}$ and $e^*$ are too similar to apply Method I. 

We also show one of the sufficient  conditions of $e^*$ for there to exist $(X^{e_{\lambda}}, Y^{e_{\lambda}})$ which satisfies the inequalities in (d) or (d)'  for $\forall \lambda$ with $\mathrm{Im} \Phi^{\lambda}_2 \neq \emptyset$  in Appendix \ref{append:condi_(c)(c)'}. 
%=================================================
\section{Related work}\label{sec.Related works}
%=================================================
%\paragraph{Invariance Learning} We can not apply them since our available data of $T$ are only on $e^*$(in details, see  Chapter 1).

\paragraph{Fine-tuning} \kf{The proposed framework uses additional data from multiple domains as well as the training data for the target task so that it may be relevent to} Transfer learning (TL) \cite{Pan,Yang,Yosinski} and meta-learning \cite{Fin,Andrychowicz1}, which realize fast and accurate learning for a new target task based on a model pre-trained with additional data sets or tasks. For example, after initial learning with a large data set, {\em fine tune} \cite{Pan,Yang} re-trains the model with the target task, while {\em frozen feature} \cite{Yosinski} fixes the pre-trained model and tunes a head network.  Although they show advantages in many learning problems, they may not work effectively in the current setting; in the fine-tuning with the target task $(X^{e^*},Y^{e^*} )$, the model tends to learn  spurious correlation in the data set and does not generalize to o.o.d.~domains.  Some fine-tuning methods will be compared with the proposed approach experimentally in Section  \ref{sec.Experiment}.

\vspace{-3mm}
\paragraph{Domain adaptation by deep feature learning } 
Domain adaptation strategies by deep feature learning \cite{Yaroslav, Shai, Gilles,Stojanov,Zhang} assume that we can access input data on a test domain in advance, and try to obtain data representation $\Phi(X^e)$ that follows the same distribution for the training and test domains. While the strategies lead to high predictive performance on a test domain similar to a training domain, such $\Phi$ does not function by discarding environmental factors from $X^e \in \X$ as noted in \cite{Martin}.  Experimental comparisons will be shown in Section \ref{sec.Experiment}.

%The second approaches are domain adaptation strategies by deep feature learning \cite{Yaroslav, Shai, Gilles,Stojanov,Zhang}. Their approches are, assuming that we can assess input data on a test domain in advance, training a data representation $\Phi(X^e)$ that follows the same distribution for training and test domain. While the strategies lead to high predictive performance on a test domain similar to a training domain,  such $\Phi$ can not functions as discarding environment factors from $\X^e \in \X$ as noted in \cite{Martin}. The fact is numerically demonstrated in Section \ref{sec.Experiment}.\\

%========================================
\section{Experiments}\label{sec.Experiment} 
%========================================

  \begin{table*}[t]
  \vspace{-3mm}
  \caption{Average Test Accuracies and SEs of Synthesized Data on $e = -e^*$ (5 runs): {\em Oracle} shows the results of the experiments with the first component. The best scores are \textbf{bolded}.}\label{Table:Syndata}
  \vspace{3mm}
\centering
%\tiny
\resizebox{\textwidth}{!}{
 \scalebox{0.6}{ \begin{tabular}{|c|c|c|c|c|c|c|c|c|c|c|c|} \hline
   &   $e^*=0$ &  $e^*=5$ &  $e^*=10$ & $e^*=15$ & $e^*=20$ &  $e^*=25$ &  $e^*=30$ & $e^*=35$  &$e^*=40$ &  $e^*=45$ & $e^*=50$\\ \hline 
   Oracle& \multicolumn{11}{|c|}{906	(.007) } \\ \hline  \hline 
    ERM &  .789	(.218) &  .791 (.174) &.637 (.188) &.329 (.201) & .324 (.328) &  .311 (.260) &  .159	(.193) &  .140 (.171)  &.132 (.161) &  .166	(.147) &  .051 (.101)\\ \hline 
     %CB-ERM& .868	(.043) &  .849 (.101)	  &.741 (.159)  & .690(.141) & .591 (.138)&  $.651 (.118)$ & .613	(.150)  &  .539	(.096) &.565(..017)&  .600	(.035) &  .689	(.177) \\  \hline 
    FT& \textbf{.899 (.000)} &  \textbf{.863 (.001)}	&.575 (.002) &.568 (.001) & .673 (.103) &  .583 (.088) &  .402	(.004) &  .350 (.001)  &.003 (.000) &  .000 (.000) &  .000 (.000) \\\hline 
     FE & \textbf{.899 (.000)} &  .861 (.002)  & .540 (.102) & .568 (.001)&  .673 (.102) &.628 (.001)  &  .401	(.001)  &.351 (.002) &  .066 (.132)	  &.000	(.000) &  .000 (.000)\\ \hline
    DSAN & .684	(.008) &  .367 (.016)	 &.195 (.015) &.112 (.008)& .045 (.008) &  .013 (.003)& .006	(.001)  &.001(.001)   &.000 (.000) &  000	(.000) &  .000	(.000)\\\hline \hline
     Ours + Our CV I& .799	(.232) &  .784 (.231)	  &\textbf{.884 (.021) } & \textbf{.875 (.044)} &\textbf{ .815 (.098)}&  $\textbf{.738 (.209)}$ & \textbf{ .865 (.047)} &  \textbf{.659	(.233)  }&\textbf{.666 (.285)} &  \textbf{.776 (.080)} & \textbf{ .699 (.255) } \\ \hline 
     Ours + Our CV II& .799	(.232) &  .783 (.231)	  &\textbf{.884 (.021)}  & \textbf{.875 (.044) }&\textbf{ .815 (.098) }&  $\textbf{.738 (.209)}$ & \textbf{ .865	(.047)} & \textbf{.659 (.233)  }&.563 (.291) &\textbf{  .776	(.080)  }&\textbf{ .699	(.255) }\\  \hline \hline
     Ours + Tr-CV& .790	(.230) &  .776 (.225)	  &.609 (.163)  & .491 (.095) & .366 (.147)&  .248 (.192) & .376	(.033) &  .215 (.168)  &.148 (.127) &  .189	(.108) &  .031 (.138)\\ \hline 
     Ours + LOD-CV& .662	(.180) &  .521 (.145)	  &.569 (.204)  & .538 (.168) & .450 (.158)&  .371 (.213) & .641	(.221) &  .571	(.221)  &.380 (.196) &  .423	(.218) &   .316	(.127)\\ \hline \hline
     Ours + TDV& .915	(.005) &  .905 (.006)	  &.896 (.002)  & .895 (.010) & .848 (.059)&  .849  (.069) & .887	(.030) &  .764	(.152)  &.796 (.174) &  .848	(.055) &  .775	(.179)\\ 
     \hline
     
  \end{tabular}}}\
 \vspace{-3mm}
\end{table*}

  \begin{table*}[t]
  \caption{Comparison of Two CV methods. Average Test Accuracies and SEs of the estimates (10runs). Best scores are \textbf{bolded}. }\label{Table:Syndata2}
  \vspace{3mm}
\centering
%\tiny
\resizebox{\textwidth}{!}{
 \scalebox{0.6}{ \begin{tabular}{|c|c|c|c|c|c|c|c|c|c|c|c|} \hline
   &   $e_{ad}=-9$ &  $e_{ad}=-8$ &  $e_{ad}=-7$ & $e_{ad}=-6$ & $e_{ad}=-5$ &  $e_{ad}=-4$ &  $e_{ad}=-3$ & $e_{ad}=-2$  &$e_{ad}=-1$ &  $e_{ad}=0$ & $e_{ad}=1$\\ \hline 
   TDV   &  .596	(.078) &  .621	(.046) &.630 (.041) &.595 (.061) & .590 (.087) &  .621 (.059) &  .564 (.071) &  .582 (.056)  &.535 (.093) &  .520	(.121) &  .575 (.107)\\ \hline  \hline
    CV I & \textbf{.529	(.128)} &  .555	(.111) &.562 (.086) &.566 (.109) & .375 (.145) &  .346 (.172) &  .372 (.176) &  .358 (.167)  &.300 (.146) &  .173 (.143) &  .218	(.087)\\ \hline 
     CV II & .527 (.152) & \textbf{ .573 (.089)	 } &\textbf{.565 (.085) } &\textbf{ .572 (.072)} &\textbf{ .522 (.110)}&  $\textbf{.523 (.102})$ & \textbf{ .482 (.113)}  & \textbf{ .506 (.153) }&\textbf{.430 (.146)}& \textbf{ .437 (.157)} & \textbf{ .502 (.149) }\\  \hline 
      \end{tabular}}}\
 \vspace{-1mm}
\end{table*}

We study the effectiveness of the proposed framework and CV methods through experiments, comparing them with several conventional methods: empirical risk minimization (ERM), fine-tuning methods, and deep domain adaptation strategies. As fine-turning methods, we employ two typical types of transfer learning: {\em{fine tune}} (FT) and {\em{frozen feature}} (FF) \cite{Pan,Yang,Yosinski}.  As a deep domain adaptation  technique, we adopt the %up-to-date 
state-of-the-art method $DSAN$ \cite{Stojanov}.  We also compare our two CV methods (CVI and CVII) with conventional CV methods: training-domain validation (Tr-CV) and leave-one-domain-out cross-validation (LOD-CV) \cite{Ishaan}.  \kf{We have two hyperparameters to be selected by CV.}  In the training with \eqref{eq:completeobject}, we set $\lambda
=1$ when the training epoch is less than a certain threshold $t$, and $\lambda=\lambda_{after}$ if the epoch is larger than $t$. From a set of candidates, each of the CV methods selects a pair $(t,\lambda_{after})$. To know the best possible performance among the candidates, we also apply the  test-domain validation (TDV) \cite{Ishaan}, which  selects the hyperparameters with the unseen test domain. Note that TDV is not applicable in practical situations.  The details on the experiments can be found in Appendix \ref{app:add_exp_detal}. 

%that is, \begin{equation*} 
%\lambda :=
%\vspace{-10mm}
%    \begin{cases}
%        1  &   \text{if stepsize$ \leq t$},  \\
%        \lambda_{after}        &   \text{othrewise}.
%    \end{cases}
%\end{equation*} 
 \vspace{-2mm}
\paragraph{Synthesized Data 1} We compared the proposed method with the other approaches using synthesized data with $\X = \bR^2$, $\Y = [3]$ and  $\Z := [2]$.  We used distributions $N_0:=  \mathcal{N} (0, 10^2) \times \mathcal{N} (e, 10^2)$, $N_1:= \mathcal{N} (30, 10^2) \times \mathcal{N} (-4e, 10^2)$ and $N_2:=\mathcal{N}(-30, 10^2) \times \mathcal{N} (-e, 10^2)$, where $\N (a, b)$ denotes a normal distribution with its (mean, variance) $=(a, b)$.  Given $x\sim N_i$, the task is to predict $N_i$ among $i=0,1,2$.  The aim of IL is to ignore the second component of $x$, as it works as an environmental factor.   Given $e^* \in \bN_{\geq 0}$ ranging from $0$ to $50$, each experiment draws $\D^{e^*} \sim P_{X^{e^*}, Y^{e^*} }$ with its sample size $n^{e^*}=2000$, and then predicts $Y^{-e^*}$ from $X^{-e^*}$. Setting $g$ by $g(0)=0$ and $g(1) = g(2)=1$, we draw $\D^e_{ad} \sim P_{X^e, Z^e }$  from $\E_{ad}=\{-100, -50,0,50,100 \}$  with its sample size $n^e=1000$ ($\forall e \in \E_{ad}$). We model $\Phi$ by a $3$-layer neural net. Setting the maximum epoch $500$, we select $(t, \lambda_{after})$ from $3 \times 5$ candidates with $t \in \{0,100,200\}
$ and $\lambda_{after} \in \{10^0,10^1,...,10^4\}$ by each of the CV methods.  Table \ref{Table:Syndata} shows the test accuracy of the estimates for $e = -e^*$ over 2000 random samples $(x, y) \sim P_{X^{-e^*}, Y^{-e^*}}$. When $e^*=0$ and $5$, %the difference of environmental biases 
The environmental bias of  training ($e^*$) are similar to the one of test $(-e^*)$, and hence, the fine-tuning methods yield high performances, which may use spurious correlation. As $e^*$ increases, the difference  between the training ($e^*$) and test $(-e^*)$ distributions becomes larger, and the previous methods fail to achieve high accuracy.  The proposed methods (Ours) keep higher performance than the others even for large $e^*$.  Among the CV methods, our two methods (CVI, CVII) significantly outperform the others for larger $e^*$.  For this data set, CVI and CVII show almost the same performance.  
 \vspace{-3mm}
 
\paragraph{Synthesized Data 2}  To highlight the difference of the proposed CVI and CVII, we compare them regarding the difference between domains $\E_{ad}$ and $e^*$. %necessary for the success of each CV method. 
We used synthesized data with $\X = \bR^2$, $\Y = [10]$ and $\Z := [2]$,  preparing ten distributions $\{N_i \}_{i=1}^{10}$ on $\bR^2$, which include an environmental bias in the second component depending $e \in \bZ$ (see Appendix \ref{subsc:exp.rep.of.Syn} for explicit representations of $\{N_i \}_{i=1}^{10}$). The task is to predict $N_i$ ($i=1,\ldots,10$) for $x \sim N_i$.  Setting  $e^*:=20$ with $n^{e^*}=60000$, the test task is to predict the label for domain $e=-20$.  Regarding the task with label of higher level, we use $g(y)=0$ if $y$ is odd and $g(y)=0$ if $y$ is even.   We draw $\D^e_{ad} \sim P_{ X^e, Z^e }$  ($n^e=20000$) from $\E_{ad}=\{e_{ad}, 40 \}$, where $e_{ad}$ ranges from $-9$ to $1$.  As $e_{ad}$ increases, the domains $\E_{ad}$ approach to $e^*$, and thus 
%the effect of spurious correlation is stronger.  
the labeling rule on $\E_{ad}$ becomes similar to one of $e^*$.
The model $\Phi$ is a $3$-layer neural net.  We set the maximum epoch $500$, and select the hyperparameter $(t, \lambda_{after})$ from $4$ candidates with $t \in \{0\}$ and $\lambda_{after} \in \{0,0.001,80,100\}$  by each CV method. Table \ref{Table:Syndata2}
shows the test accuracy of the estimates for $e = -e^*$ over 2000 random samples $(x, y) \sim  P_{X^{-e^*}, Y^{-e^*}}$. The results show that, while CVI fails to select optimal hyperparameters as $e$ increases, CVII keeps higher performance, which accords with the theoretical implication in Section \ref{subsec:theo}.
 
\vspace{-3mm}

 \begin{table}[t]
\vspace{-8mm}
\vskip 0.2in
\begin{center}
 \caption{Average Test Accuracies and SEs of  Hierarchical Colored MNIST (5runs). TDV selects $\lambda$ which yields the highest performance on $e=0.9$. Best scores are \textbf{bolded}.}
 \label{Fig:acc Colored MNIST_tbl}
 \vspace{3mm}
 \renewcommand{\arraystretch}{1.1}
 \begin{tabular*}{8.3cm}{@{\extracolsep{\fill}}|c|c|c|} \hline
     & \small  $e=0.1~~~~~~$ &   \small  $e=0.9~~~~~~$  \\ \hline \hline
      \small   Best possible&  \multicolumn{2}{c|}{ \small .800 } \\ \hline \hline
   
  \small   Oracle (grayscale)&  \multicolumn{2}{c|}{ \small .780(.002) } \\ \hline
     \small ERM&\small .796 (.000) &\small .177 (.006) \\
   %\small CB-ERM&\small $\textbf{.800(.001)}~~$ &\small $.350(.003)~~$ \\
    \small FT&\small$\textbf{ .800 (.001)}~~$ &\small .201 (.004) \\
   \small FE&\small .796 (.001) &\small .200 (.007) \\
    \small DSAN&\small .789 (.004) &\small .091 (.005) \\\hline\hline
    \small Ours +Our CV I&\small .773 (.003) &\small .644 (.011)\\ 
    \small Ours +Our CV II&\small .745 (.008) &\small $\textbf{ .707 (.012) }~~$ \\  \hline \hline
    \small Ours +Tr-CV&\small .794 (.004)  &\small .541 (.007) \\
    \small Ours +LOD CV&\small .338 (.048) &\small .334 (.029)\\ \hline \hline
    \small Ours +TDV&\small .738 (.018) &\small .732 (.008)\\\hline
  \end{tabular*}
  \vspace{2mm}
 \end{center}
\vskip -0.2in
%\vspace{-3mm}
\end{table}

\begin{table}[t]
\vspace{-6mm}
\vskip 0.2in
\begin{center}
 \caption{Means and SEs of $\{$({\text Accuracy of TDV on $e=0.9$}) -({\text  Accuracy of Each CV on $e=0.9$}) $\}$ (5runs). Lowest errors are \textbf{bolded}. }
 \label{Fig:acc Colored MNIST_CVdiff_tbl}
 \vspace{3mm}
 \renewcommand{\arraystretch}{1.}
 \begin{tabular*}{7.6cm}{@{\extracolsep{\fill}}|c|c|c|c|} \hline
      \footnotesize  CVI &   \footnotesize  CVII & \footnotesize Tr-CV &   \footnotesize LOD-CV  \\ \hline \hline
     \footnotesize  .088 (.004)  &\footnotesize $\textbf{.025 (.006)}$ & \footnotesize  .191 (.019)  &\footnotesize  .398 (.025)  \\ \hline
  \end{tabular*}
  \vspace{2mm}
 \end{center}
\vskip -0.2in
%\vspace{-1mm}
\end{table}

\paragraph{Hierarchical Colored MNIST} We apply our framework to $Hierarchical$ $Colored$ $MNIST$, which is an extended version of Colored MNIST \cite{Martin} with $\Y = [3]$ and  $\Z := [2]$.  We aim to predict $Y^e$ from digit image data $X^e$, which is in the three categories $0-2$ ($y =0$),  $3$ or $4$ ($y =1$) and $5-9$ ($y =2$).  %We  obtain the final labels  $\hat{y}$ by flipping $y$ with $20 \%$. 
The label is changed \kf{randomly to one of the rest} with a probability of $20\%$, which is denoted by  $\hat{y}$. 
%Which of rest two labels the label is changed to is decided randomly. 
The environment $e$ controls the color of the digit; for $\hat{y}$= $0,1$, the digit is colored in red with probability $e$ and for $\hat{y}$= $2$ colored in green with probability $1-e$.  In the experiment, $\D^{e^*} \sim P_{X^{0.1}, Y^{0.1} }$ is drawn with sample size $n^{e^*} =5000$, and $Y^e$ is predicted based on $X^e$ for $e=0.1$ and $0.9$. Regarding $Z^e$, we consider the task where we predict $z=0$ for $X^e$ in $0-2$ and $z=1$ for $3-9$ (that is, $g(0)=0$ and $g(1)=g(2)=1$). We obtain the final label $\hat{z}$ by flipping $z$ with $20 \%$.  As the environment factor, we color the digit red for $\hat{z}$= $0$ with probability $e$ and green for $\hat{z}=1$ with probability $1-e$.  We set $\E_{ad} = \{ 0.1,0.3,0.5,0.7,0.9 \}$ with   $n^e=5000$ for $\forall e \in \E_{ad}$. We model $\Phi$ by a $3$-layer neural net. With the maximum epoch $500$, we select $(t, \lambda_{after})$ from $3 \times 10$ candidates with $t \in \{0,100,200\}, \lambda_{after} \in \{ 10^0, 10^1,...,10^9\}$ by each CV method. Table \ref{Fig:acc Colored MNIST_tbl} shows test accuracies for 2000 random samples in the environment $e=0.1$ and $e=0.9$.  The results, together with Appendix \ref{append:additionalMNIST}, demonstrate that the proposed methods significantly outperform the others for $e=0.9$.  Among the two proposed methods, CV II yields the higher test accuracy. Table \ref{Fig:acc Colored MNIST_CVdiff_tbl} shows the difference between accuracies by TDV and each CV for the same data set with $e=0.9$.  The results, together with Appendix \ref{append:additionalMNIST}, verify that CVII selects preferable hyperparameters with smaller errors.

\begin{figure*}
\vspace{-8mm}
\vskip 0.2in
\begin{center}
\caption{Visualization of Bird recognition problem}\label{fig:birds-vis}
\vspace{3mm}
\includegraphics[width=.6\columnwidth]{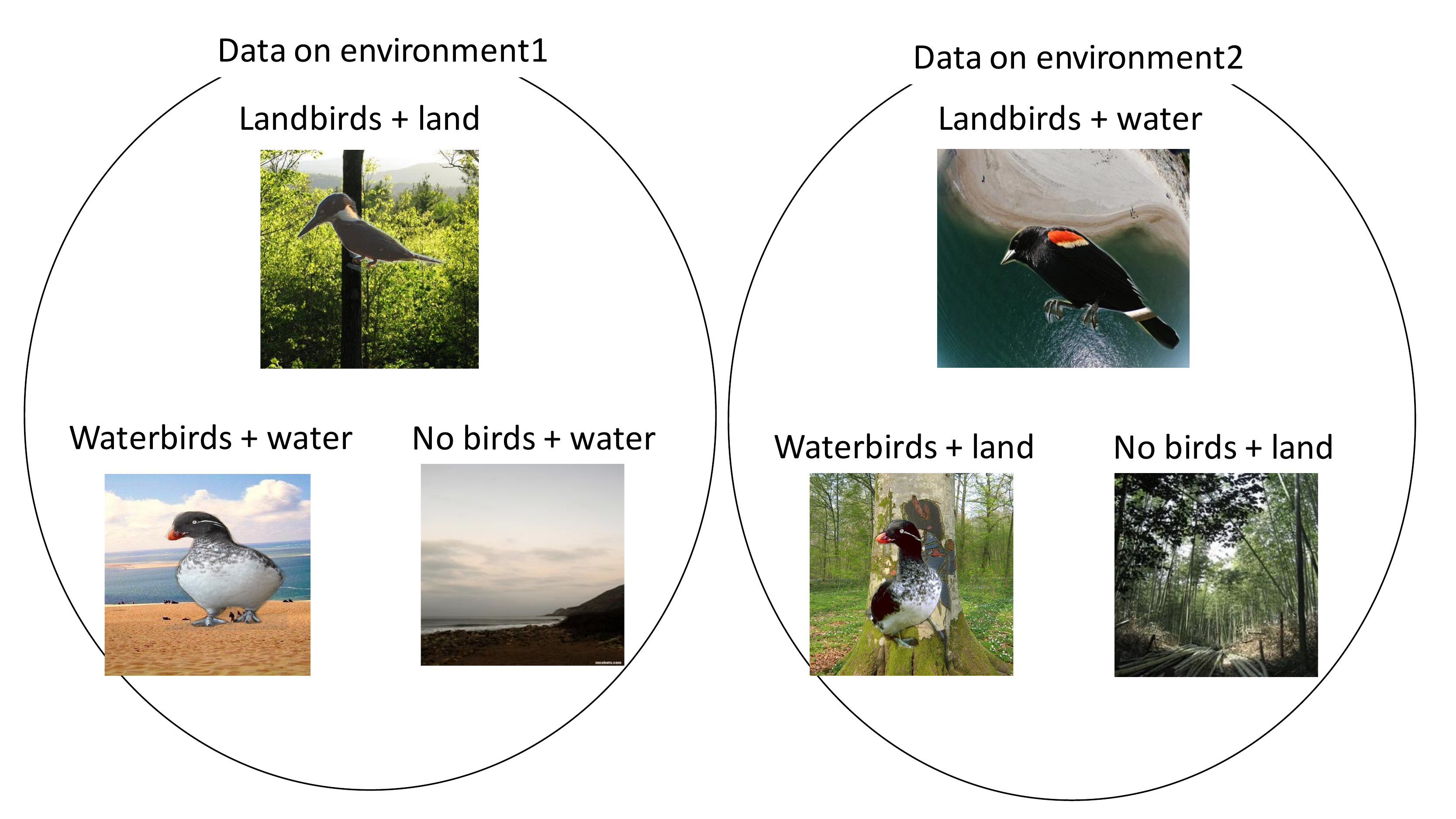}
 \vspace{-4mm}
  \vspace{2mm}
 \end{center}
\vskip -0.2in
\vspace{10mm}
\end{figure*}

\begin{table}[t]
\vspace{-6mm}
\vskip 0.2in
\begin{center}
\caption{Average Test Accuracies and SEs of Bird recognition problem (5 runs). {\em Oracle} shows a result of ERM with samples from both $e_1$ and $e_2$ given. TDV selects $\lambda$ which yields the highest performance on $e_2$. Best scores are \textbf{bolded}.}
 \vspace{3mm}
 \renewcommand{\arraystretch}{1.2}
 \begin{tabular*}{8.5cm}{@{\extracolsep{\fill}}|c|c|c|} \hline
     & \small Test Acc. on $e_1$ &   \small Test Acc. on $e_2$  \\ \hline \hline
   
  \small   Oracle&  \multicolumn{2}{c|}{ \small .875  (.018) } \\ \hline
     \small ERM&\small .902 (.008) &\small .317 (.044) \\
   %\small CB-ERM&\small .904(.112) &\small .465(.008) \\
    \small FT&\small .909 (.012) &\small .364 (.028) \\
   \small FE&\small .767 (.024) &\small .052 (.013) \\ \hline \hline
   \small Ours +Our CV I&\small .897 (.020)  &\small \textbf{ .727 (.062) }\\ 
    \small Ours +Our CV II&\small .897 (.020)  &\small \textbf{ .727 (.062) }\\  \hline  \hline
    \small Ours +Tr-CV&\small .919 (.006)  &\small .651 (.031)\\ 
     \small Ours +LOD CV&\small  .338 (.048) &\small  .334 (.029) \\ \hline \hline
    \small Ours +TDV&\small .886 (.035)  &\small .782 (.020)\\\hline
  \end{tabular*}

  \vspace{2mm}
 \end{center}
\vskip -0.2in
%\vspace{-1mm}
\end{table}

\vspace{-2mm}
\paragraph{Bird recognition} Our method is applied to the Bird recognition problem \cite{S.Sagawa}, which aims to predict three labels $Y^e$ of images $X^e$: $waterbird$ ($Y^e$= 0), $landbird$  ($Y^e$= 1) and  $no$ $bird$  ($Y^e$= 2). The dataset is made by combining background images from the Places dataset \cite{Zhou} and bird images from the CUB dataset \cite{Wah}  in two different ways $\E := \{e_1, e_2\}$.  
%In environment 
\kf{In domain} $e_1$, we prepare three types of image: landbird image with land background, waterbird image  with water background, and no bird with land background (Figure \ref{fig:birds-vis}, left).  In %environment 
\kf{domain} $e_2$, we have landbird images with water background, waterbird images with land background, and no bird with water background (Figure \ref{fig:birds-vis}, right).  For the sample of the target task, we used the \kf{domain} $e^*=e_1$ and generated $n^{e^*} =8649$ data $\D^{e^*} \sim P_{X^{e_1}, Y^{e_1}}$.  The sample in the higher level $\D^e_{ad}$ of $( X^e, Z^e )$, whose label is {\em landbird} ($Z^e= 0$) and  {\em no landbird} ($Z^e= 1$) ($i.e.$, $g(1)=0$ and $g(0)=g(2)=1$), is drawn from both $e_1$ and $e_2$ with $n^{e_1}=n^{e_2}=8649$. Here, we use $\D^{e^*}$ as $\D^{e_1}_{ad}$ with labels of $\D^{e^*}$ re-annotated by $g$. We made a predictor of $Y^e$ based on $X^e$, and evaluated the test accuracy in the two %environments 
\kf{domains} $e=e_1, e_2$. We model $\Phi$ by ResNet50 \cite{K. He}. Setting the maximum epoch $5$, we select $(t, \lambda_{after})$ from $5 \times 5$ candidates with $t \in [5], \lambda_{after} \in \{10^0, 10^1,...,10^4\}$ by each CV method.  Table \ref{fig:birds-vis} shows test accuracies with $2162$ random samples for $e_1$ and $e_2$.  We can see that the proposed framework together with CV methods succeeded in capturing the predictor invariant to the change of background, while the other methods failed. \kf{ERM and FT show much higher accuracy for $e_1$ than Oracle and worst results for $e_2$, which implies that these methods learn spurious correlation in $\D^{e^*}$.}  

%.\footnote{Original Waterbirds is a composite dataset, which has two labels --waterbirds and landbirds--, that combines bird images from the CUB dataset (Welinder et al., 2010) with background images from the Places dataset (Zhou et al., 2017).For the  details in the original problem, see \cite{S.Sagawa}.} 

%\vspace{-1mm}
%==============================================
\section{Conclusion}\label{sec.CR}
%==============================================
%\vspace{-1mm}
%We propose a new learning framework --pseudo-invariance learning.  We also propose two CV methods with showing theoretical effectiveness. Our  theoretical framework enables us to define the notion of $how$ $good$ our CV methods are, and hence, the notion of $improving$ our CV methods($e.g.$, methods which attain $\beta^* \leq \beta_{\lambda}$ in (c)' are $better$ than CV II.), which will open new doors to hyperparameter and model selections of invariance learnings.

\kf{
We have proposed a new framework of invariance learning --  assuming the availability of datasets for another task in higher label hierarchy, we obtain an invariant predictor for the target classification task using training data in a {\em single} domain with the help of the multiple data sets for the higher task.  This framework mitigates the difficulty of annotating many labels for the target task.  Additionally, we have proposed two CV methods for hyperparameter selection, which has been an outstanding problem of previous methods for invariant learning. Theoretical analysis has revealed that our methods select hyperparameter correctly under some settings.  The experimental results on synthesized and object recognition tasks have demonstrated the effectiveness of the proposed framework and CV methods.
}

\section*{Acknowledgements}
The research is supported by the Research Fellowships of Japan Society for the Promotion of Science for Young Scientists (Project number: 20J21396) and  JST CREST (Project number: JPMJCR2015).

\newpage
\onecolumn
%%%%%%%%%%%%%%%%%%%%%%%%%%%%%%%%%%%%%%%%%%%%%%%%%%%%%%%%%%%%%%%%%%%%%%%%%%%%%%%
%%%%%%%%%%%%%%%%%%%%%%%%%%%%%%%%%%%%%%%%%%%%%%%%%%%%%%%%%%%%%%%%%%%%%%%%%%%%%%%
%\icmltitle{Supplementary Material for \\
%``Invariance Learning based on Label Hierarchy"}
%%%%%%%%%%%%%%%%%%%%%%%%%%%%%%%%%%%%%%%%%%%%%%%%%%%%%%%%%%%%%%%%%%%%%%%%%%%%%%%
%%%%%%%%%%%%%%%%%%%%%%%%%%%%%%%%%%%%%%%%%%%%%%%%%%%%%%%%%%%%%%%%%%%%%%%%%%%%%%%
\allowdisplaybreaks[1]
\appendix
%\section{Procedure of CV method I}\label{append:CV1}
%\begin{table*}[th]

\section{Proof of Theorem \ref{thm:HIT-CV}}
\label{append.proof_diff}

\begin{align}
\R^{(X,Y)}( p_{\theta} \circ \Phi ) -  \R^{(X,g(Y))}( p_{\theta} \circ \Phi  )   &= \int -  \log p_{\theta}(Y  | \Phi(X))  dP_{Y , \Phi(X)}  + \int     \log p_{\theta}(g(Y) | \Phi(X))   dP_{g(Y) ,  \Phi(X)} \notag \\
&= - \int  \log   \frac{ p_{\theta}(Y  | \Phi(X)) }{p_{\theta}(g(Y) | \Phi(X)) } dP_{(Y, \Phi(X))}  \notag\\   
&= - \int dP_{g(Y)} \int  \log   \frac{ p_{\theta}(Y | \Phi(X)) }{p_{\theta}(g(Y) | \Phi(X)) } dP_{(Y, \Phi(X)) | g(Y) }   \label{eq:ThmHIT-CV1}  \\   \notag
\end{align}
By the definition of  $p_{\theta}(y | \Phi(x), %Y=g^{-1}(z)))$ 
Y\in g^{-1}(z)))$ in Theorem \ref{thm:HIT-CV},  $\frac{ p_{\theta}(y | \Phi(x)) }{p_{\theta}(g(y) | \Phi(x)) }  =  p_{\theta}(y  | \Phi(x), Y \in  g^{-1}(z))$ holds, where $z=g(y)$. Therefore, we obtain 

\begin{align}
\mathrm{(\ref{eq:ThmHIT-CV1})}& = - \int dP_{g(Y)} \int  \log   \frac{ p_{\theta}(Y  | \Phi(X)) }{p_{\theta}(g(Y) | \Phi(X)) } dP_{(Y, \Phi(X)) | g(Y) }     \notag \\ 
&= - \int dP_{g(Y)} \int  \log  p_{\theta}(Y  | \Phi(X), Y=g(Y) ) dP_{(Y, \Phi(X)) | g(Y) } 
  \label{eq:ThmHIT-CV2}  \\   
&=   - \sum_{z \in \Z } P(g(Y) = z)  \int  \log  p_{\theta}(Y  | \Phi(X), 
%Y=g(z) ) dP_{(Y, \Phi(X)) | g(Y)=z } \notag  \\
Y\in g^{-1}(z) ) dP_{(Y, \Phi(X)) | g(Y)=z } \notag  \\
&=   - \sum_{z^{\scalebox{0.5}{$\cancel{\hookrightarrow}$}} \in \Z^{\cancel{\hookrightarrow}} } P(g(Y) = z^{\scalebox{0.5}{$\cancel{\hookrightarrow}$}})  \int  \log  p_{\theta}(Y  | \Phi(X), 
%Y=g
Y\in g^{-1}(z^{\scalebox{0.5}{$\cancel{\hookrightarrow}$}}) ) dP_{(Y, \Phi(X)) | g(Y)=z^{\scalebox{0.5}{$\cancel{\hookrightarrow}$}} } \notag  \\
&~~~~~~~~~~~~~~~~~~~~~~~~~~~~~~~~~~~~~+  \sum_{z^{\scalebox{0.5}{$\hookrightarrow$}} \notin \Z^{\cancel{\hookrightarrow}} } P(g(Y) = z^{\scalebox{0.5}{$\hookrightarrow$}})  \int  \log  p_{\theta}(Y  | \Phi(X), 
%Y=g
Y\in g^{-1}(z^{\scalebox{0.5}{$\hookrightarrow$}}) ) dP_{(Y, \Phi(X)) | g(Y)=z^{\scalebox{0.5}{$\hookrightarrow$}} }. \label{eq:ThmHIT-CV3} \\   \notag
\end{align}
Noting that, for $ \forall z^{\scalebox{0.5}{$\hookrightarrow$}} \notin \Z^{\cancel{\hookrightarrow}}$ and $y :=g^{-1} (z^{\scalebox{0.5}{$\hookrightarrow$}} )$\footnote{$z^{\scalebox{0.5}{$\hookrightarrow$}} \notin \Z^{\cancel{\hookrightarrow}}$ implies that $| g^{-1} (z^{\scalebox{0.5}{$\hookrightarrow$}}) | =1 $ and therefore, $g^{-1} (z^{\scalebox{0.5}{$\hookrightarrow$}} )$ is determined uniquely. Note that there is no chance that $| g^{-1} (z^{\scalebox{0.5}{$\hookrightarrow$}}) | =0 $ by the surjectivity of $g$.},  $p_{\theta}(y| \Phi(x), Y \in g^{-1}(z^{\scalebox{0.5}{$\hookrightarrow$}})))=1$ holds, we can see that  $\log p_{\theta}(y, | \Phi(x), Y=g^{-1}(z^{\scalebox{0.5}{$\hookrightarrow$}}))) = 0$. It leads us to  the equality  $\sum_{z^{\scalebox{0.5}{$\hookrightarrow$}} \notin \Z^{\cancel{\hookrightarrow}} } P(g(Y) = z^{\scalebox{0.5}{$\hookrightarrow$}})  \int  \log  p_{\theta}(Y  | \Phi(X), Y \in g(z^{\scalebox{0.5}{$\hookrightarrow$}}) ) dP_{(Y, \Phi(X)) | g(Y)=z^{\scalebox{0.5}{$\hookrightarrow$}} } = 0$, which concludes the proof. $\Box$

 \section{Proof of Theorem \ref{Thm:justfyHITCV}}\label{append:CV-theo}
 %We prove Theorem \ref{Thm:justfyHITCV} by separating it as the correctness %effectiveness 
 %of Method I and II.
 %\subsection{Effectiveness of Our CV method I}
 We restate Theorem \ref{Thm:justfyHITCV} with some notation arrangements.
   \begin{theo}\label{Thm:justfyHITCV1:re}
Let $\X:= \X_1 \times\X_2$ where $\X_1 := \bR^{n_1}$ and $\X_2 := \bR^{n_2}$ with $n_1,n_2 \in \bN$. For any random variable $X$  on $\X$, $X_1$ and $X_2$ denote its $\X_1$- and $\X_2$-component of $X$, respectively. Assume that, for  $\lambda \in \Lambda$, there corresponds a projection $\Phi^{\lambda}: \X \rightarrow \bR^{n_{\lambda}}$ ($n_{\lambda} \leq n_1+ n_2$). $\Phi_1^{\lambda}$ and  $\Phi_2^{\lambda}$ denote  an $\X_1$ and $\X_2$ components of $\Phi^{\lambda}$. For $i =1,2$,  if $\Phi^{\lambda}$ does not have  an $\X_i$-component, we denote $\mathrm{Im} \Phi^{\lambda}_i = \emptyset$.  $\Phi(X)$, $\Phi^{\lambda}_1(X)$ and $\Phi^{\lambda}_2(X)$ are abbreviated by $X^{\lambda}$, $X_1^{\lambda}$ and $X_2^{\lambda}$, respectively. Fixing a random variable $(X^I_1, Y^I)$ on $\X_1 \times \Y$, set 
$$T_{all} := \left\{ (X, Y): ~random~variable ~on ~ \X \times \Y \left| P_{Y,X_1} = P_{Y^I, X^I_1} \right. \right\}.$$
Fix $(X^*, Y^* ) \in T_{all}$ and $T_{ad} \subset T_{all}$. For $\lambda \in \Lambda$, $p^{*, \lambda}: \bR^{n_{\lambda}} \rightarrow \P_{\Y}$ denotes the conditional probability density function of $P(Y^*| \Phi^{\lambda}(X^*))= P(Y^* | X^{*, \lambda})$. For $\lambda \in \Lambda$, define $\R^{o.o.d.}(\lambda)$ and $\R^{I}(\lambda)$ by
\begin{align*}
&\R^{o.o.d.}(\lambda) := \max_{(X,Y) \in T_{all}} \R^{(X,Y)}(p^{*, \lambda}  \circ \Phi^{\lambda}),\\
&\R^{I}(\lambda) := \max \Biggl\{   \max_{(X,Y) \in T_{ad}}   \R^{(X,g(Y))}(p^{*, \lambda}  \circ \Phi^{\lambda}) ,  \R^{(X^*,Y^*)}(p^{*, \lambda}  \circ \Phi^{\lambda}) \Biggr\}
\end{align*}\label{eq.HOOD}
respectively. Assume that the following two conditions hold:
\vspace{-3mm}
\begin{itemize}
\item[(I)]  $\exists \lambda^{I} \in \Lambda$ s.t. $\Phi^{\lambda^{I}}=  \Phi^{\X_1}$.
\item[(II)]  For sufficiently small $\e \ll 1$, the following statement holds:\\
$\forall \lambda$ with $\mathrm{Im} \Phi^{\lambda}_2 \neq \emptyset$, there exists $(X^{e_\lambda}, Y^{e_\lambda})$ such that 
 $P(g(Y^*)   | X^{* , \lambda}  ) \leq e^{-\beta} -  \e$ holds $P_{X^{e_\lambda}, Y^{e_\lambda}}$-almost everywhere.
\end{itemize}
Then, $\argmin_{\lambda \in \Lambda} \R^{I}(\lambda) \subset \argmin_{\lambda \in \Lambda} \R^{o.o.d.}(\lambda)$ holds.
\end{theo}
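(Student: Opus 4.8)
The plan is to establish the slightly stronger fact that $\R^{I}$ and $\R^{o.o.d.}$ attain the same minimum value and have the same set of minimizers; the asserted inclusion then follows a fortiori. Throughout, let $\beta$ denote the conditional entropy $H(Y\mid X_1)$ common to every member of $T_{all}$; this is well defined because all such pairs share the law $P_{Y,X_1}=P_{Y^{I},X^{I}_{1}}$. I would split $\Lambda$ into the indices with $\mathrm{Im}\,\Phi^{\lambda}_2=\emptyset$ and those with $\mathrm{Im}\,\Phi^{\lambda}_2\ne\emptyset$ and handle the two groups separately.

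First I would treat $\lambda$ with $\mathrm{Im}\,\Phi^{\lambda}_2=\emptyset$, so that $\Phi^{\lambda}$ is a projection onto a subset $S_\lambda$ of the $\X_1$-coordinates. Since every $(X,Y)\in T_{all}$ shares the law of $(Y,X_1)$, it also shares the conditional law $P(Y\mid (X_1)_{S_\lambda})$; hence $p^{*,\lambda}$, although defined from $(X^{*},Y^{*})$, is simultaneously the true conditional density of $Y$ given $\Phi^{\lambda}(X)$ for every member of $T_{all}$, and $\R^{(X,Y)}(p^{*,\lambda}\circ\Phi^{\lambda})$ equals the conditional entropy $\beta_\lambda:=H(Y\mid (X_1)_{S_\lambda})$, independently of $(X,Y)\in T_{all}$. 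Conditioning on more coordinates does not increase entropy, so $\beta_\lambda\ge\beta$, with equality at $\lambda^{I}$ by (I). Using the monotonicity $\R^{(X,g(Y))}(p\circ\Phi)\le\R^{(X,Y)}(p\circ\Phi)$, which is immediate from Theorem~\ref{thm:HIT-CV} since its right-hand side is a nonnegative combination of cross-entropies, I obtain $\R^{o.o.d.}(\lambda)=\beta_\lambda$ and $\R^{I}(\lambda)=\max\{\max_{(X,Y)\in T_{ad}}\R^{(X,g(Y))}(p^{*,\lambda}\circ\Phi^{\lambda}),\ \beta_\lambda\}=\beta_\lambda$ for all such $\lambda$. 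So $\R^{I}$ and $\R^{o.o.d.}$ coincide on this part of $\Lambda$, with value $\beta$ at $\lambda^{I}$.

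Next I would treat $\lambda$ with $\mathrm{Im}\,\Phi^{\lambda}_2\ne\emptyset$, where condition (II) enters. Let $(X^{e_\lambda},Y^{e_\lambda})\in T_{ad}$ be the pair provided by (II). Because $p^{*,\lambda}$ is the conditional density of $Y^{*}$ given $\Phi^{\lambda}(X^{*})$, its $g$-pushforward satisfies $p^{*,\lambda}(g(y)\mid\Phi^{\lambda}(x))=\sum_{y'\in g^{-1}(g(y))}p^{*,\lambda}(y'\mid\Phi^{\lambda}(x))=P\bigl(g(Y^{*})=g(y)\mid\Phi^{\lambda}(X^{*})=\Phi^{\lambda}(x)\bigr)$, which by (II) is $\le e^{-\beta}-\e$ for $P_{X^{e_\lambda},Y^{e_\lambda}}$-almost every $(x,y)$. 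Taking $-\log$ and integrating against $P_{X^{e_\lambda},Y^{e_\lambda}}$ gives $\R^{(X^{e_\lambda},g(Y^{e_\lambda}))}(p^{*,\lambda}\circ\Phi^{\lambda})\ge-\log(e^{-\beta}-\e)>\beta$, valid once $\e<e^{-\beta}$. Since $(X^{e_\lambda},Y^{e_\lambda})\in T_{ad}\subset T_{all}$, this quantity is one of the terms in the maximum defining $\R^{I}(\lambda)$, and by the monotonicity $\R^{o.o.d.}(\lambda)\ge\R^{(X^{e_\lambda},Y^{e_\lambda})}(p^{*,\lambda}\circ\Phi^{\lambda})\ge\R^{(X^{e_\lambda},g(Y^{e_\lambda}))}(p^{*,\lambda}\circ\Phi^{\lambda})$; hence $\R^{I}(\lambda)>\beta$ and $\R^{o.o.d.}(\lambda)>\beta$ for every such $\lambda$.

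Finally I would combine the two cases: $\min_{\lambda}\R^{I}(\lambda)=\beta$, attained exactly at the indices with $\mathrm{Im}\,\Phi^{\lambda}_2=\emptyset$ and $\beta_\lambda=\beta$, and by the first step the same description holds for $\argmin_\lambda\R^{o.o.d.}(\lambda)$, because there $\R^{I}=\R^{o.o.d.}$ while elsewhere both strictly exceed $\beta$. Thus $\argmin_\lambda\R^{I}(\lambda)=\argmin_\lambda\R^{o.o.d.}(\lambda)$, which in particular gives $\argmin_\lambda\R^{I}(\lambda)\subset\argmin_\lambda\R^{o.o.d.}(\lambda)$. The hard part will be the second step: showing that an index retaining an $\X_2$-coordinate is genuinely penalized by the coarse-label validation risk. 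The key observation that makes (II) usable is that the $g$-pushforward of the Bayes $\Y$-predictor learned on $e^{*}$ is exactly the conditional $\Z$-probability of domain $e^{*}$, so that the hypothesis in (II) converts directly into a lower bound on the cross-entropy incurred on the adversarially chosen domain $e_\lambda$.
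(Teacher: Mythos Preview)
Your proof is correct and shares the paper's core ingredients: on the indices with $\mathrm{Im}\,\Phi^{\lambda}_2=\emptyset$ everything is constant over $T_{all}$ so $\R^{I}=\R^{o.o.d.}=\beta_\lambda\ge\beta$, and on the remaining indices condition~(II) forces the coarse-label risk on the witness domain above $\beta$. The organization differs, however. The paper proceeds through three lemmas (Lemmas~\ref{Lem:min of I}--\ref{Lem:no  II is H=0}); in particular its Lemma~\ref{Lem:min of I} shows that $\lambda^{I}$ minimizes $\R^{o.o.d.}$ over \emph{all} of $\Lambda$ (including $\lambda$ retaining $\X_2$-coordinates) by constructing, for each competing $\lambda$, a product distribution $P_{X^{I}_1,Y^{I}}\times P_{\bar X_2}\in T_{all}$ and invoking cross-entropy minimality of the true conditional. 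You bypass this construction entirely: once you have $\R^{(X^{e_\lambda},g(Y^{e_\lambda}))}(p^{*,\lambda}\circ\Phi^{\lambda})>\beta$ from~(II), the monotonicity $\R^{(X,Y)}\ge\R^{(X,g(Y))}$ (Theorem~\ref{thm:HIT-CV}) and $T_{ad}\subset T_{all}$ immediately give $\R^{o.o.d.}(\lambda)>\beta$ as well, so no separate worst-case construction is needed. This buys a shorter argument and, as you note, the slightly stronger conclusion $\argmin_\lambda\R^{I}(\lambda)=\argmin_\lambda\R^{o.o.d.}(\lambda)$ rather than the one-sided inclusion stated in the theorem.
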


$T_{all}$, $T_{ad}$ and $(X^*, Y^*)$ correspond to $\{(X^e, Y^e) \}_{e \in \E}$,$\{(X^e, Y^e) \}_{e \in \E_{ad}}$ and $(X^{e^*}, Y^{e^*})$ in Theorem \ref{Thm:justfyHITCV} respectively. (a) and (b) in Theorem \ref{Thm:justfyHITCV} are represented by the  construction of $T_{all}$. (c) and (d) in Theorem \ref{Thm:justfyHITCV} are represented by (I) and (II) respectively.

%Although Theorem \ref{Thm:justfyHITCV1} seems to be different from Theorem \ref{Thm:justfyHITCV} at first glance, they correspond because of the following reason. Obviously, $\Phi^{\X_1}$ yields an invariance to $T^{all}$. Since $(X, Y) \in T^{all}$ has the same marginal distribution $P_{X_1^I}$ on $\X_1$, $P_{X_1^I}$ are same among $T^{all}$. There is no $T^* \supset T^{all}$ which satisfies “$P_{\Phi^{\X_1}}$ yields an invariance to $T^*$"  and “$P_{X_1}$ are the same among $T^*$" because $T^{all}$ is a set consisting of $all$ random variables $(X, Y)$ such that $P_{X_1,Y} = P_{X_1^I, Y^I}$ holds. Replacing $(X^{e^*}, Y^{e^*})$ in Theorem  \ref{Thm:justfyHITCV} by $(X^*, Y^*)$, we obtain Theorem \ref{Thm:justfyHITCV1}.

To prove Theorem  \ref{Thm:justfyHITCV1:re}, we prepare three lemmas. In the lemmas, notations are same as in Theorem  \ref{Thm:justfyHITCV1:re} and condition (I) and (II) in Theorem \ref{Thm:justfyHITCV1:re} are also imposed on.

 \begin{lemm}\label{Lem:min of I}
  $ \lambda^{I} \in \argmin_{\lambda \in \Lambda} \R^{o.o.d.} (\lambda)$. 
 \end{lemm}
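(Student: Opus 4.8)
The plan is to pin down the value of $\R^{o.o.d.}(\lambda^I)$ and then show it is a lower bound: concretely, I would prove $\R^{o.o.d.}(\lambda^I)=\beta$, where $\beta:=H(Y^*\mid X_1^*)$, and $\R^{o.o.d.}(\lambda)\geq\beta$ for every $\lambda\in\Lambda$. Together these give $\lambda^I\in\argmin_{\lambda\in\Lambda}\R^{o.o.d.}(\lambda)$.

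The first computation, and the case $\mathrm{Im}\,\Phi^\lambda_2=\emptyset$, are handled by the same observation. When $\Phi^\lambda$ uses only $\X_1$-coordinates, $\Phi^\lambda(X)$ is one fixed deterministic function of $X_1$ for every $(X,Y)\in T_{all}$, so $\R^{(X,Y)}(p^{*,\lambda}\circ\Phi^\lambda)=\mathbb{E}_{(X,Y)}[-\log p^{*,\lambda}(Y\mid\Phi^\lambda(X))]$ depends only on the law of $(Y,X_1)$, which is $P_{Y^I,X_1^I}$ throughout $T_{all}$; hence the risk is constant over $T_{all}$ and equals $\R^{(X^*,Y^*)}(p^{*,\lambda}\circ\Phi^\lambda)$. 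Since $p^{*,\lambda}$ is by definition the true conditional density of $Y^*$ given $\Phi^\lambda(X^*)$, this common value is the conditional entropy $H(Y^*\mid\Phi^\lambda(X^*))$. For $\lambda=\lambda^I$ we have $\Phi^{\lambda^I}=\Phi^{\X_1}$, so $\R^{o.o.d.}(\lambda^I)=H(Y^*\mid X_1^*)=\beta$; for a general $\lambda$ with $\mathrm{Im}\,\Phi^\lambda_2=\emptyset$, $\Phi^\lambda(X^*)$ is a sub-vector of $X_1^*$, and since conditioning on a coarser feature cannot decrease conditional entropy, $\R^{o.o.d.}(\lambda)=H(Y^*\mid\Phi^\lambda(X^*))\geq H(Y^*\mid X_1^*)=\beta$.

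For the remaining case $\mathrm{Im}\,\Phi^\lambda_2\neq\emptyset$ I would invoke condition (II): there is $(X^{e_\lambda},Y^{e_\lambda})\in T_{ad}\subseteq T_{all}$ with $P\bigl(g(Y^*)=g(y)\mid X^{*,\lambda}=\Phi^\lambda(x)\bigr)\leq e^{-\beta}-\e$ for $P_{X^{e_\lambda},Y^{e_\lambda}}$-a.e.\ $(x,y)$. Because $p^{*,\lambda}$ is the exact conditional density of $Y^*$ given $X^{*,\lambda}$, marginalizing over a fiber gives $\sum_{y'\in g^{-1}(z)}p^{*,\lambda}(y'\mid x^\lambda)=P\bigl(g(Y^*)=z\mid X^{*,\lambda}=x^\lambda\bigr)$, so in particular $p^{*,\lambda}(y\mid x^\lambda)\leq P\bigl(g(Y^*)=g(y)\mid X^{*,\lambda}=x^\lambda\bigr)$. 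Chaining this with the assumed bound and monotonicity of $-\log$ yields $-\log p^{*,\lambda}\bigl(Y^{e_\lambda}\mid\Phi^\lambda(X^{e_\lambda})\bigr)\geq-\log(e^{-\beta}-\e)>\beta$ almost everywhere, hence $\R^{o.o.d.}(\lambda)\geq\R^{(X^{e_\lambda},Y^{e_\lambda})}(p^{*,\lambda}\circ\Phi^\lambda)\geq-\log(e^{-\beta}-\e)>\beta$. Combining the two cases gives $\R^{o.o.d.}(\lambda)\geq\beta=\R^{o.o.d.}(\lambda^I)$ for all $\lambda$, as required.

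I expect the only point needing care to be the bookkeeping in the last case: one must use that $p^{*,\lambda}$ is the true conditional density rather than an approximation, so that summing it over a fiber $g^{-1}(z)$ reproduces $P(g(Y^*)=z\mid X^{*,\lambda})$, and then apply the pointwise inequality under the expectation with respect to $P_{X^{e_\lambda},Y^{e_\lambda}}$. Everything else reduces to the ``conditioning reduces entropy'' inequality and the fact that the joint law of $(Y,X_1)$ is frozen across $T_{all}$, both of which are routine.
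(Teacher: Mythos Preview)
Your argument is correct, but it is organized differently from the paper's proof. The paper does not split on whether $\mathrm{Im}\,\Phi^{\lambda}_2$ is empty and does not invoke condition (II) at all for this lemma. Instead, for an arbitrary $\hat{\lambda}$ and an arbitrary $(\bar X,\bar Y)\in T_{all}$, it manufactures a single witness environment $(\bar{\bar X},\bar{\bar Y})\in T_{all}$ whose law is the product $P_{X_1^I,Y^I}\times P_{\bar X_2}$; on this environment, for each fixed value of $\bar X_2^{\hat\lambda}$ the map $x_1\mapsto p^{*,\hat\lambda}(\cdot\mid x_1^{\hat\lambda},\bar X_2^{\hat\lambda})$ is just some conditional distribution on $\Y$, so Gibbs' inequality gives the inner cross-entropy $\geq H(Y^I\mid X_1^I)=\int-\log p^{*,\lambda^I}(\bar Y\mid\Phi^{\lambda^I}(\bar X))\,dP_{\bar X,\bar Y}$, and integrating over $\bar X_2$ yields $\R^{o.o.d.}(\hat\lambda)\geq \R^{o.o.d.}(\lambda^I)$ directly. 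The paper's route is thus a uniform one-line construction relying only on the structure of $T_{all}$ and the optimality of the true conditional for log loss; in particular it shows the lemma holds independently of (II). Your route is also valid (condition (II) is a standing hypothesis here), it makes the value $\R^{o.o.d.}(\lambda^I)=\beta$ explicit and yields the strict quantitative gap $-\log(e^{-\beta}-\e)>\beta$ in the non-invariant case, but at the cost of a case split and of consuming an assumption that the paper's proof of this lemma does not need.
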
 
 
 \begin{lemm}\label{Lem:min is no  II}
 Assume that $\hat{\lambda} \in \argmin_{\lambda \in \Lambda} \R^{I}(\lambda) $. Then  $\mathrm{Im} \Phi_2^{\hat{\lambda}} =  \emptyset$.
 \end{lemm}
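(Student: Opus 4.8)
The plan is a proof by contradiction built around the invariant candidate $\lambda^{I}$ from condition~(I). I will show $\R^{I}(\lambda^{I})=\beta$ (so that $\min_{\lambda\in\Lambda}\R^{I}(\lambda)\le\beta$), and then argue that any $\hat\lambda$ with $\mathrm{Im}\,\Phi_{2}^{\hat\lambda}\neq\emptyset$ must satisfy $\R^{I}(\hat\lambda)>\beta$, so it cannot lie in $\argmin_{\lambda}\R^{I}(\lambda)$.

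First I would evaluate $\R^{I}$ at $\lambda^{I}$. Since $\Phi^{\lambda^{I}}=\Phi^{\X_{1}}$ and every $(X,Y)\in T_{all}$ has the same joint law $P_{Y,X_{1}}=P_{Y^{I},X_{1}^{I}}$, the conditional law $P(Y|X_{1})$ is common to all members of $T_{all}$ and agrees with the density $p^{*,\lambda^{I}}$; hence $\R^{(X,Y)}(p^{*,\lambda^{I}}\circ\Phi^{\lambda^{I}})=\int-\log P(Y|X_{1})\,dP_{X,Y}=H(Y^{I}|X_{1}^{I})=\beta$ for every $(X,Y)\in T_{all}$, in particular for $(X^{*},Y^{*})$. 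For the coarse-label terms, Theorem~\ref{thm:HIT-CV} with $p_{\theta}=p^{*,\lambda^{I}}$ and $\Phi=\Phi^{\lambda^{I}}$ gives $\R^{(X,Y)}(p^{*,\lambda^{I}}\circ\Phi^{\lambda^{I}})-\R^{(X,g(Y))}(p^{*,\lambda^{I}}\circ\Phi^{\lambda^{I}})\ge0$, since the right-hand side of (\ref{eq.decomp}) is a sum of products of a probability with a conditional cross-entropy of a number in $[0,1]$, hence nonnegative. Thus $\R^{(X,g(Y))}(p^{*,\lambda^{I}}\circ\Phi^{\lambda^{I}})\le\beta$ for all $(X,Y)\in T_{ad}$, and combining with $\R^{(X^{*},Y^{*})}(\cdot)=\beta$ gives $\R^{I}(\lambda^{I})=\beta$, so $\min_{\lambda}\R^{I}(\lambda)\le\beta$.

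Next I would take $\hat\lambda\in\argmin_{\lambda}\R^{I}(\lambda)$, suppose $\mathrm{Im}\,\Phi_{2}^{\hat\lambda}\neq\emptyset$, and contradict the bound just obtained. Condition~(II) provides $(X^{e_{\hat\lambda}},Y^{e_{\hat\lambda}})\in T_{ad}$ such that $P(g(Y^{*})=g(y)\,|\,X^{*,\hat\lambda}=\Phi^{\hat\lambda}(x))\le e^{-\beta}-\e$ holds $P_{X^{e_{\hat\lambda}},Y^{e_{\hat\lambda}}}$-a.e. Because $p^{*,\hat\lambda}$ is the true conditional density of $Y^{*}$ given $X^{*,\hat\lambda}$, its induced coarse output satisfies $p^{*,\hat\lambda}(g(y)\,|\,\Phi^{\hat\lambda}(x))=\sum_{y'\in g^{-1}(g(y))}p^{*,\hat\lambda}(y'\,|\,\Phi^{\hat\lambda}(x))=P(g(Y^{*})=g(y)\,|\,X^{*,\hat\lambda}=\Phi^{\hat\lambda}(x))\le e^{-\beta}-\e$ a.e. As $\e$ is small enough that $e^{-\beta}-\e>0$, monotonicity of $-\log$ gives $-\log p^{*,\hat\lambda}(g(y)\,|\,\Phi^{\hat\lambda}(x))\ge-\log(e^{-\beta}-\e)>\beta$ a.e., and integrating against $P_{X^{e_{\hat\lambda}},Y^{e_{\hat\lambda}}}$ yields $\R^{(X^{e_{\hat\lambda}},g(Y^{e_{\hat\lambda}}))}(p^{*,\hat\lambda}\circ\Phi^{\hat\lambda})\ge-\log(e^{-\beta}-\e)>\beta$. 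Since $(X^{e_{\hat\lambda}},Y^{e_{\hat\lambda}})\in T_{ad}$, this is one of the terms in the maximum defining $\R^{I}(\hat\lambda)$, so $\R^{I}(\hat\lambda)>\beta\ge\R^{I}(\lambda^{I})$, contradicting minimality of $\hat\lambda$. Hence $\mathrm{Im}\,\Phi_{2}^{\hat\lambda}=\emptyset$.

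The calculations are elementary once the objects are unwound; the step needing the most care is the uniform evaluation in the first paragraph — that under the invariant projection the population risk of $p^{*,\lambda^{I}}$ equals the single conditional entropy $\beta$ for \emph{every} member of $T_{all}$ (using both the definition of $T_{all}$ and that $p^{*,\lambda^{I}}$ is the true conditional density), and that Theorem~\ref{thm:HIT-CV} forces the coarse risk never to exceed the fine risk — because it is exactly this that makes $\lambda^{I}$ attain $\R^{I}(\lambda^{I})=\beta$ and serve as the benchmark a minimizer of $\R^{I}$ must beat.
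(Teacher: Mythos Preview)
Your proof is correct and follows essentially the same strategy as the paper: use condition~(II) to exhibit an environment in $T_{ad}$ whose coarse-label risk under $\hat\lambda$ strictly exceeds $\beta$, then compare against $\lambda^{I}$. The only minor organizational difference is that you establish $\R^{I}(\lambda^{I})=\beta$ directly via Theorem~\ref{thm:HIT-CV}, whereas the paper shows the coarse risk exceeds $\R^{o.o.d.}(\lambda^{I})=\beta$ and tacitly relies on Lemma~\ref{Lem:no  II is H=0} to identify $\R^{o.o.d.}(\lambda^{I})$ with $\R^{I}(\lambda^{I})$; your route is slightly more self-contained.
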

 
 \begin{lemm}\label{Lem:no  II is H=0}
 If  $\hat{\lambda} \in \Lambda$ satisfies $\mathrm{Im} \Phi_2^{\hat{\lambda}} =  \emptyset$,   $\R^{I}(\hat{\lambda})  = \R^{o.o.d.}(\hat{\lambda})  $. 
  \end{lemm}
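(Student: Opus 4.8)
The plan is to exploit two facts: that a feature map with $\mathrm{Im}\,\Phi_2^{\hat\lambda}=\emptyset$ ``sees'' only the $\X_1$-block, on which every member of $T_{all}$ has the same joint law with $Y$; and that replacing the fine label $Y$ by the coarse label $g(Y)$ can only decrease the cross-entropy risk, which is exactly the content of the decomposition in Theorem \ref{thm:HIT-CV}. Together these two facts pin both $\R^{o.o.d.}(\hat\lambda)$ and $\R^{I}(\hat\lambda)$ to one and the same number.

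First I would record the structural consequence of $\mathrm{Im}\,\Phi_2^{\hat\lambda}=\emptyset$: since every $\Phi^{\lambda}$ is a coordinate projection of $\X=\X_1\times\X_2$, the absence of an $\X_2$-component forces $\Phi^{\hat\lambda}(X)=\Phi_1^{\hat\lambda}(X_1)$ to be a deterministic function of $X_1$ alone. Hence for every $(X,Y)\in T_{all}$ the joint law of $(\Phi^{\hat\lambda}(X),Y)$ is the pushforward of $P_{Y,X_1}=P_{Y^I,X_1^I}$ under $(x_1,y)\mapsto(\Phi_1^{\hat\lambda}(x_1),y)$, which does not depend on the chosen member of $T_{all}$. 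Because $\R^{(X,Y)}(p^{*,\hat\lambda}\circ\Phi^{\hat\lambda})=\int -\log p^{*,\hat\lambda}(y\mid h)\,dP_{Y,\Phi^{\hat\lambda}(X)}(y,h)$ depends on $(X,Y)$ only through that joint law, there is a single value $r$ with $\R^{(X,Y)}(p^{*,\hat\lambda}\circ\Phi^{\hat\lambda})=r$ for all $(X,Y)\in T_{all}$. In particular $\R^{o.o.d.}(\hat\lambda)=\max_{(X,Y)\in T_{all}}r=r$, and, since $(X^*,Y^*)\in T_{all}$, also $\R^{(X^*,Y^*)}(p^{*,\hat\lambda}\circ\Phi^{\hat\lambda})=r$.

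Next I would apply Theorem \ref{thm:HIT-CV} with $p_\theta=p^{*,\hat\lambda}$ and $\Phi=\Phi^{\hat\lambda}$: for any $(X,Y)$ the difference $\R^{(X,Y)}(p^{*,\hat\lambda}\circ\Phi^{\hat\lambda})-\R^{(X,g(Y))}(p^{*,\hat\lambda}\circ\Phi^{\hat\lambda})$ equals a sum over $z\in\Z^{\cancel{\hookrightarrow}}$ of terms $P(g(Y)=z)\cdot\int -\log p^{*,\hat\lambda}\bigl(Y\mid \Phi^{\hat\lambda}(X),\,Y\in g^{-1}(z)\bigr)\,dP_{(X,Y)\mid Y\in g^{-1}(z)}$; each factor $-\log(\cdot)$ is the negative logarithm of a conditional probability lying in $[0,1]$, hence nonnegative, so the whole difference is $\ge 0$, i.e.\ $\R^{(X,g(Y))}(p^{*,\hat\lambda}\circ\Phi^{\hat\lambda})\le\R^{(X,Y)}(p^{*,\hat\lambda}\circ\Phi^{\hat\lambda})$ for every $(X,Y)$. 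Applying this to each $(X,Y)\in T_{ad}\subset T_{all}$ and using the previous paragraph gives $\R^{(X,g(Y))}(p^{*,\hat\lambda}\circ\Phi^{\hat\lambda})\le r$, hence $\max_{(X,Y)\in T_{ad}}\R^{(X,g(Y))}(p^{*,\hat\lambda}\circ\Phi^{\hat\lambda})\le r$; since the remaining term in the maximum defining $\R^{I}(\hat\lambda)$ equals $\R^{(X^*,Y^*)}(p^{*,\hat\lambda}\circ\Phi^{\hat\lambda})=r$, it follows that $\R^{I}(\hat\lambda)=r=\R^{o.o.d.}(\hat\lambda)$, which is the claim.

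The only point requiring care — rather than any genuine difficulty — is the reading of $\R^{(X,g(Y))}(p^{*,\hat\lambda}\circ\Phi^{\hat\lambda})$: the predictor outputs a law on $\Y$, so one must use the induced law on $\Z$, namely $p_\theta(z\mid h):=\sum_{y\in g^{-1}(z)}p_\theta(y\mid h)$, which is exactly the convention already in force in the proof of Theorem \ref{thm:HIT-CV}; with it, that theorem applies verbatim and the estimate above is rigorous. Beyond this, the argument uses nothing but the structural remark about coordinate projections and the nonnegativity of the Theorem \ref{thm:HIT-CV} decomposition, so conditions (I) and (II) of Theorem \ref{Thm:justfyHITCV1:re} are not needed here.
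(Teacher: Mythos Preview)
Your proof is correct and follows essentially the same approach as the paper's: both arguments rest on (i) the observation that $\mathrm{Im}\,\Phi_2^{\hat\lambda}=\emptyset$ forces the joint law of $(\Phi^{\hat\lambda}(X),Y)$ to be constant across $T_{all}$, collapsing all fine-label risks to a single value, and (ii) Theorem~\ref{thm:HIT-CV} to show that the coarse-label risk is dominated by the fine-label risk, so the $\max$ in $\R^{I}$ is attained at $\R^{(X^*,Y^*)}(p^{*,\hat\lambda}\circ\Phi^{\hat\lambda})$. The only cosmetic difference is that the paper anchors everything at the reference pair $(X^I,Y^I)$ while you introduce an abstract constant $r$; the logic is identical.
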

 Before proving the above lemmas, we prove Theorem \ref{Thm:justfyHITCV1:re} suppose that they hold.\\
 {\bf{proof of Theorem \ref{Thm:justfyHITCV1:re}}}.\\
 Take $\hat{\lambda} \in \argmin \R^{I} (\lambda)$. Then, $\mathrm{Im} \Phi_2^{\hat{\lambda}}=  \emptyset  $ holds by Lemma \ref{Lem:min is no  II} and therefore, $ \R^{I} (\hat{\lambda}) = \R^{o.o.d.} (\hat{\lambda}) $ holds by Lemma \ref{Lem:no  II is H=0}. Moreover, $\R^{o.o.d.} (\hat{\lambda})  \geq  \R^{o.o.d.} (\lambda^{I}) $ holds by Lemma \ref{Lem:min of I} and  $\R^{o.o.d.} (\lambda^{I}) = \R^{I} (\lambda^{I})$ holds by \ref{Lem:no  II is H=0}.\footnote{Note that, since $\Phi^{\lambda^I}$ is the projection onto $\X_1$, $\mathrm{Im} \Phi_2^{\lambda^I} =  \emptyset $.} By the assumption $\hat{\lambda} \in \argmin_{\lambda \in \Lambda} \R^{I}(\hat{\lambda}) $, $  \R^{I} (\lambda^{I}) \geq \R^{I} (\hat{\lambda})  $ holds. Arranging these inequalities, we obtain 
 \begin{equation}\label{prove:ourCV}
 \R^{I} (\hat{\lambda}) = \R^{o.o.d.} (\hat{\lambda})  \geq  \R^{o.o.d.} (\lambda^{I}) = \R^{I} (\lambda^{I}) \geq \R^{I} (\hat{\lambda}). 
 \end{equation}
 Since  the left and right ends of  (\ref{prove:ourCV}) are connected by the same value  $\R^{I} (\hat{\lambda})$, the inequalities in (\ref{prove:ourCV}) must be equalities.  Hence, we obtain the equality $ \R^{o.o.d.} (\hat{\lambda})  =  \R^{o.o.d.}(\lambda^{I})$.  By the minimality of $\lambda^I$ (Lemma \ref{Lem:min of I}), the equality $ \R^{o.o.d.} (\hat{\lambda})  =  \R^{o.o.d.}(\lambda^{I})$ implies $\hat{\lambda} \in \argmin \R^{o.o.d.}(\lambda)$, which concludes the proof. $\Box$\\
{\bf{proof of Lemma \ref{Lem:min of I}} }\\
 It suffices to prove that, for any $\hat{\lambda} \in \Lambda$ and  $(\bar{X}, \bar{Y}) \in T_{all}$, there exists $(\bar{\bar{X}}, \bar{\bar{Y}}) \in T_{all}$ such that $\int - \log p^{*, \hat{\lambda}}(\bar{\bar{Y}} | \Phi^{\hat{\lambda}}(\bar{\bar{X}})) d P_{\bar{\bar{X}},\bar{\bar{Y}}} \geq \int - \log p^{*, \lambda^I} (\bar{Y} | \Phi^{\lambda^I}(\bar{X}) ) d P_{\bar{X},\bar{Y}}$.
 Take $(\bar{\bar{X}}, \bar{\bar{Y}}) \in T_{all}$ such that its distribution is $P_{X_1^I, Y^I} \times P_{\bar{X}_2}$. Here, $P_{\bar{X}_2}$ denotes a marginal distribution of $(\bar{X}, \bar{Y}) \in T_{all}$ on $\X_2$ and $P_{X_1^I, Y^I} \times P_{\bar{X}_2}$ denotes the product of $P_{X_1^I, Y^I}$ and $P_{\bar{X}_2}$.
 \begin{align}
 \int - \log p^{*,\hat{\lambda}} (\bar{\bar{Y}} | \Phi^{\hat{\lambda}}(\bar{\bar{X}})) d P_{\bar{\bar{X}}, \bar{\bar{Y}}} &=  \int - \log p^{*, \hat{\lambda}} (\bar{\bar{Y}} | \bar{\bar{X}}^{\hat{\lambda}}_1,  \bar{\bar{X}}^{\hat{\lambda}}_2)  d P_{\bar{\bar{X}}, \bar{\bar{Y}}} \notag \\
 &=  \int - \log p^{*, \hat{\lambda}} (Y^I | X^{I, \hat{\lambda}}_1,  \bar{X}^{\hat{\lambda}}_2) d  (P_{X_1^I, Y^I} \times P_{\bar{X}_2}) \notag \\
  &= \int dP_{\bar{X}_2} \int - \log p^{*, \hat{\lambda}} (Y^I | X^{I, \hat{\lambda}}_1,  \bar{X}^{\hat{\lambda}}_2) d  P_{X_1^I, Y^I}.  \label{eq:Lem:min of I} \\ \notag
  \end{align}
 Note that, for $\forall x \in  \mathrm{Im} \Phi_2^{\hat{\lambda}} $, $\int - \log p^{*,\hat{\lambda}}(Y^I | X^{I, \hat{\lambda}}_1,  \bar{X}^{\hat{\lambda}}_2= %\bar{x}^{\hat{\lambda}}_2)
 x^{\hat{\lambda}}_2)) d  P_{X_1^I, Y^I} \geq    \int  - \log p^{*, \lambda^I} (Y^I | X^I_1) d P_{X^I_1, Y^I} $ holds since a minimum of the cross entropy loss is attained if and only if $p_{\theta}$ corresponds to the  conditional distribution function  $p^{*, \lambda^I}$ of $P_{Y^I | X_1^I}$.\footnote{By the construction of $T_{all}$, $P_{Y| X_1}$ corresponds to $P_{Y^I| X_1^I}$ for any $(X, Y) \in T_{all}$. Therefore, $P_{Y^I| X_1^I} = P_{Y^* | X^*_1} =  P_{Y^* | \Phi^{\lambda^I}(X^*)}$ holds, which implies that the conditional probability density function of $P_{Y^I | X_1^I}$ is $p^{*, \lambda^I}$.} Therefore, we can see that 
 \begin{align*} 
 \mathrm{(\ref{eq:Lem:min of I})} & \geq  \int dP_{\bar{X}_2}  \int  - \log p^{*, \lambda^I}(Y^I | X^I_1) d P_{X^I_1, Y^I}  \\
  &  =\int dP_{\bar{X}_2| X_1^I, Y^I}  \int  - \log p^{*, \lambda^I}(Y^I | X^I_1) d P_{X^I_1, Y^I}   \\
  &  =\int  \int  - \log p^{*, \lambda^I}(Y^I | X^I_1) d P_{X^I_1, Y^I} dP_{\bar{X}_2| X_1^I, Y^I}   \\
  &  =\int   - \log p^{*, \lambda^I} (\bar{Y} | \bar{X}_1) dP_{\bar{X}, \bar{Y}}   \\
   &  =\int   - \log p^{*, \lambda^I} (\bar{Y} |\Phi^{\lambda^I}( \bar{X})) dP_{\bar{X}, \bar{Y}} ,  \\
  \end{align*}
and therefore, it concludes the proof. Here, the first equality holds because $- \log p^{*, \lambda^I} (Y^I | X^I_1) $ is not affected by $\X_2$. $\Box$\\
 {\bf{proof of Lemma \ref{Lem:min is no  II}}}.\\
 Let us prove the contraposition of Lemma \ref{Lem:min is no  II}. Take $\hat{\lambda} \in \Lambda$ with $\mathrm{Im} \Phi_2^{\hat{\lambda} } \neq  \emptyset$. To prove that $\hat{\lambda} \notin \argmin \R^{I}(\lambda)$, we may   prove that $\R^{I}(\hat{\lambda}) > \R^{I} (\lambda^I)$ since $\lambda^I \in \Lambda$ (Assumption (I) in the statement). To show this, it suffices to prove the following statement:
\begin{align}
  \exists (\bar{X}, \bar{Y}) \in T_{ad}~~ s.t.~~\int -\log p^{*, \hat{\lambda}}  (g(\bar{Y}) | \bar{X}^{\hat{\lambda}}) dP_{\bar{X},g(\bar{Y})}  > \R^{I} (\lambda^I).\label{eq:Lem:min is no  II}
 \end{align}
% Take
From Condition (II), we can take $( X^{e_{\hat{\lambda}}},Y^{e_{\hat{\lambda}}}) \in T_{ad}$ such that the following statement holds:
 \begin{center}
\rm{ $P(g(Y^*)   | X^{* , \hat{\lambda}}   ) \leq e^{-\beta} -  \e$ holds $P_{X^{e_{\hat{\lambda}}}, Y^{e_{\hat{\lambda}}}}$-almost everywhere.}
%\footnote{There exists such $( X^{e_{\hat{\lambda}}},Y^{e_{\hat{\lambda}}}) \in T^{ad}$ because of Condition (II).}
 \end{center}
Before proving (\ref{eq:Lem:min is no  II}), we prepare one  supplementary inequality:
\\   {\bf{Supplementary Inequality}}\\
$$\int -\log p^{*,\hat{\lambda}}  (g(Y^{e_{\hat{\lambda}}}) | X^{e_{\hat{\lambda}},\hat{\lambda} }) dP_{X^{e_{\hat{\lambda}}},g(Y^{e_{\hat{\lambda}}})}  \geq -\log \left\{ {e^{ - \beta }- \epsilon } \right\}.$$ 

To prove the inequality, note that $P_{X^{e_{\hat{\lambda}}},g(Y^{e_{\hat{\lambda}}})} (A) = 0$, where 
$$A:= \left\{ \bigl(  x, y \bigr)  \in \X \times \Y \left|  P(g(Y^*)  = g(y)   | X^{* , \hat{\lambda}}  = \Phi^{\hat{\lambda}}(x)  ) > e^{-\beta} -  \e \right. \right\}$$
since“ $P(g(Y^*)   | X^{* , \hat{\lambda}}   ) \leq e^{-\beta} -  \e$ holds $P_{X^{e_{\hat{\lambda}}}, Y^{e_{\hat{\lambda}}}}$-almost everywhere"   holds.

Therefore, we can see that $$\int -\log p^{*,\hat{\lambda}}  (g(Y^{e_{\hat{\lambda}}}) | X^{e_{\hat{\lambda}}, \hat{\lambda}}) dP_{X^{e_{\hat{\lambda}}},g(Y^{e_{\hat{\lambda}}})}   =\int_{\X \times \Y - A}  -\log p^{*,\hat{\lambda}}  (g(Y^{e_{\hat{\lambda}}}) | X^{e_{\hat{\lambda}}, \hat{\lambda}}) dP_{X^{e_{\hat{\lambda}}},g(Y^{e_{\hat{\lambda}}})}   $$.  Note that for any $(x, y) \in \X \times \Y - A$, $ -\log p^{*,\hat{\lambda}}  (g(y) | x^{\hat{\lambda}}) \geq  e^{-\beta} -  \e $; indeed, since 

$$ \X \times \Y - A = \left\{ \bigl(  x, y \bigr)  \in \X \times \Y \left|  P(g(Y^*)  = g(y)   | X^{* , \hat{\lambda}}  = \Phi^{\hat{\lambda}}(x)  ) \leq e^{-\beta} -  \e \right. \right\},$$ for any $(x, y) \in \X \times \Y - A$,  $$ -\log p^{*,\hat{\lambda}}  (g(y) | x^{\hat{\lambda}}) = - \log  P(g(Y^*)  = g(y)   | X^{* , \hat{\lambda}}  = \Phi^{\hat{\lambda}}(x)  )   \geq   -\log \left\{ {e^{ - \beta }- \epsilon } \right\}.  $$
Then, we obtain 
\begin{align*}
\int -\log p^{*,\hat{\lambda}}  (g(Y^{e_{\hat{\lambda}}}) | X^{e_{\hat{\lambda}}, \hat{\lambda}}) dP_{X^{e_{\hat{\lambda}}},g(Y^{e_{\hat{\lambda}}})}   &=\int_{\X \times \Y - A}  -\log p^{*,\hat{\lambda}}  (g(Y^{e_{\hat{\lambda}}}) | X^{e_{\hat{\lambda}}, \hat{\lambda}}) dP_{X^{e_{\hat{\lambda}}},g(Y^{e_{\hat{\lambda}}})}    \\
 &\geq   -\log \left\{ {e^{ - \beta }- \epsilon } \right\}.
 \end{align*}
 {\bf{Proof of Inequality (\ref{eq:Lem:min is no  II})}}.\\
 \begin{align}
& \int -\log p^{*,\hat{\lambda}}  (g(Y^{e_{\hat{\lambda}}}) | X^{e_{\hat{\lambda}}, \hat{\lambda}}) dP_{X^{e_{\hat{\lambda}}},g(Y^{e_{\hat{\lambda}}})}     \geq -\log \left\{ {e^{ - \beta }- \epsilon } \right\}  >  -\log \left\{ {e^{ - \beta }} \right\}  
 =  \beta
 =  H(Y^* | X_1^*) 
  \end{align}
 Note that  $H(Y^* | X_1^*) = \R^{o.o.d.}(\lambda^I)$; indeed, 
 \begin{align}
  \R^{o.o.d.} (\lambda^I) &= \max_{(X, Y) \in T_{all}} \R^{(X, Y) } (p^{*, \lambda^I}\circ \Phi^{\lambda^I}) = \max_{(X, Y) \in T_{all}} \int - \log p^{*, \lambda^I} (Y | \Phi^{\lambda^I}(X)) dP_{X, Y} \notag \\
  &=  \max_{(X, Y) \in T_{all}} \int - \log p^{*, \lambda^I} (Y | \Phi^{\lambda^I}(X)) dP_{\Phi^{\lambda^I}(X), Y} \notag\\
  &=  \max_{(X, Y) \in T_{all}} \int - \log p^{*, \lambda^I} (Y |X_1) dP_{X_1, Y}. \label{eq:Lem:min is no  II:3 }
 \end{align}
 Noting that  $P_{X_1, Y}  =P_{X_1^*, Y^*} (=  P_{X_1^I, Y^I})$ for any $(X, Y) \in T
 _{all}$ and   $p^{*, \lambda^I}$ coincides with the conditional probability density function of $P_{Y^* | X_1^*}$, we can see that
 $$ \mathrm{(\ref{eq:Lem:min is no  II:3 })}=   \int - \log p^{*, \lambda^I} (Y^* |X_1^*) dP_{X_1^*, Y^*}  = H (Y^*| X_1^*).$$
 
 Hence, we can derive $\int -\log p^{*,\hat{\lambda}}  (g(Y^{e^\lambda}) | X^{e_{\hat{\lambda}}, \hat{\lambda}}) dP_{X^{e_{\hat{\lambda}}},g(Y^{e_{\hat{\lambda}}})} > H (Y^*| X_1^*) = \R^{o.o.d.}(\lambda^I)$, which concludes the proof. $\Box$
  \paragraph{{\bf{proof of Lemma \ref{Lem:no  II is H=0}}}}.\\
Take $\hat{\lambda} \in \Lambda$ that satisfies $\mathrm{Im} \Phi_2^{\hat{\lambda}} =  \emptyset$.
%If $\mathrm{Im} \Phi_2^{\hat{\lambda}} =  \emptyset$, 
Then, $P_{\Phi(X^{\hat{\lambda}}),Y} = P_{\Phi(X^{I,\hat{\lambda}}), Y^I}$ holds for $\forall (X, Y) \in T_{all}$ because of $P_{X_1,Y}=P_{X_1^I,Y}$, 
%the construction of $T^{all}$ 
and therefore, $\R^{(X,g(Y))}(p^{*, \hat{\lambda}}  \circ \Phi^{\hat{\lambda}}) = \R^{(X^I,g(Y^I))}(p^{*, \hat{\lambda}}  \circ \Phi^{\hat{\lambda}}) $ and $\R^{(X^*,Y^*)}(p^{*, \hat{\lambda} } \circ \Phi^{\hat{\lambda}}) = \R^{(X^I,Y^I)}(p^{*, \hat{\lambda}}  \circ \Phi^{\hat{\lambda}})$ hold.  These two equalities lead the following equality:
    \begin{align}
      \R^{I} (\hat{\lambda}) &= \max \Biggl\{   \max_{(X,Y) \in T_{ad}}   \R^{(X,g(Y))}(p^{*, \hat{\lambda} } \circ \Phi^{\hat{\lambda}}) ,  \R^{(X^*,Y^*)}(p^{*, \hat{\lambda}}  \circ \Phi^{\hat{\lambda}}) \Biggr\}  \notag \\
      &= \max \Biggl\{    \R^{(X^I,g(Y^I))}(p^{*, \hat{\lambda}}  \circ \Phi^{\hat{\lambda}}) ,  \R^{(X^I,Y^I)}(p^{*, \hat{\lambda}  }\circ \Phi^{\hat{\lambda}}) \Biggr\}    \label{eq_aprrox1}
      \end{align}
      By Theorem  \ref{thm:HIT-CV}, 
      \begin{align*}
      &R^{(X^I,Y^I)}(p^{*, \hat{\lambda}}  \circ \Phi^{\hat{\lambda}})   \\
      &= \R^{(X^I,g(Y^I))}(p^{*, \hat{\lambda}}  \circ \Phi^{\hat{\lambda}}) +\sum_{z^{\scalebox{0.5}{$\cancel{\hookrightarrow}$}} \in \Z^{\cancel{\hookrightarrow}}}P(Y^I = g^{-1}(z^{\scalebox{0.5}{$\cancel{\hookrightarrow}$}})) \int -\log p^{*, \hat{\lambda}}   (Y^I  | \Phi(X^I), Y^I \in g^{-1}(z^{\scalebox{0.5}{$\cancel{\hookrightarrow}$}})) dP_{X^I,Y^I | Y^I \in  g^{-1}(z^{\scalebox{0.5}{$\cancel{\hookrightarrow}$}})}  \\
      & \geq  \R^{(X^I,g(Y^I))}(p^{*,\hat{ \lambda} } \circ \Phi^{\hat{\lambda}}) 
      \end{align*}
holds and therefore, $(\ref{eq_aprrox1}) =  \R^{(X^I,Y^I)}(p^{*, \hat{\lambda} } \circ \Phi^{\hat{\lambda}})$. Since $P_{\Phi^{\hat{\lambda}}(X), Y}$ are the same for  $T_{all}$, 
\begin{align*}
     \R^{(X^I,Y^I)}(p^{*, \hat{\lambda}}  \circ \Phi^{\lambda}) = \max_{(X,Y) \in T_{all}} \R^{(X,Y)}(p^{*, \hat{\lambda}}  \circ \Phi^{\hat{\lambda}})  = \R^{o.o.d.}(p^{*, \hat{\lambda} } \circ \Phi^{\hat{\lambda}}),
   \end{align*}
which concludes the proof.
    
% Note that, by the proof of Lemma \ref{Lem:min is no  II}, desiable imprication holds if $\P$ only contain $\bar{P}$ evaluated by each $\lambda \in \Lambda$. That is why, condition (b) in Theorem \ref{Thm:justfyHITCV2} is relaxed.

 \section{Proof of Theorem \ref{Thm:justfyHITCV2}}
 \label{append.proof_cv2}
\begin{theo}\label{Thm:justfyHITCV2:re}
Notations are same as in the statement of Theorem \ref{Thm:justfyHITCV1:re}. Define $\R^{II}(\lambda)$ by 
\begin{align*}
&\R^{II}(\lambda) :=  \max_{(X,Y) \in T_{ad} \cup \{(X^*, Y^*) \}} \Biggl\{   \R^{(X,g(Y))}(p^{*, \lambda}  \circ \Phi^{\lambda})+ \sum_{z^{\scalebox{0.5}{$\cancel{\hookrightarrow}$}} \in \Z^{\cancel{\hookrightarrow}}} \Bigl\{ P(g(Y)= z^{\scalebox{0.5}{$\cancel{\hookrightarrow}$}} )   \\
&~~~~~~~~~~~~~~~~~~~~~~~~~~~~~~~~~~~~~~~~~~~~~~~~~~~~~~~~~\cdot  \int-\log p^{*, \lambda} (Y^* | X^{*,{\lambda}}, Y^* \in  g^{-1}(z^{\scalebox{0.5}{$\cancel{\hookrightarrow}$}})) dP_{(X^*,Y^*) | Y^* \in g^{-1}(^{\scalebox{0.5}{$\cancel{\hookrightarrow}$}})} \Bigl\} \Biggl\}.
\end{align*}\label{eq.HOOD}
In addition to the condition (I), the following condition (II)' hold: 
\vspace{-3mm}
\begin{itemize}
\item[(II)']  For a sufficiently small $\e \ll 1$, the following statement holds:\\
$\forall \lambda \in \Lambda$ with $\mathrm{Im} \Phi^{\lambda}_2 \neq \emptyset$, there exists $(X^{e_\lambda}, Y^{e_\lambda})$ such that 
 $P(g(Y^*)   | X^{* , \lambda}  ) \leq e^{-\beta_{\lambda}} -  \e$ holds $P_{X^{e_\lambda}, Y^{e_\lambda}}$-almost everywhere. Here, 
\begin{align*} 
&\beta_{\lambda} := H(Y^* | X_1^*) - \sum_{z^{\scalebox{0.5}{$\cancel{\hookrightarrow}$}}  \in \Z^{\cancel{\hookrightarrow}}}\Bigl\{ P(g(Y^*) = z^{\scalebox{0.5}{$\cancel{\hookrightarrow}$}} ) \cdot \int - \log p^{*, \lambda}  (Y^* | X^{*, \lambda}, Y^* \in g^{-1}(z^{\scalebox{0.5}{$\cancel{\hookrightarrow}$}} ) )dP_{(X^*,Y^*) | Y^* \in g^{-1}(z^{\scalebox{0.5}{$\cancel{\hookrightarrow}$}} )} \Bigl\}.
\end{align*}
\end{itemize}
Then, $\argmin_{\lambda \in \Lambda} \R^{II}(\lambda) \subset \argmin_{\lambda \in \Lambda} \R^{o.o.d.}(\lambda)$.
\end{theo}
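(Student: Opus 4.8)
The plan is to replicate the three-lemma skeleton of the proof of Theorem~\ref{Thm:justfyHITCV1:re}, with $\R^{I}$ replaced by $\R^{II}$ and the scalar $\beta$ replaced by $\beta_{\lambda}$. Concretely I would establish: (II-a) $\lambda^{I}\in\argmin_{\lambda\in\Lambda}\R^{o.o.d.}(\lambda)$, which is Lemma~\ref{Lem:min of I} verbatim and needs no re-proof; (II-b) if $\hat{\lambda}\in\argmin_{\lambda\in\Lambda}\R^{II}(\lambda)$ then $\mathrm{Im}\,\Phi_{2}^{\hat{\lambda}}=\emptyset$; (II-c) if $\mathrm{Im}\,\Phi_{2}^{\hat{\lambda}}=\emptyset$ then $\R^{II}(\hat{\lambda})=\R^{o.o.d.}(\hat{\lambda})$. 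Given these, the conclusion follows from exactly the chain-of-equalities argument used for Method~I: picking $\hat{\lambda}\in\argmin\R^{II}$, (II-b) forces $\mathrm{Im}\,\Phi_{2}^{\hat\lambda}=\emptyset$, (II-c) gives $\R^{II}(\hat\lambda)=\R^{o.o.d.}(\hat\lambda)$, Lemma~\ref{Lem:min of I} gives $\R^{o.o.d.}(\hat\lambda)\ge\R^{o.o.d.}(\lambda^{I})$, and since $\Phi^{\lambda^{I}}$ is the projection onto $\X_{1}$, (II-c) applied to $\lambda^{I}$ gives $\R^{o.o.d.}(\lambda^{I})=\R^{II}(\lambda^{I})$, which is $\ge\R^{II}(\hat\lambda)$ by minimality; the loop $\R^{II}(\hat\lambda)=\R^{o.o.d.}(\hat\lambda)\ge\R^{o.o.d.}(\lambda^{I})=\R^{II}(\lambda^{I})\ge\R^{II}(\hat\lambda)$ collapses to equalities, so $\R^{o.o.d.}(\hat\lambda)=\R^{o.o.d.}(\lambda^{I})$ and hence $\hat\lambda\in\argmin\R^{o.o.d.}$ by Lemma~\ref{Lem:min of I}.

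For (II-c): abbreviate $C_{\lambda}:=\sum_{z^{\cancel{\hookrightarrow}}\in\Z^{\cancel{\hookrightarrow}}}P(g(Y^{*})=z^{\cancel{\hookrightarrow}})\int-\log p^{*,\lambda}(Y^{*}\mid X^{*,\lambda},Y^{*}\in g^{-1}(z^{\cancel{\hookrightarrow}}))\,dP_{(X^{*},Y^{*})\mid Y^{*}\in g^{-1}(z^{\cancel{\hookrightarrow}})}$, so that $\beta_{\lambda}=H(Y^{*}\mid X_{1}^{*})-C_{\lambda}$. If $\mathrm{Im}\,\Phi_{2}^{\hat\lambda}=\emptyset$ then $\Phi^{\hat\lambda}$ factors through the $\X_{1}$-projection, so by the defining property of $T_{all}$ the law $P_{\Phi^{\hat\lambda}(X),Y}$ — and therefore the marginal law of $g(Y)$ — is one and the same for every $(X,Y)\in T_{all}$. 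Hence the bracketed quantity whose maximum defines $\R^{II}(\hat\lambda)$ does not depend on $(X,Y)\in T_{ad}\cup\{(X^{*},Y^{*})\}$ and equals $\R^{(X^{*},g(Y^{*}))}(p^{*,\hat\lambda}\circ\Phi^{\hat\lambda})+C_{\hat\lambda}$, which by Theorem~\ref{thm:HIT-CV} (applied to $(X^{*},Y^{*})$ and the predictor $p^{*,\hat\lambda}\circ\Phi^{\hat\lambda}$) equals $\R^{(X^{*},Y^{*})}(p^{*,\hat\lambda}\circ\Phi^{\hat\lambda})$; the constancy of $P_{\Phi^{\hat\lambda}(X),Y}$ then identifies this with $\max_{(X,Y)\in T_{all}}\R^{(X,Y)}(p^{*,\hat\lambda}\circ\Phi^{\hat\lambda})=\R^{o.o.d.}(\hat\lambda)$.

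For (II-b) I would argue by contraposition as in the proof of Lemma~\ref{Lem:min is no  II}. Let $\hat\lambda$ satisfy $\mathrm{Im}\,\Phi_{2}^{\hat\lambda}\neq\emptyset$; since $\R^{II}(\lambda^{I})=\R^{o.o.d.}(\lambda^{I})=H(Y^{*}\mid X_{1}^{*})$ by (II-c) and Lemma~\ref{Lem:min of I}, it suffices to show $\R^{II}(\hat\lambda)>H(Y^{*}\mid X_{1}^{*})$. Condition (II)' produces $(X^{e_{\hat\lambda}},Y^{e_{\hat\lambda}})\in T_{ad}$ with $P(g(Y^{*})\mid X^{*,\hat\lambda})\le e^{-\beta_{\hat\lambda}}-\e$ holding $P_{X^{e_{\hat\lambda}},Y^{e_{\hat\lambda}}}$-almost everywhere; running the ``supplementary inequality'' computation of the Method~I proof with $\beta$ replaced by $\beta_{\hat\lambda}$ gives $\R^{(X^{e_{\hat\lambda}},g(Y^{e_{\hat\lambda}}))}(p^{*,\hat\lambda}\circ\Phi^{\hat\lambda})\ge-\log(e^{-\beta_{\hat\lambda}}-\e)>\beta_{\hat\lambda}$. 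Because all members of $T_{all}$ have the same joint law of $(Y,X_{1})$, $Y^{e_{\hat\lambda}}$ and $Y^{*}$ share the same marginal, so $P(g(Y^{e_{\hat\lambda}})=z^{\cancel{\hookrightarrow}})=P(g(Y^{*})=z^{\cancel{\hookrightarrow}})$ and the correction term attached to $(X^{e_{\hat\lambda}},Y^{e_{\hat\lambda}})$ in $\R^{II}$ is precisely $C_{\hat\lambda}$. Therefore $\R^{II}(\hat\lambda)\ge\R^{(X^{e_{\hat\lambda}},g(Y^{e_{\hat\lambda}}))}(p^{*,\hat\lambda}\circ\Phi^{\hat\lambda})+C_{\hat\lambda}>\beta_{\hat\lambda}+C_{\hat\lambda}=H(Y^{*}\mid X_{1}^{*})$, so $\hat\lambda\notin\argmin\R^{II}$.

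The step I expect to be the main obstacle is (II-b), and specifically the bookkeeping of which label frequencies appear where: the correction term in $\R^{II}(\lambda)$ is weighted by the domain-$e$ frequencies $P(g(Y^{e})=z^{\cancel{\hookrightarrow}})$, whereas $\beta_{\lambda}$ is defined through the $e^{*}$-frequencies $P(g(Y^{*})=z^{\cancel{\hookrightarrow}})$, and the cancellation $\beta_{\hat\lambda}+C_{\hat\lambda}=H(Y^{*}\mid X_{1}^{*})$ only works once one observes that these frequencies coincide because every $(X,Y)\in T_{all}$ has the same marginal of $Y$. Everything else — the supplementary inequality with $\beta_{\lambda}$ in place of $\beta$, and the Theorem~\ref{thm:HIT-CV}-based identity in (II-c) — is a routine transcription of the Method~I argument. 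As a sanity check, since $\beta\ge\beta_{\lambda}$ one has $e^{-\beta}-\e\le e^{-\beta_{\lambda}}-\e$, so any witness $(X^{e_\lambda},Y^{e_\lambda})$ for condition (II) of Theorem~\ref{Thm:justfyHITCV1:re} is a fortiori a witness for (II)', which is why Method~II applies under strictly weaker hypotheses.
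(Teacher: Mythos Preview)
Your proposal is correct and matches the paper's proof essentially step for step: the paper also reduces to the three-lemma skeleton (Lemma~\ref{Lem:min of I} reused, plus analogues of Lemmas~\ref{Lem:min is no  II} and~\ref{Lem:no  II is H=0} for $\R^{II}$), proves (II-c) via Theorem~\ref{thm:HIT-CV} and the constancy of $P_{\Phi^{\hat\lambda}(X),Y}$ on $T_{all}$, and proves (II-b) by the same supplementary inequality with $\beta_{\hat\lambda}$ together with the observation $P(g(Y^{e_{\hat\lambda}})=z)=P(g(Y^{*})=z)$ to collapse $\beta_{\hat\lambda}+C_{\hat\lambda}$ to $H(Y^{*}\mid X_{1}^{*})$. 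The only cosmetic point is that the identity $\R^{o.o.d.}(\lambda^{I})=H(Y^{*}\mid X_{1}^{*})$ is not the content of Lemma~\ref{Lem:min of I} itself but a direct computation carried out in the proof of Lemma~\ref{Lem:min is no  II}; otherwise your argument is the paper's argument.
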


\begin{lemm}\label{Lem:min is no  II_2}
 Assume that $\hat{\lambda} \in \argmin_{\lambda \in \Lambda} \R^{II}(\lambda) $. Then  $\mathrm{Im} \Phi_2^{\hat{\lambda}} =  \emptyset$.
 \end{lemm}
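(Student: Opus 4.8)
The plan is to prove the contrapositive, following the proof of Lemma \ref{Lem:min is no  II} almost verbatim but dragging along the correction term. So I would fix $\hat\lambda \in \Lambda$ with $\mathrm{Im}\,\Phi_2^{\hat\lambda} \neq \emptyset$ and show that $\hat\lambda$ is not a minimizer of $\R^{II}$; since condition (I) guarantees $\lambda^I \in \Lambda$, it suffices to establish the strict inequality $\R^{II}(\hat\lambda) > \R^{II}(\lambda^I)$.

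First I would compute $\R^{II}(\lambda^I)$. Because $\mathrm{Im}\,\Phi_2^{\lambda^I} = \emptyset$, the map $\Phi^{\lambda^I}$ is the projection onto $\X_1$, so every term occurring in the maximum defining $\R^{II}(\lambda^I)$ depends on $(X,Y)\in T_{ad}\cup\{(X^*,Y^*)\}$ only through $P_{X_1,Y}$, which is the same for all members of $T_{all}$ by the definition of $T_{all}$; hence that maximum is attained at $(X^*,Y^*)$, and $\R^{II}(\lambda^I)$ equals the term at $(X^*,Y^*)$, namely $\R^{(X^*,g(Y^*))}(p^{*,\lambda^I}\circ\Phi^{\lambda^I})$ plus $\sum_{z^{\cancel{\hookrightarrow}}\in\Z^{\cancel{\hookrightarrow}}} P(g(Y^*)=z^{\cancel{\hookrightarrow}})\int -\log p^{*,\lambda^I}(Y^*\mid X^{*,\lambda^I}, Y^*\in g^{-1}(z^{\cancel{\hookrightarrow}}))\,dP_{(X^*,Y^*)\mid Y^*\in g^{-1}(z^{\cancel{\hookrightarrow}})}$. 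By Theorem \ref{thm:HIT-CV} applied with $\Phi=\Phi^{\lambda^I}$, $p_\theta=p^{*,\lambda^I}$ and $(X,Y)=(X^*,Y^*)$, that sum is exactly $\R^{(X^*,Y^*)}(p^{*,\lambda^I}\circ\Phi^{\lambda^I}) - \R^{(X^*,g(Y^*))}(p^{*,\lambda^I}\circ\Phi^{\lambda^I})$, so the whole expression collapses to $\R^{(X^*,Y^*)}(p^{*,\lambda^I}\circ\Phi^{\lambda^I})$, which is $H(Y^*\mid X_1^*)$ because $p^{*,\lambda^I}$ is the conditional density of $P_{Y^*\mid X_1^*}$. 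Thus $\R^{II}(\lambda^I) = H(Y^*\mid X_1^*)$ (this is the $\lambda^I$ instance of the Method-II analogue of Lemma \ref{Lem:no  II is H=0}).

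Next, invoking condition (II)', I would pick $(X^{e_{\hat\lambda}},Y^{e_{\hat\lambda}})\in T_{ad}$ with $P(g(Y^*)\mid X^{*,\hat\lambda}) \le e^{-\beta_{\hat\lambda}}-\e$ holding $P_{X^{e_{\hat\lambda}},Y^{e_{\hat\lambda}}}$-almost everywhere, and lower-bound $\R^{II}(\hat\lambda)$ by the term of its defining maximum at this $(X^{e_{\hat\lambda}},Y^{e_{\hat\lambda}})$. For the first summand of that term, the same ``supplementary inequality'' computation as in the proof of Lemma \ref{Lem:min is no  II} --- using $p^{*,\hat\lambda}(g(y)\mid\Phi^{\hat\lambda}(x)) = P\bigl(g(Y^*)=g(y)\mid X^{*,\hat\lambda}=\Phi^{\hat\lambda}(x)\bigr)$ together with the almost-everywhere bound --- gives $\R^{(X^{e_{\hat\lambda}},g(Y^{e_{\hat\lambda}}))}(p^{*,\hat\lambda}\circ\Phi^{\hat\lambda}) \ge -\log(e^{-\beta_{\hat\lambda}}-\e) > \beta_{\hat\lambda}$. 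For the correction summand, since $(X^{e_{\hat\lambda}},Y^{e_{\hat\lambda}})\in T_{all}$ forces $P_{Y^{e_{\hat\lambda}}}=P_{Y^*}$, hence $P(g(Y^{e_{\hat\lambda}})=z^{\cancel{\hookrightarrow}}) = P(g(Y^*)=z^{\cancel{\hookrightarrow}})$, that summand equals $\sum_{z^{\cancel{\hookrightarrow}}\in\Z^{\cancel{\hookrightarrow}}} P(g(Y^*)=z^{\cancel{\hookrightarrow}})\int -\log p^{*,\hat\lambda}(Y^*\mid X^{*,\hat\lambda}, Y^*\in g^{-1}(z^{\cancel{\hookrightarrow}}))\,dP_{(X^*,Y^*)\mid Y^*\in g^{-1}(z^{\cancel{\hookrightarrow}})}$, which is precisely $H(Y^*\mid X_1^*)-\beta_{\hat\lambda}$ by the definition of $\beta_{\hat\lambda}$. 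Adding the two bounds, $\R^{II}(\hat\lambda) > \beta_{\hat\lambda} + \bigl(H(Y^*\mid X_1^*)-\beta_{\hat\lambda}\bigr) = H(Y^*\mid X_1^*) = \R^{II}(\lambda^I)$, which shows $\hat\lambda$ is not a minimizer of $\R^{II}$ and thus proves the contrapositive.

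I expect no genuine difficulty here, only one bookkeeping point to be handled with care: the exact cancellation at the end requires recognising that the correction term carried by the auxiliary domain $(X^{e_{\hat\lambda}},Y^{e_{\hat\lambda}})$ is literally the quantity subtracted in the definition of $\beta_{\hat\lambda}$ --- which works only because every member of $T_{ad}$ shares the label marginal $P_Y$ with $(X^*,Y^*)$ (so the weights $P(g(Y)=z^{\cancel{\hookrightarrow}})$ coincide), while the inner integrals defining $\beta_{\hat\lambda}$ are, by construction, already taken with respect to $(X^*,Y^*)$. Everything else is a transcription of the three ingredients used for Method I: the collapse of the maximum (as in Lemma \ref{Lem:min of I}), the supplementary inequality, and Theorem \ref{thm:HIT-CV}.
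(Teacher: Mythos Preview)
Your proposal is correct and follows essentially the same approach as the paper's proof: prove the contrapositive by picking $(X^{e_{\hat\lambda}},Y^{e_{\hat\lambda}})$ from condition (II)', bound the first summand via the supplementary inequality by $-\log(e^{-\beta_{\hat\lambda}}-\e)>\beta_{\hat\lambda}$, use $P_{Y^{e_{\hat\lambda}}}=P_{Y^*}$ to identify the correction summand with $H(Y^*\mid X_1^*)-\beta_{\hat\lambda}$, and add. The only organizational difference is that you compute $\R^{II}(\lambda^I)=H(Y^*\mid X_1^*)$ up front (essentially the $\lambda^I$ case of Lemma~\ref{Lem:no  II is H=0_2}), whereas the paper ends the chain at $H(Y^*\mid X_1^*)=\R^{o.o.d.}(\lambda^I)$ and relies on Lemma~\ref{Lem:no  II is H=0_2} to close the gap to $\R^{II}(\lambda^I)$; your parenthetical reference to Lemma~\ref{Lem:min of I} for the ``collapse of the maximum'' should really point to Lemma~\ref{Lem:no  II is H=0_2}, but this is a labeling slip, not a gap in the argument.
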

 
\begin{lemm}\label{Lem:no  II is H=0_2}
 If  $\hat{\lambda} \in \Lambda$ satisfies $\mathrm{Im} \Phi_2^{\hat{\lambda}} =  \emptyset$,   $\R^{II}(\hat{\lambda})  = \R^{o.o.d.}(\hat{\lambda})  $. 
\end{lemm}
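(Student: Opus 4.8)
The plan is to adapt the proof of Lemma \ref{Lem:no  II is H=0} (the Method~I analogue), the only new ingredient being that the correction summand appearing in $\R^{II}$ turns out to be a single constant over the index set $T_{ad}\cup\{(X^*,Y^*)\}$. The structural fact I would exploit is that, when $\mathrm{Im}\Phi_2^{\hat\lambda}=\emptyset$, the feature map $\Phi^{\hat\lambda}$ reads off only the $\X_1$-component, so every distribution it induces is frozen across $T_{all}$. Concretely, since $P_{X_1,Y}=P_{X_1^I,Y^I}$ for all $(X,Y)\in T_{all}$ by the definition of $T_{all}$, I would first record that $P_{\Phi^{\hat\lambda}(X),Y}=P_{\Phi^{\hat\lambda}(X^*),Y^*}$ for every $(X,Y)\in T_{all}$; this gives $\R^{(X,g(Y))}(p^{*,\hat\lambda}\circ\Phi^{\hat\lambda})=\R^{(X^*,g(Y^*))}(p^{*,\hat\lambda}\circ\Phi^{\hat\lambda})$, and taking $Y$-marginals yields $P(g(Y)=z)=P(g(Y^*)=z)$ for all such $(X,Y)$ and all $z\in\Z^{\cancel{\hookrightarrow}}$.

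The key new observation is that the bracketed quantity inside the maximum defining $\R^{II}(\hat\lambda)$ is constant over $T_{ad}\cup\{(X^*,Y^*)\}$. Its integral factor $\int-\log p^{*,\hat\lambda}(Y^*\mid X^{*,\hat\lambda},Y^*\in g^{-1}(z))\,dP_{(X^*,Y^*)\mid Y^*\in g^{-1}(z)}$ depends only on the fixed pair $(X^*,Y^*)$, and by the previous paragraph its coefficient $P(g(Y)=z)$ collapses to $P(g(Y^*)=z)$. Hence, for every $(X,Y)$ in the index set, the bracket equals the single number $\R^{(X^*,g(Y^*))}(p^{*,\hat\lambda}\circ\Phi^{\hat\lambda})+\sum_{z\in\Z^{\cancel{\hookrightarrow}}}P(g(Y^*)=z)\int(\cdots)$, so the outer maximum is vacuous and $\R^{II}(\hat\lambda)$ equals that number.

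I would then recognize that this constant is precisely $\R^{(X^*,Y^*)}(p^{*,\hat\lambda}\circ\Phi^{\hat\lambda})$. Applying Theorem \ref{thm:HIT-CV} with $(X,Y)=(X^*,Y^*)$, $p_\theta=p^{*,\hat\lambda}$ and $\Phi=\Phi^{\hat\lambda}$, the sum $\sum_{z\in\Z^{\cancel{\hookrightarrow}}}P(g(Y^*)=z)\int(\cdots)$ is exactly $\R^{(X^*,Y^*)}(p^{*,\hat\lambda}\circ\Phi^{\hat\lambda})-\R^{(X^*,g(Y^*))}(p^{*,\hat\lambda}\circ\Phi^{\hat\lambda})$, so the two $\R^{(X^*,g(Y^*))}$ terms cancel and the constant reduces to $\R^{(X^*,Y^*)}(p^{*,\hat\lambda}\circ\Phi^{\hat\lambda})$. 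Finally, invariance of $P_{\Phi^{\hat\lambda}(X),Y}$ over $T_{all}$ promotes this single risk to $\max_{(X,Y)\in T_{all}}\R^{(X,Y)}(p^{*,\hat\lambda}\circ\Phi^{\hat\lambda})=\R^{o.o.d.}(\hat\lambda)$, which is the claim.

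The distributional identities of the first paragraph are routine; the one place demanding care is matching the correction term in the definition of $\R^{II}$ to the right-hand side of Theorem \ref{thm:HIT-CV}. The subtlety is that $\R^{II}$ weights the fixed integral over $(X^*,Y^*)$ by the coefficient $P(g(Y)=z)$ of the \emph{running} pair, whereas Theorem \ref{thm:HIT-CV} supplies the decomposition for $(X^*,Y^*)$ with weight $P(g(Y^*)=z)$; the identification therefore hinges on the equality $P(g(Y)=z)=P(g(Y^*)=z)$ obtained from $Y$-marginal invariance, without which the correction would not telescope. Once that is in hand the cancellation is purely algebraic and the lemma follows exactly as in the Method~I case.
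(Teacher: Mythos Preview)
Your proposal is correct and follows essentially the same approach as the paper: both exploit that when $\mathrm{Im}\,\Phi_2^{\hat\lambda}=\emptyset$ the joint law $P_{\Phi^{\hat\lambda}(X),Y}$ is constant across $T_{all}$, and both invoke Theorem~\ref{thm:HIT-CV} to pass between $\R^{(X,Y)}$ and $\R^{(X,g(Y))}+\text{(correction)}$. The paper runs the chain from $\R^{o.o.d.}$ down to $\R^{II}$ (applying Theorem~\ref{thm:HIT-CV} at each $(X,Y)$ and then replacing the correction integrals by their $(X^*,Y^*)$ versions via invariance), while you run it in the opposite direction (first collapsing the max in $\R^{II}$ to the single term at $(X^*,Y^*)$, then applying Theorem~\ref{thm:HIT-CV} once); the ingredients and the one delicate identification $P(g(Y)=z)=P(g(Y^*)=z)$ are identical.
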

\hspace{-7mm}{ \bf{proof of Theorem \ref{Thm:justfyHITCV2:re}} }\\
 Combining the above two lemmas and Lemma \ref{Lem:min of I}, we can show the desired statement essentially the same as the one in the proof of Theorem \ref{Thm:justfyHITCV2:re}.\\
 {\bf{proof of Lemma \ref{Lem:min is no  II_2}}}.\\
 Let us prove the contraposition of Lemma \ref{Lem:min is no  II}. Take $\hat{\lambda} \in \Lambda$ with $\mathrm{Im} \Phi_2^{\hat{\lambda} } \neq  \emptyset$. To prove that $\hat{\lambda} \notin \argmin \R^{II}(\lambda)$, we may   prove that $\R^{II}(\hat{\lambda}) > \R^{II} (\lambda^I)$ since $\lambda^I \in \Lambda$ (Assumption (I) in the statement). To show this, it suffices to prove the following statement:
\begin{align}
  &\exists (\bar{X}, \bar{Y}) \in T_{ad} ~~ s.t.\notag \\
 &\int -\log p^{*, \hat{\lambda}}  (g(\bar{Y}) | \bar{X}^{\hat{\lambda}}) dP_{\bar{X},g(\bar{Y})} + \sum_{z^{\scalebox{0.5}{$\cancel{\hookrightarrow}$}}\in \Z^{\cancel{\hookrightarrow}}} P(g(\bar{Y})= z^{\scalebox{0.5}{$\cancel{\hookrightarrow}$}} ) \cdot \int-\log p^{*, \hat{\lambda}}  (Y^* | &X^{*,\hat{\lambda}}, Y^* \in  g^{-1}(z^{\scalebox{0.5}{$\cancel{\hookrightarrow}$}})) &dP_{(X^*,Y^*) | Y^* \in  g^{-1}(z^{\scalebox{0.5}{$\cancel{\hookrightarrow}$}})} \notag \\
 && > \R^{II} (\lambda^I).\label{eq:Lem:min is no  II_2}
 \end{align}

 Take $( X^{e_{\hat{\lambda}}},Y^{e_{\hat{\lambda}}}) \in T_{ad}$ such that the following statement holds:
 \begin{center}
\rm{ $P(g(Y^*)   | X^{* , \hat{\lambda}}   ) \leq  e^{-\beta_{\hat{\lambda}}} -  \e$ holds $P_{X^{e_{\hat{\lambda}}}, Y^{e_{\hat{\lambda}}}}$-almost everywhere.}
 \end{center}
Before proving (\ref{eq:Lem:min is no  II_2}), we prepare one  supplementary inequality:
\\   {\bf{Supplementary Inequality}}\\
$$\int -\log p^{*,\hat{\lambda}}  (g(Y^{e_{\hat{\lambda}}}) | X^{e_{\hat{\lambda}},\hat{\lambda} }) dP_{X^{e_{\hat{\lambda}}},g(Y^{e_{\hat{\lambda}}})}  \geq -\log \left\{ {e^{ - \beta_{\hat{\lambda}} }- \epsilon } \right\}.$$ 

We can prove the  inequality same as in the proof of Lemma \ref{Lem:min is no  II}, and therefore,  omit the proof. \\
 {\bf{Proof of Inequality (\ref{eq:Lem:min is no  II_2})}}.\\
 \begin{align}
&\hspace{-10mm}\int -\log p^{*,\hat{\lambda}}  (g(Y^{e_{\hat{\lambda}}}) | X^{e_{\hat{\lambda}},\hat{\lambda} }) dP_{X^{e_{\hat{\lambda}}},g(Y^{e_{\hat{\lambda}}})}+ \sum_{z^{\scalebox{0.5}{$\cancel{\hookrightarrow}$}} \in \Z^{\cancel{\hookrightarrow}}} P(g({Y^{e_{\hat{\lambda}}}})= z^{\scalebox{0.5}{$\cancel{\hookrightarrow}$}} ) \cdot \int-\log p^{*, \hat{\lambda}}  (Y^* | X^{*,\hat{\lambda}}, Y^* \in  g^{-1}(z^{\scalebox{0.5}{$\cancel{\hookrightarrow}$}})) dP_{(X^*,Y^*) | Y^* \in  g^{-1}(z^{\scalebox{0.5}{$\cancel{\hookrightarrow}$}})} \notag\\
& \geq -\log \left\{ {e^{ - \beta_{\hat{\lambda}} }- \epsilon } \right\} + \sum_{z^{\scalebox{0.5}{$\cancel{\hookrightarrow}$}} \in \Z^{\cancel{\hookrightarrow}}} P(g({Y^{e_{\hat{\lambda}}}})= z^{\scalebox{0.5}{$\cancel{\hookrightarrow}$}} ) \cdot \int-\log p^{*, \hat{\lambda}}  (Y^* | X^{*,\hat{\lambda}}, Y^* \in  g^{-1}(z^{\scalebox{0.5}{$\cancel{\hookrightarrow}$}})) dP_{(X^*,Y^*) | Y^* \in g^{-1}(z^{\scalebox{0.5}{$\cancel{\hookrightarrow}$}})} \notag\\
& >  -\log \left\{ {e^{ - \beta_{\hat{\lambda}} }} \right\} + \sum_{z^{\scalebox{0.5}{$\cancel{\hookrightarrow}$}} \in \Z^{\cancel{\hookrightarrow}}} P(g({Y^{e_{\hat{\lambda}}}})= z^{\scalebox{0.5}{$\cancel{\hookrightarrow}$}} ) \cdot \int-\log p^{*, \hat{\lambda}}  (Y^* | X^{*,\hat{\lambda}}, Y^* \in  g^{-1}(z^{\scalebox{0.5}{$\cancel{\hookrightarrow}$}})) dP_{(X^*,Y^*) | Y^* \in g^{-1}(z^{\scalebox{0.5}{$\cancel{\hookrightarrow}$}})} \notag \\
&  =  \beta_{\hat{\lambda}}+ \sum_{z^{\scalebox{0.5}{$\cancel{\hookrightarrow}$}} \in \Z^{\cancel{\hookrightarrow}}} P(g({Y^{e_{\hat{\lambda}}}})= z^{\scalebox{0.5}{$\cancel{\hookrightarrow}$}} ) \cdot \int-\log p^{*, \hat{\lambda}}  (Y^* | X^{*,\hat{\lambda}}, Y^* \in  g^{-1}(z^{\scalebox{0.5}{$\cancel{\hookrightarrow}$}})) dP_{(X^*,Y^*) | Y^* \in g^{-1}(z^{\scalebox{0.5}{$\cancel{\hookrightarrow}$}})} \notag\\
&  =  H(Y^* | X_1^*) - \sum_{z^{\scalebox{0.5}{$\cancel{\hookrightarrow}$}} \in \Z^{\cancel{\hookrightarrow}}}\left\{P(g(Y^*) = z^{\scalebox{0.5}{$\cancel{\hookrightarrow}$}}) \cdot \int \log p^{*, \hat{\lambda}}(Y^* |  X^{*,\hat{\lambda}}, Y^* \in g^{-1}(z^{\scalebox{0.5}{$\cancel{\hookrightarrow}$}} ) )dP_{(X^*,Y^*) | Y^* \in  g^{-1}(z^{\scalebox{0.5}{$\cancel{\hookrightarrow}$}})}\right\} \notag \\
&~~~~~~~~^+ \sum_{z^{\scalebox{0.5}{$\cancel{\hookrightarrow}$}} \in \Z^{\cancel{\hookrightarrow}}} P(g({Y^{e_{\hat{\lambda}}}})= z^{\scalebox{0.5}{$\cancel{\hookrightarrow}$}} ) \cdot \int-\log p^{*, \hat{\lambda}}  (Y^* | X^{*,\hat{\lambda}}, Y^* \in  g^{-1}(z^{\scalebox{0.5}{$\cancel{\hookrightarrow}$}})) dP_{(X^*,Y^*) | Y^* \in g^{-1}(z^{\scalebox{0.5}{$\cancel{\hookrightarrow}$}})}. \label{eq:Lem:min is no  II:2 }\\ \notag
  \end{align}
  Since $P_{X_1^{e_{\hat{\lambda}}}, g(Y^{e_{\hat{\lambda}}})} = P_{X_1^*, g(Y^*)} $($= P_{X_1^I, g(Y^I)} $) holds by the construction of $T_{all}$, we can see that $P(g(Y^{e_{\hat{\lambda}}})=  z) = P(g(Y^*)=z)$ for any $z \in \Z$. Therefore, we obtain 
  \begin{align*}
  &- \sum_{z^{\scalebox{0.5}{$\cancel{\hookrightarrow}$}} \in \Z^{\cancel{\hookrightarrow}}}\left\{P(g(Y^*) = z^{\scalebox{0.5}{$\cancel{\hookrightarrow}$}}) \cdot \int \log p^{*, \hat{\lambda}}(Y^* |  X^{*,\hat{\lambda}}, Y^* \in g^{-1}(z^{\scalebox{0.5}{$\cancel{\hookrightarrow}$}} ) )dP_{(X^*,Y^*) | Y^*  \in g^{-1}(z^{\scalebox{0.5}{$\cancel{\hookrightarrow}$}})}\right\} \notag \\
&~~~~~~~~^+ \sum_{z^{\scalebox{0.5}{$\cancel{\hookrightarrow}$}} \in \Z^{\cancel{\hookrightarrow}}} P(g({Y^{e_{\hat{\lambda}}}})= z^{\scalebox{0.5}{$\cancel{\hookrightarrow}$}} ) \cdot \int-\log p^{*, \hat{\lambda}}  (Y^* | X^{*,\hat{\lambda}}, Y^* \in  g^{-1}(z^{\scalebox{0.5}{$\cancel{\hookrightarrow}$}})) dP_{(X^*,Y^*) | Y^* \in g^{-1}(z^{\scalebox{0.5}{$\cancel{\hookrightarrow}$}})}  =0. \\
  \end{align*}
 Hence, we can conclude (\ref{eq:Lem:min is no  II:2 }) $= H(Y^* | X_1^*)$.  Moreover, $H(Y^* | X_1^*) = \R^{o.o.d.}(\lambda^I)$; indeed, 
 \begin{align}
  \R^{o.o.d.} (\lambda^I) &= \max_{(X, Y) \in _{all}} \R^{(X, Y) } (p^{*, \lambda^I}\circ \Phi^{\lambda^I}) = \max_{(X, Y) \in T_{all}} \int - \log p^{*, \lambda^I} (Y | \Phi^{\lambda^I}(X)) dP_{X, Y} \notag \\
  &=  \max_{(X, Y) \in T_{all}} \int - \log p^{*, \lambda^I} (Y | \Phi^{\lambda^I}(X)) dP_{\Phi^{\lambda^I}(X), Y} \notag\\
  &=  \max_{(X, Y) \in T_{all}} \int - \log p^{*, \lambda^I} (Y |X_1) dP_{X_1, Y} \label{eq:Lem:min is no  II:3_2 }
 \end{align}
 Noting that  $P_{X_1, Y}  =P_{X_1^*, Y^*} (=  P_{X_1^I, Y^I})$ for any $(X, Y) \in T_{all}$ and   $p^{*, \lambda^I}$ coincides with the conditional probability density function of $P_{Y^* | X_1^*}$, we can see that
 $$ \mathrm{(\ref{eq:Lem:min is no  II:3_2 })}=   \int - \log p^{*, \lambda^I} (Y^* |X_1^*) dP_{X_1^*, Y^*}  = H (Y^*| X_1^*).$$
 
 Hence, we can derive (\ref{eq:Lem:min is no  II:2 }) $= H (Y^*| X_1^*) = \R^{o.o.d.}(\lambda^I)$, which concludes the proof. $\Box$\\
   {\bf{proof of Lemma \ref{Lem:no  II is H=0_2}}}.\\
   Take $\hat{\lambda} \in \Lambda$ which satisfies $\mathrm{Im} \Phi_2^{\hat{\lambda}} =  \emptyset$.
   Since $\mathrm{Im} \Phi_2^{\hat{\lambda}} =  \emptyset$, $P_{\Phi(X^{\hat{\lambda}}),Y} = P_{\Phi(X^{ *,\hat{\lambda}}), Y^*}$ holds for $\forall (X, Y) \in T_{all}$ because of the construction of $T_{all}$.
   Therefore, 
   
   \begin{align*}
   \R^{o.o.d.}(\hat{\lambda})   &:= \max_{(X,Y) \in T_{all}} \R^{(X,Y)}(p^{*, \hat{\lambda}} \circ \Phi^{\hat{\lambda}})= \max_{(X,Y) \in T_{ad} \cup \{ (X^*, Y^*)\}} \R^{(X,Y)}(p^{*, \hat{\lambda}} \circ \Phi^{\hat{\lambda}})\\
  & \hspace{-27mm}=  \max_{(X,Y) \in T_{ad} \cup \{ (X^*, Y^*)\}} \Biggl\{   \R^{(X,g(Y))}(p^{*, \hat{\lambda}} \circ \Phi^{\hat{\lambda}})+  \sum_{z^{\scalebox{0.5}{$\cancel{\hookrightarrow}$}} \in \Z^{\cancel{\hookrightarrow}}}  \left\{P(g(Y)= z^{\scalebox{0.5}{$\cancel{\hookrightarrow}$}} )  \int-\log p^{*, \hat{\lambda}} (Y | X^{{\hat{\lambda}}}, Y \in g^{-1}(z^{\scalebox{0.5}{$\cancel{\hookrightarrow}$}})) dP_{(X,Y) | Y\in g^{-1}(z^{\scalebox{0.5}{$\cancel{\hookrightarrow}$}})} \right\}\Biggl\} \\
  & \hspace{-28mm}=  \max_{(X,Y) \in T_{ad} \cup \{ (X^*, Y^*)\}}\Biggl\{   \R^{(X,g(Y))}(p^{*, \hat{\lambda}} \circ \Phi^{\hat{\lambda}})+ \sum_{z^{\scalebox{0.5}{$\cancel{\hookrightarrow}$}} \in \Z^{\cancel{\hookrightarrow}}}  \left\{P(g(Y)= z^{\scalebox{0.5}{$\cancel{\hookrightarrow}$}} )  \int-\log p^{*, \hat{\lambda}} (Y^* | X^{*,{\hat{\lambda}}}, Y^* \in g^{-1}(z^{\scalebox{0.5}{$\cancel{\hookrightarrow}$}})) dP_{(X^*,Y^*) | Y^*\in g^{-1}(z^{\scalebox{0.5}{$\cancel{\hookrightarrow}$}})} \right\}\Biggl\} \\
   &~~~~~~~~~~~~~~~~~~~~~~~~~~~~~~~~~~~~~~~~~~~~~~~~~~~~~~~~~~~~~~~~~~~~~~~~~~~~~~~~~~~~~~~~~~~~~~~~~~~~~~~~~~~~~~~ = \R^{II} (\hat{\lambda}) 
\end{align*}
and therefore, it concludes the proof. Here, the third equality holds by Theorem \ref{thm:HIT-CV}; the first and forth equalities hold because of the fact that $P_{ \Phi(X^{\hat{\lambda}}), Y} = P_{\Phi(X^{*, \hat{\lambda}}), Y^*}$ holds for $\forall (X, Y) \in T_{all}$.

\section{Sufficient Conditions of $e^*$ for there to exist $(X^{e_{\lambda} }, Y^{e_{\lambda} })$ which satisfies  (d) and (d)'}\label{append:condi_(c)(c)'}
\begin{theo} \label{theo:exits_env1}
Notations are same as in Theorem \ref{Thm:justfyHITCV1:re} and \ref{Thm:justfyHITCV2:re}.
$(X^*, Y^*)$ satisfies the following condition:

\begin{itemize}
\item[(A)] For a sufficiently small $\e \ll 1$, the following statement holds:\\
$\forall \lambda$ with $\mathrm{Im} \Phi^{\lambda}_2 \neq \emptyset$, $\forall \a \in \mathrm{Im} \Phi^{\lambda}_1$, $\forall b \in \Y$, $\exists c(\lambda, a,b)$\footnote{$c(\lambda, a,b)$ means $c\in \X_2$ is determined by given $\lambda \in \Lambda$, $a\in \X_1$, $b \in \Y$.} s.t. $P(Y^* = b  | X_1^{* , \lambda} = a, X_2^{*, \lambda} =c  ) \geq (1-e^{-\beta}) + \e$. \\
\end{itemize}
Then, $\forall \lambda$ with $\mathrm{Im} \Phi^{\lambda}_2 \neq \emptyset$,  there exists $(X^{e_\lambda}, Y^{e_\lambda}) \in T_{all}$ such that the inequality in (d) holds.
\end{theo}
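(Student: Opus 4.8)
The plan is, for each fixed $\lambda$ with $\mathrm{Im}\,\Phi^\lambda_2\neq\emptyset$, to build an explicit ``adversarial'' member $(X^{e_\lambda},Y^{e_\lambda})$ of $T_{all}$ whose law is concentrated on the set $B$ appearing in condition (d). The guiding observation is that membership in $T_{all}$ only pins down the law of the pair $(X_1,Y)$ — it must equal $P_{X_1^*,Y^*}$, since $(X^*,Y^*)\in T_{all}$ — while the conditional law of the $\X_2$-component given $(X_1,Y)$ is left free; this slack is exactly what we exploit to drive $P\bigl(g(Y^*)=g(Y^{e_\lambda})\mid\Phi^\lambda(X^{e_\lambda})\bigr)$ below $e^{-\beta}-\e$.

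For the construction I would take $(X_1^{e_\lambda},Y^{e_\lambda})$ to have the same law as $(X_1^*,Y^*)$. For each $y\in\Y$ I fix some $b(y)\in\Y$ with $g(b(y))\neq g(y)$, which exists because $g$ is surjective onto $\Z$ and $|\Z|\ge 2$, and can be chosen measurably since $\Y$ is finite. Condition (A) then supplies, for every $a\in\mathrm{Im}\,\Phi^\lambda_1$, a value $c(\lambda,a,b(y))\in\mathrm{Im}\,\Phi^\lambda_2$ with $P\bigl(Y^*=b(y)\mid X_1^{*,\lambda}=a,\,X_2^{*,\lambda}=c(\lambda,a,b(y))\bigr)\ge (1-e^{-\beta})+\e$. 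I pick a measurable map $\psi$ with $\Phi^\lambda_2\bigl(\psi(a,y)\bigr)=c(\lambda,a,b(y))$ (the remaining $\X_2$-coordinates of $\psi(a,y)$ being set to $0$), and set $X_2^{e_\lambda}:=\psi\bigl(\Phi^\lambda_1(X_1^{e_\lambda}),\,Y^{e_\lambda}\bigr)$. Then $(X^{e_\lambda},Y^{e_\lambda}):=\bigl((X_1^{e_\lambda},X_2^{e_\lambda}),\,Y^{e_\lambda}\bigr)$ is a well-defined random variable on $\X\times\Y$, and it lies in $T_{all}$ because its $(X_1,Y)$-marginal is that of $(X^*,Y^*)$.

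Next I would verify the inequality of (d) pointwise on the support. For $(x,y)=\bigl((x_1,x_2),y\bigr)$ in the support of $(X^{e_\lambda},Y^{e_\lambda})$, write $a=\Phi^\lambda_1(x)$ and $c=\Phi^\lambda_2(x)=c(\lambda,a,b(y))$, so that $\Phi^\lambda(x)=(a,c)$. Since $g(b(y))\neq g(y)$, the event $\{g(Y^*)=g(y)\}$ is contained in $\{Y^*\neq b(y)\}$, hence
\[
P\bigl(g(Y^*)=g(y)\mid\Phi^\lambda(X^*)=(a,c)\bigr)\le 1-P\bigl(Y^*=b(y)\mid X_1^{*,\lambda}=a,\,X_2^{*,\lambda}=c\bigr)\le e^{-\beta}-\e,
\]
which says precisely that $(x,y)\in B$. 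Since every point of the support lies in $B$, we get $P_{X^{e_\lambda},Y^{e_\lambda}}(B)=1$, i.e.\ the inequality in (d) holds $P_{X^{e_\lambda},Y^{e_\lambda}}$-almost everywhere.

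The step I expect to be the main obstacle is the measurable selection $a\mapsto c(\lambda,a,b)$, which is needed so that $\psi$ — and hence $X_2^{e_\lambda}$ — is measurable and so that the conditional probabilities above are all read off a single regular conditional distribution of $(X^*,Y^*)$. This should follow from a standard measurable-selection theorem applied to the measurable set $\{(a,c):P(Y^*=b\mid X_1^{*,\lambda}=a,\,X_2^{*,\lambda}=c)\ge (1-e^{-\beta})+\e\}$, whose $a$-sections are nonempty by (A); and the degenerate case $\mathrm{Im}\,\Phi^\lambda_1=\emptyset$ is the same argument with the conditioning on $a$ omitted. Everything else is bookkeeping with the definitions of $T_{all}$, $B$ and $\Phi^\lambda=(\Phi^\lambda_1,\Phi^\lambda_2)$.
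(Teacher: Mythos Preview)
Your proposal is correct and follows essentially the same approach as the paper: both construct the adversarial element of $T_{all}$ by keeping the $(X_1,Y)$-marginal equal to $P_{X_1^I,Y^I}$ and choosing the $\X_2$-component as a Dirac mass at a point $c$ (depending on $(a,b)$) that forces $P(g(Y^*)=g(b)\mid\Phi^\lambda(X^*)=\cdot)\le e^{-\beta}-\e$ via the same chain $P(g(Y^*)=g(b)\mid\cdot)\le 1-P(Y^*=b^*\mid\cdot)\le e^{-\beta}-\e$ for some $b^*$ with $g(b^*)\neq g(b)$. The only cosmetic differences are that the paper first passes through an $\argmin$ over $c$ before invoking condition~(A), whereas you use the $c$ supplied by~(A) directly, and that you explicitly flag the measurable-selection issue which the paper leaves implicit.
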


\hspace{-5mm}{\bf{Remark}}. The condition (A) means that, in the environment $e=e^*$, the affection of environmental factors ($= \X_2$) to the response variable $Y^{e^*}$ is large; indeed, the inequality in (A) means that, if $\lambda$ fails to remove environment factors ($i.e.$, $\mathrm{Im} \Phi^{\lambda}_2 \neq \emptyset$), we can control the probability of $Y^{e^*}=b$ by the selection $c$ for any $b \in \Y$.

\begin{proof}
Fix $\forall \lambda$ with $\mathrm{Im} \Phi^{\lambda}_2 \neq \emptyset$.
Take $(\bar{X},\bar{Y}) \in T_{all}$ such that its probability measure corresponds to  $\bar{P}_{X_2 | Y,X_1} \times P_{Y^I,X_1^I},$
 where $\bar{P}_{X_2 | Y,X_1}$ is defined by, setting $\hat{c}(\hat{\lambda}, a, b)$ by 
 $$\hat{c}(\hat{\lambda}, a, b) \in  \argmin_{ c \in \X_2 } P (g(Y^{*}) = g(b) | X^{*,\hat{\lambda}}_1 = \Phi_1^{\hat{\lambda}} (a), X^{*,\hat{\lambda}}_2= \Phi_2^{\hat{\lambda}}(c)),$$
 $\bar{P}_{X_2 | Y=b,X_1=a}:= \delta_{X_2 = \hat{c}(\hat{\lambda}, a,b) }.$ Here, for $c \in \X_2$, the probability measure $\delta_{X_2= c}$ on $\X_2$ denotes a Dirac measure at $c \in \X_2$.
 
 Before proving Theorem \ref{theo:exits_env1}, we prepare the following inequalities:\\
  {\bf{Supplementary Inequality 1}}.\\
 $$ \forall a \in \X_1, \forall b \in \Y,    P \left( g(Y^{*}) = g(b) \left| X^{*,\hat{\lambda}}_1 = \Phi_1^{\hat{\lambda}} (a), X^{*,\hat{\lambda}}_2= \Phi_2^{\hat{\lambda}}  \bigl( \hat{c} ( \hat{\lambda}, a, b )  \bigr)  \right. \right) \leq {e^{ - \beta }- \epsilon } .$$
 To see the fact, take $b^* \in \Y$ such that $g(b^*) \neq g(b)$\footnote{Such $b^*$ always exists by the following reason. Since $| \Z | \geq 2$ (which is imposed on in Chapter \ref{sec.setting} ), we can take $\Z \ni z^* \neq g(b)$. By the surjectivity of $g$, $g^{-1}(z^*) \neq \emptyset$. Taking $b^* \in g^{-1}(z^*)$, $g(b^*) = z^* \neq g(b)$.}. Then, by the condition (ii) of Theorem \ref{Thm:justfyHITCV2} and $\mathrm{Im} \Phi_2^{\hat{\lambda}} \neq \emptyset $, there exists $c(\hat{\lambda}, a, b ) \in \X_2$ such that 
$$ P  \left( Y^{*} = b^* \left| X^{*,\hat{\lambda}}_1 = \Phi_1^{\hat{\lambda}} (a), X^{*,\hat{\lambda}}_2= \Phi_2^{\hat{\lambda}}  \bigl( c(\hat{\lambda}, a, b ) \bigr)  \right. \right)  \geq {1-e^{ - \beta }+ \epsilon } .$$
 Therefore, 
  \begin{align*}
 &P \left( g(Y^{*}) = g(b) \left| X^{*,\hat{\lambda}}_1 = \Phi_1^{\hat{\lambda}} (a), X^{*,\hat{\lambda}}_2= \Phi_2^{\hat{\lambda}}  \bigl( \hat{c}(\hat{\lambda}, a, b) \bigr)  \right. \right)\\
 & = \min_{c \in \X_2}P ( g(Y^{*}) =g( b)  | X^{*,\hat{\lambda}}_1 = \Phi_1^{\hat{\lambda}} (a), X^{*,\hat{\lambda}}_2= \Phi_2^{\hat{\lambda}}(c))\\
  & \leq P  \left( g(Y^{*}) = g(b) \left| X^{*,\hat{\lambda}}_1 = \Phi_1^{\hat{\lambda}} (a), X^{*,\hat{\lambda}}_2= \Phi_2^{\hat{\lambda}}  \bigl( c(\hat{\lambda}, a, b ) \bigr)  \right. \right) \\
  & = 1 - \sum_{\bar{z} \neq g(b) }P  \left(g(Y^{*}) = \bar{z} \left| X^{*,\hat{\lambda}}_1 = \Phi_1^{\hat{\lambda}} (a), X^{*,\hat{\lambda}}_2= \Phi_2^{\hat{\lambda}} \bigl( c(\hat{\lambda}, a, b )\bigr)  \right. \right) \\
  & \leq 1 - P  \left(g(Y^{*}) = g(b^*) \left| X^{*,\hat{\lambda}}_1 = \Phi_1^{\hat{\lambda}} (a), X^{*,\hat{\lambda}}_2= \Phi_2^{\hat{\lambda}} \bigl( c(\hat{\lambda}, a, b ) \bigr)  \right.  \right) \\
        & \leq 1 -  P  \left(Y^{*} = b^* \left| X^{*,\hat{\lambda}}_1 = \Phi_1^{\hat{\lambda}} (a), X^{*,\hat{\lambda}}_2= \Phi_2^{\hat{\lambda}} \bigl( c(\hat{\lambda}, a, b ) \bigr) \right. \right). \\
   & \leq 1- (1-e^{ - \beta }+ \epsilon) \\
   & \leq {e^{ - \beta }- \epsilon }. \\
    \end{align*}
     {\bf{Proof of Theorem \ref{theo:exits_env1}}} \\
     We may prove that $P_{\bar{X}, \bar{Y}} (A) = 1$ where  
     $$ \Biggl\{ (x, y) \in \X \times \Y  \Biggr| P \left( g(Y^{*}) = g(b) \left| X^{*,\hat{\lambda}} = \Phi^{\hat{\lambda}} (x)   \right) \leq {e^{ - \beta }- \epsilon } \right.  \Biggr\}.$$
     Then, 
     \begin{align*}
          P_{\bar{X}, \bar{Y}} (A) &=  \int 1_{A} d P_{\bar{X}, \bar{Y}}  =  \int 1_{A} d (\bar{P}_{X_2 | Y,X_1} \times P_{Y^I,X_1^I}) \\
          &= \int d P_{Y^I,X_1^I} \int 1_{A} d \bar{P}_{X_2 | Y,X_1} =  \int d P_{Y^I,X_1^I} (x_1, y)  \delta_{X_2 = \hat{c}(\hat{\lambda}, x_1,y) }(A_{(x_1, y)})
     \end{align*}
     holds where $A_{(x_1, y)} := \left\{ x_2 \in \X_2 | ((x_1, x_2), y) \in \X \times \Y \right\}$. By the Supplementary Inequality 1, $\hat{c}(\hat{\lambda}, x_1,y)  \in A_{(x_1, y)} $ holds and therefore, 
     $\delta_{X_2 = \hat{c}(\hat{\lambda}, x_1,y)} (A_{(x_1, y)}) =1$, which leads us to the equation  $$\int d P_{Y^I,X_1^I} (x_1, y)  \delta_{X_2 = \hat{c}(\hat{\lambda}, x_1,y) } (A_{(x_1, y)}) =1$$.
    
       \end{proof}

\begin{theo} \label{theo:exits_env2}
Notations are same as in Theorem \ref{Thm:justfyHITCV1:re} and \ref{Thm:justfyHITCV2:re}.
$(X^*, Y^*)$ satisfies the following condition:

\begin{itemize}
\item[(A)'] For a sufficiently small $\e \ll 1$, the following statement holds:\\
$\forall \lambda$ with $\mathrm{Im} \Phi^{\lambda}_2 \neq \emptyset$, $\forall \a \in \mathrm{Im} \Phi^{\lambda}_1$, $\forall b \in \Y$, $\exists c(\lambda, a,b)$ s.t. $P(Y^* = b  | X_1^{* , \lambda} = a, X_2^{*, \lambda} =c  ) \geq (1-e^{-\beta_{\lambda}}) + \e$. \\
\end{itemize}
Then, $\forall \lambda$ with $\mathrm{Im} \Phi^{\lambda}_2 \neq \emptyset$, there exists $(X^{e_\lambda}, Y^{e_\lambda}) \in T_{all}$ such that the inequality in (d)' holds.
\end{theo}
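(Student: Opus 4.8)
The plan is to repeat almost verbatim the argument used for Theorem \ref{theo:exits_env1}, with the single change that the threshold $\beta$ is replaced throughout by $\beta_{\lambda}$. Concretely, fix $\lambda$ with $\mathrm{Im}\,\Phi^{\lambda}_2 \neq \emptyset$. For each pair $(a,b) \in \mathrm{Im}\,\Phi^{\lambda}_1 \times \Y$ I would choose
$$\hat{c}(\lambda,a,b) \in \argmin_{c \in \X_2} P\bigl(g(Y^*) = g(b) \,\big|\, X_1^{*,\lambda} = \Phi_1^{\lambda}(a),\ X_2^{*,\lambda} = \Phi_2^{\lambda}(c)\bigr),$$
and define a random variable $(\bar{X},\bar{Y})$ on $\X \times \Y$ whose law is $\bar{P}_{X_2 \mid Y, X_1} \times P_{Y^I, X_1^I}$, where $\bar{P}_{X_2 \mid Y = b, X_1 = a} := \delta_{X_2 = \hat{c}(\lambda, a, b)}$. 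Since this construction leaves the $\X_1$-marginal and the conditional law of $Y$ given $X_1$ equal to those of $(X^I, Y^I)$, modifying only the $\X_2$-coordinate, we have $(\bar{X},\bar{Y}) \in T_{all}$.

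Next I would establish the analogue of the ``Supplementary Inequality 1'' from the proof of Theorem \ref{theo:exits_env1}: for every $a \in \mathrm{Im}\,\Phi^{\lambda}_1$ and $b \in \Y$,
$$P\bigl(g(Y^*) = g(b) \,\big|\, X_1^{*,\lambda} = \Phi_1^{\lambda}(a),\ X_2^{*,\lambda} = \Phi_2^{\lambda}(\hat{c}(\lambda,a,b))\bigr) \leq e^{-\beta_{\lambda}} - \e .$$
This is proved exactly as before: since $|\Z| \geq 2$ and $g$ is surjective there is $b^* \in \Y$ with $g(b^*) \neq g(b)$; applying condition (A)$'$ to $b^*$ yields some $c \in \X_2$ with $P\bigl(Y^* = b^* \mid X_1^{*,\lambda} = \Phi_1^{\lambda}(a),\ X_2^{*,\lambda} = \Phi_2^{\lambda}(c)\bigr) \geq (1 - e^{-\beta_{\lambda}}) + \e$; and then, writing $E_c$ for the conditioning event $\{X_1^{*,\lambda} = \Phi_1^{\lambda}(a),\ X_2^{*,\lambda} = \Phi_2^{\lambda}(c)\}$, the minimizing value gives
$$P\bigl(g(Y^*) = g(b) \mid E_{\hat{c}(\lambda,a,b)}\bigr) \leq P\bigl(g(Y^*) = g(b) \mid E_{c}\bigr) \leq 1 - P\bigl(g(Y^*) = g(b^*) \mid E_{c}\bigr) \leq 1 - P\bigl(Y^* = b^* \mid E_{c}\bigr) \leq e^{-\beta_{\lambda}} - \e .$$

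Finally I would conclude by showing $P_{\bar{X},\bar{Y}}(B) = 1$, where $B$ is the set of $(x,y)$ with $P\bigl(g(Y^*) = g(y) \mid X^{*,\lambda} = \Phi^{\lambda}(x)\bigr) \leq e^{-\beta_{\lambda}} - \e$ (this is the set appearing implicitly in condition (d)$'$, with $\beta$ replaced by $\beta_{\lambda}$). Writing $B_{(x_1,y)} := \{x_2 \in \X_2 : ((x_1,x_2),y) \in B\}$ and applying Fubini to the product law,
$$P_{\bar{X},\bar{Y}}(B) = \int \delta_{X_2 = \hat{c}(\lambda, x_1, y)}\bigl(B_{(x_1,y)}\bigr)\, dP_{Y^I, X_1^I}(x_1, y),$$
and the Supplementary Inequality says precisely that $\hat{c}(\lambda, x_1, y) \in B_{(x_1,y)}$, so every Dirac mass equals $1$ and the integral is $1$. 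Taking $(X^{e_{\lambda}}, Y^{e_{\lambda}}) := (\bar{X},\bar{Y}) \in T_{all}$ then yields the inequality in (d)$'$. The only point needing a moment's care — the nearest thing to an obstacle — is to confirm that $\beta_{\lambda}$ enters the argument purely as a fixed constant (it depends on $\lambda$ and on $p^{*,\lambda}$, but not on the environment being constructed), so that every estimate goes through unchanged; since condition (A)$'$ already bakes $\beta_{\lambda}$ into its hypothesis, no new quantitative estimate is required. In fact, because $\beta_{\lambda} \leq \beta$ one has $(1 - e^{-\beta_{\lambda}}) + \e \leq (1 - e^{-\beta}) + \e$, so (A)$'$ is a weaker hypothesis than (A), mirroring the relation between (d)$'$ and (d) noted after Theorem \ref{Thm:justfyHITCV2}.
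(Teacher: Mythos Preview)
Your proposal is correct and matches the paper's approach exactly: the paper itself states that the proof of Theorem \ref{theo:exits_env2} is essentially the same as that of Theorem \ref{theo:exits_env1} and omits it, and your argument is precisely that proof with $\beta$ replaced by $\beta_{\lambda}$ throughout. Your additional remark that $(A)'$ is weaker than $(A)$ because $\beta_{\lambda}\le\beta$ is also correct and consistent with the paper's discussion after Theorem \ref{Thm:justfyHITCV2}.
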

The proof of Theorem \ref{theo:exits_env2} is essentially same as the one of Theorem \ref{theo:exits_env1} and therefore, we omit.

\newpage
\section{Additional Experiment explanations}\label{app:add_exp_detal}

\subsection{Visualizations of Experiment  Results}\label{subsc:exp.rep.of.Syn}
Synthesized data of first experiment is visualized as in Figure \ref{fig:VisSyn1}. Synthesized data of second experiment is visualized as in Figure \ref{fig:VisSyn2}.
\subsection{Explicit representation of Second Synthetic data}
\begin{align*}
 N_1 & = \N (-180, 20^2)  \times \N (-5e, 30^2),  \\
 N_2 & = \N (-100, 20^2)\times \N (-3e, 30^2) , \\
 N_3 & = \N (-20, 20^2)\times \N (-1e, 30^2) ,\\
 N_4 & = \N (60, 20^2) \times \N (-2e, 30^2), \\
 N_5 & = \N (140, 20^2)\times \N (-4e, 30^2) ,  \\
 N_6 & = \N (-140, 20^2) \times \N (4e, 30^2), \\
 N_7 & = \N (-60, 20^2) \times \N (2e, 30^2),  \\
 N_8 & = \N (20, 20^2)\times \N (1e, 30^2) , \\
 N_9 & = \N (100, 20^2) \times \N (3e, 30^2),\\
 N_{10} & = \N (180, 20^2) \times \N (5e, 30^2).
\end{align*}

  \begin{figure}
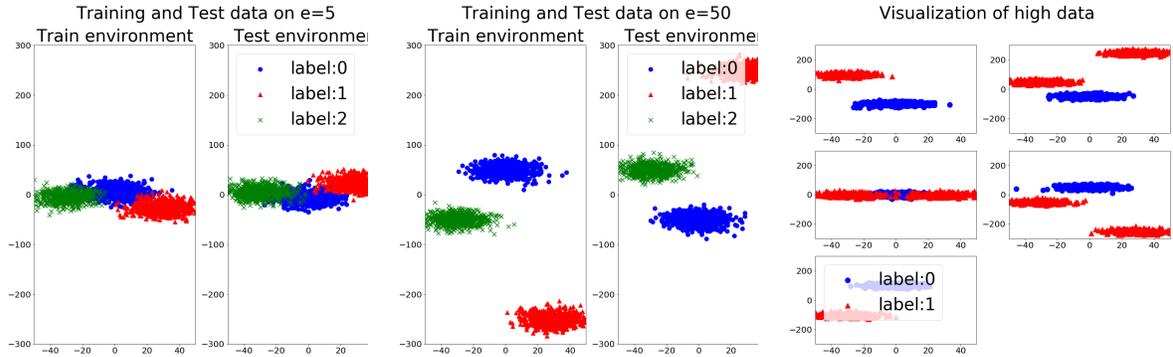

\centering
%\vspace{-4mm}
\caption{Visualization of Synthesized Data 1. Left and middle figures illustrate training and test data on $e^*=5$ and $50$, respectively. As $e^*$ increases, the test data and train data are more different, and therefore ERM yields lower performance. Right figure illustrates $\D_{ad}^e$.  }\label{fig:VisSyn1}
\vspace{3mm}
\resizebox{\textwidth}{!}{
    \begin{tabular}{l}
   \begin{minipage}[b]{0.4\linewidth}
    \centering
    \includegraphics[keepaspectratio, scale=0.2]{Syne=5.pdf}
  \end{minipage}
  \begin{minipage}[b]{0.4\linewidth}
    \centering
    \includegraphics[keepaspectratio, scale=0.2]{Syne=50.pdf}
        \end{minipage}
      \begin{minipage}[b]{0.4\linewidth}
    \centering
    \includegraphics[keepaspectratio, scale=0.2]{high_data.pdf}
     \end{minipage}
   \end{tabular}  }
%    \vspace{-8mm}
  \end{figure}
  
    \begin{figure}
\centering
%\vspace{-4mm}
\caption{Visualization of Synthesized Data 2. Left figure illustrates the training and test data of second experiment.  Right figure illustrates $\D_{ad}^{40}$ and $\D_{ad}^{e_{ad}}$ with $e_{ad} = -9$.  }\label{fig:VisSyn2}
\vspace{3mm}
\resizebox{\textwidth}{!}{
    \begin{tabular}{l}
   \begin{minipage}[b]{0.5\linewidth}
    \centering
    \includegraphics[keepaspectratio, scale=0.2]{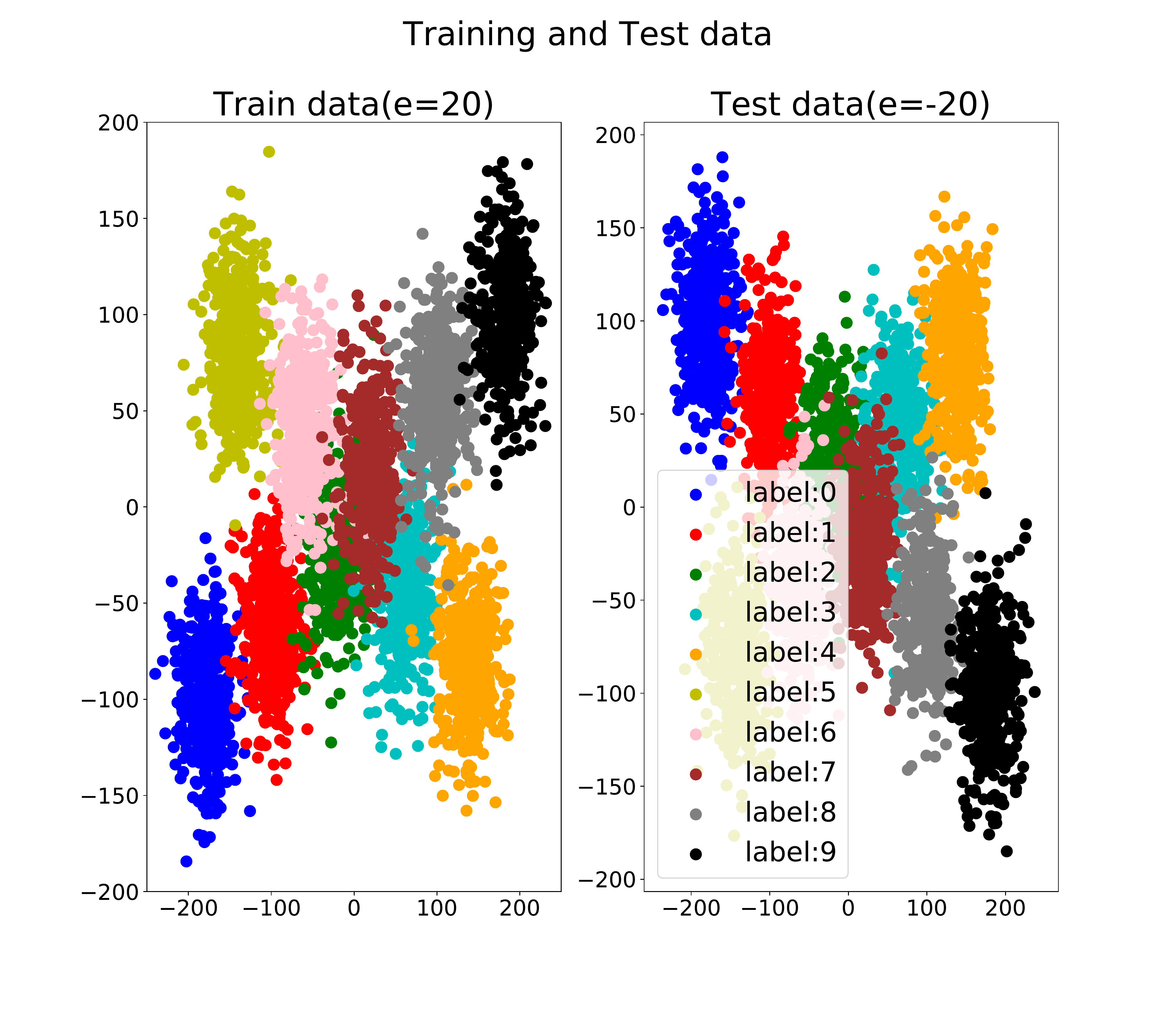}
  \end{minipage}
      \begin{minipage}[b]{0.5\linewidth}
    \centering
    \includegraphics[keepaspectratio, scale=0.2]{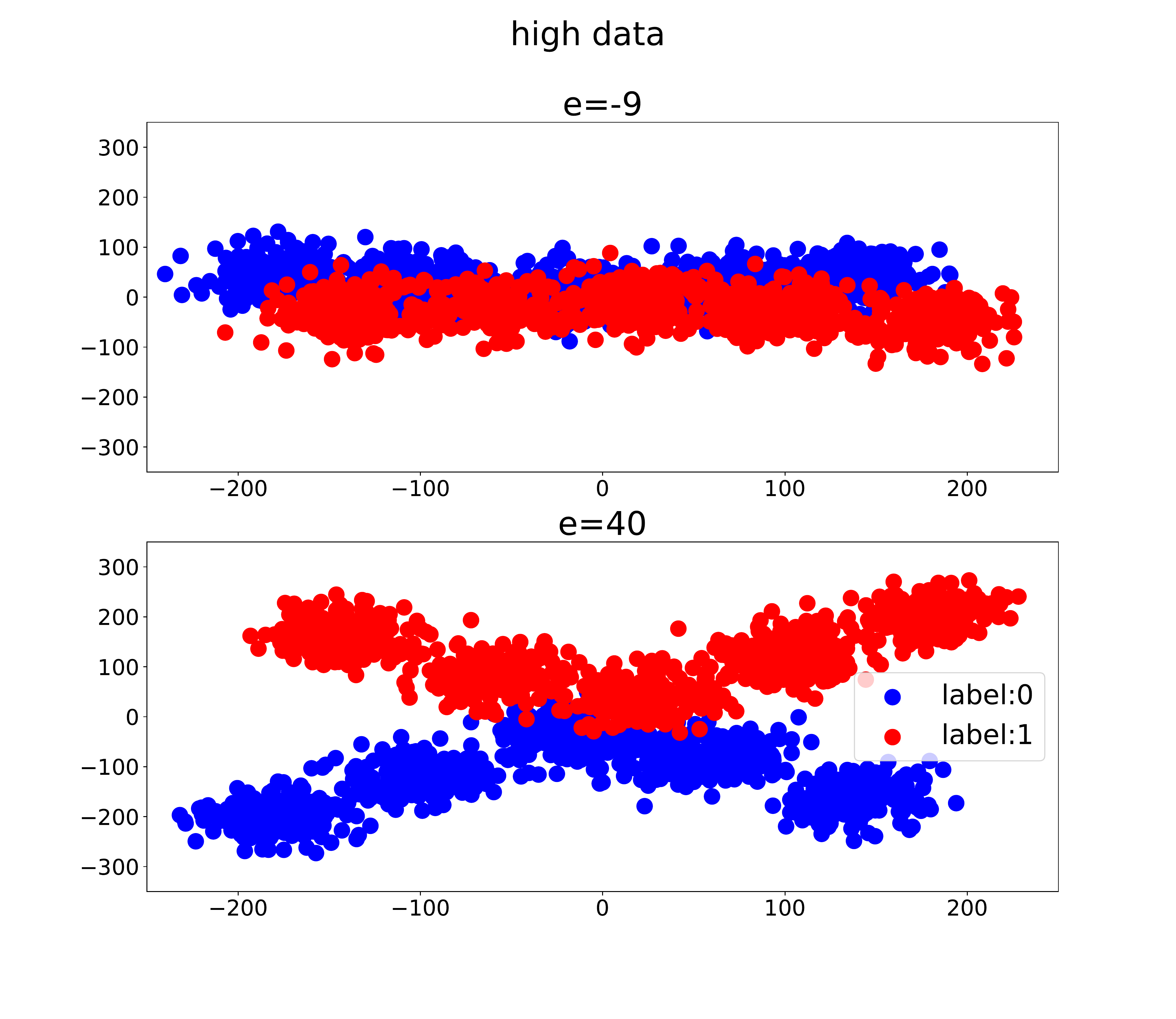}
     \end{minipage}
   \end{tabular}  }
%   \vspace{-8mm}
  \end{figure}

\begin{table*}[t]
\centering
%\vspace{-1.5mm}
\tiny
 \caption{Test Acc. of  Hierarchical Colored MNIST (5runs) }\label{Fig:acc Colored MNIST}
  \vspace{3mm}
\centering
%\tiny
\resizebox{\textwidth}{!}{
 \scalebox{0.2}{ \begin{tabular}{|c|c|c|c|c|c|c|c||c|c||c|} \hline
 % \rule[-4pt]{0pt}{10pt}
   flip rate& Test Acc. on&  Best passible  &  Oracle &  ERM &  FT &  FE &  DSAN & Ours +  CVI & Ours +  CVII & Ours+ TDV \\ \hline 
    %\rule[-4pt]{0pt}{10pt}
     \multirow{2.5}{*}{0.25}&$e=0.1$&\multirow{2.5}{*}{.750}&\multirow{2.5}{*}{.729 (.004)} &.\textbf{.771 (.001)}  & .\textbf{771 (.001)} &  .\textbf{.771 (.001)}& .767 (.004)&  .727 (.004)&  .714 (.013)&  .673 (.006) \\ 
      \rule[-4pt]{0pt}{10pt}
     & $e=0.9$ &&  &.125 (.003)   & .128 (.002)&  $.131 (.002)$ & .085 (.003)& .622 (.015)    &  \textbf{ .644 (.019)}     &  .690 (.009)\\  \hline 
      \rule[-4pt]{0pt}{10pt}
     \multirow{2.5}{*}{0.20}&$e=0.1$ &\multirow{2.5}{*}{.800}&\multirow{2.5}{*}{.780 (.002)}&.796 (.000)  & .\textbf{800 (.001)} &  .796 (.001)& .789 (.004)&  .773 (.003)& .745 (.008)& .738	(.018) \\
      \rule[-4pt]{0pt}{10pt}
     & $e=0.9$ &&  &.177 (.006)   & .201 (.004)&  $.200 (.007)$ &.091 (.005)&  .644 (.011)& \textbf{ .707 (.012)} &  .732 (.008)\\ \hline
      \rule[-4pt]{0pt}{10pt}
     \multirow{2.5}{*}{0.15}&$e=0.1$&\multirow{2.5}{*}{.850}&\multirow{2.5}{*}{.828 (.004)} &.822 (.000)  & .823 (.001) &  \textbf{.824 (.002)}&.815 (.002) &  .814	(.007) &.797 (.011) &  .822	(.001)\\
      \rule[-4pt]{0pt}{10pt}
     &$e=0.9$&&  &.277 (.007)   & .323 (.006)&  $.312 (.012)$ &.091 (.002)& .724 (.037) &\textbf{ .743 (.020)} & .782	(.012)\\ \hline
      \rule[-4pt]{0pt}{10pt}
     \multirow{2.5}{*}{0.10}& $e=0.1$ &\multirow{2.5}{*}{.900}&\multirow{2.5}{*}{.880 (.004)}&.852 (.002)  & .855(.001) & .\textbf{ .856 (.001)}&.833(.003) &  .848 (.005) & .848 (.005) &   .857 (.005)  \\
      \rule[-4pt]{0pt}{10pt}
     &$e=0.9$&&  &.468 (.002)  & .497 (.005)&  $.500 (.007)$ &.106 (.010)& \textbf{ .792 (.005)}& \textbf{ .792	(.005)}&  .829 (.005)\\ \hline
  \end{tabular}}}
% \vspace{-3mm}
\end{table*}

\subsection{Additional Experiment of Hierarchical Colored MNIST}\label{append:additionalMNIST}
Although Hierarchical Colored MNIST in Chapter 6 fix its flip rate $20 \%$, we additionally demonstrate by changing its flip rate among $\{0.1, 0.15, 0.2, 0.25 \}$. Table \ref{Fig:acc Colored MNIST} shows that our methods outperform other methods. Table \ref{Fig:acc Colored MNIST} and \ref{tb:wraptable} show that, among several CV methods, our method II outperform other methods. 
Table \ref{Fig:acc Colored MNIST CVdiff} the difference between accuracies by TDV and each CV for the same data set with $e=0.9$.  The result verify that CVII selects preferable hyperparameters with smaller errors.

 \begin{table}
  \centering
  \caption{Baselines of CV methods}
  \vspace{5mm}
 \begin{tabular}{|c|c|c|} \hline
     &  \footnotesize Tr-CV &\footnotesize  LOD-CV\\ \hline \hline
    \multirow{2.}{*}{ \footnotesize0.25}& \footnotesize.759 (.008)& \footnotesize .362	(.059)\\
    &.459 (.012) &\footnotesize  .372 (.037)\\ \hline
       \multirow{2.}{*}{ \small0.20}&.794 (.004) &  \footnotesize.338 (.048) \\
     &.541 (.007) &\footnotesize  .334 (.029) \\ \hline
   \multirow{2.}{*}{ \small0.15}& .834 (.002) & .348 (.031) \\
   &.634 (.008) &.358 (.024)\\ \hline
    \multirow{2.}{*}{ \small0.10} & .876 (.003)&.502 (.196)\\
    &.708 (.006) &.497 (.194)\\ \hline
  \end{tabular}
  \label{tb:wraptable}
\end{table}

   \begin{table}[t]
\centering
%\vspace{-1.5mm}
\tiny
 \caption{Means and SEs of $\{$({\text Accuracy of TDV on $e=0.9$}) -({\text  Accuracy of Each CV on $e=0.9$}) $\}$ (5runs). }\label{Fig:acc Colored MNIST CVdiff}
 \vspace{3mm}
\resizebox{0.6\textwidth}{!}{
 \scalebox{0.1}{ \begin{tabular}{|c|c|c|c|c|} \hline
     &   CV I & CV II &   Tr-CV &  LOD-CV\\ \hline \hline
    0.25& .068 (.007)&  .046	(.023)& .231 (.013)&  .319	(.033)\\ \hline
       0.20&.088 (.004) &  .025 (.006)&.191 (.014) &  .398 (.025)\\ \hline
  0.15& .059 (.038) & .039 (.022)&  .148 (.019) & .430 (.028)  \\ \hline
    0.10 & .037 (.010)&.037 (.010)  & .121 (.008)&.332 (.196) \\ \hline
  \end{tabular}}}\
 \end{table}

\subsection{Experiment Details}
Through the experiment in the present paper, all models of competitors are composed of neural networks where its loss function, activation function, and optimizer are cross entropy, Relu Networks and Adam \cite{Diederik}. In the following explanation, NN with its model architecture $a \rightarrow h_1 \rightarrow \cdots h_k \rightarrow h_n \rightarrow \P_{[m]}$ means that its input and hidden dimensions are $a$ and $(h_1,...,h_)$ respectively, and its output is probability density functions on $[m]$. NN with its model architecture $a \rightarrow h_1 \rightarrow \cdots h_k \rightarrow h_n \rightarrow b$ means that its input, hidden and output dimensions are $a$, $(h_1,...,h_n)$ and $b$ respectively.  All the experiment, we add $L^2$-reguralized term to our objective function.

%We add explanations of competitors. As fine-turned methods, we employ two types typically used in Transfer learning: $fine$ $tune$ (FT) and $frozen$ $feature$ (FF), whose nomenclatures are used in  \cite{Pan,Yang,Yosinski}. Both of them   find good initial values by  minimizing a loss evaluated $\D_{ad}$ in advance. FT is different from FE in that,  in the phase of  learning target tasks by $\D^*$,  the last layer are fixed while FE are not.  

We add explanations of previous CV methods. Tr-CV implements cross-validation with using only $\D^*$. In LOD-CV, a model is learnt with  excluding one of the $\D^e \in \D^{ad}$ from $\D^{ad}$, and evaluate its performance by $\D^e$. Changing the role of $e \in \E^{ad}$, and taking their mean, we evaluate final CV-value.

\subsubsection{Synthesized Data I}
We set model architecture of $\Phi$ used in our method $2 \rightarrow 20 \rightarrow 20 \rightarrow 1$. We set model architecture of CB-ERM and ERM  $2 \rightarrow 20 \rightarrow 20 \rightarrow \P_{[3]}$. When we use FT and FE, its model architecture on pre-train phase and retraining phase are $2 \rightarrow 20 \rightarrow 20 \rightarrow \P_{[2]}$ and $2 \rightarrow 20 \rightarrow 20 \rightarrow \P_{[3]}$ respectively. We set running rate and hyperparameters of $L^2$-regularized term $0.0115$ and $0.01$ respectively. When we use $DSAN$ \cite{Stojanov}, we inherit learning condition in the {\em Amazon Review dataset} experiment. When training,  we use batch learning. We set $K=10$ of each CV method.

\subsubsection{Synthesized Data II}
We set model architecture of $\Phi$ used in our method $2 \rightarrow 8 \rightarrow 8 \rightarrow 1$. We set running rate and hyperparameters of $L^2$-regularized term $0.05$ and $0.001$ respectively. When training, we use minibatch learning with dividing  $\D^*$, $\D^{e^{ad}}$ and $\D^{40}$ into 50 equal parts respectively. We set $K=10$ of each CV method.

\subsubsection{Hierarchical Colored MNIST}
We set model architecture of $\Phi$ used in our method $2 \rightarrow 440 \rightarrow 440 \rightarrow 440$. We set model architecture of CB-ERM and ERM  $2 \rightarrow 440 \rightarrow 440 \rightarrow \P_{[3]}$. When we use FT and FE, its model architecture on pre-train phase and retraining phase are $2 \rightarrow 440 \rightarrow 440 \rightarrow \P_{[2]}$ and $2 \rightarrow 440 \rightarrow 440 \rightarrow \P_{[3]}$ respectively. We set running rate and hyperparameter of $L^2$-regularized term $0.0004$ and $0.002$ respectively. When we use $DSAN$, we inherit learning condition in the {\em Amazon Review dataset} experiment. When training,  we use batch learning. We set $K=10$ of our CV method.

\subsubsection{Birds recognition}
We set model architecture of $\Phi$ used in our method ResNet50 \cite{K. He} with changing its output dimension $256$. We set model architecture of CB-ERM and ERM  ResNet50 \cite{K. He} with changing its output  $\P_{[3]}$. When we use FT and FE, its model architecture on pre-train phase and retraining phase are ResNet50 \cite{K. He} with changing its output dimension $2$ and $3$ respectively. We set running rate and hyperparameter of $L^2$-regularized term $0.00004$ and $0.001$ respectively. When training,  we use minibatch learning with its minibatch size $56$. We set $K=5$ of each CV method.

%\end{wraptable}

%\begin{thebibliography}{99}

%\bibitem[Rojas-Carulla, et.al.(2018)]{Mateo}
%M. Rojas-Carulla, B. Scholkopf, R. Turner, and J. Peters
%\newblock  Invariant models for causal transfer learning,
%\newblock \emph{JMLR}, 2018.

%\bibitem[Kingma, et.al.(2015)]{Diederik}
%D. P. Kingma, J. L. Ba,
%\newblock  Adam: A Method for  Stochastic Optimization,
%\newblock \emph{ICLR}, 2015.

%\end{thebibliography}
\end{document}